\pgfplotsset{compat=1.8}
\let\mc\mathcal
\let\mb\mathbf
\let\mt\mathtt
\let\mf\mathfrak
\let\bs\boldsymbol
\newcommand{\Prob}[0]{\mc{P}}
\newcommand{\G}[0]{\mc{G}}
\newcommand{\A}[0]{\mathsf{A}}
\newcommand{\I}[0]{\mt{I}}
\newcommand{\E}[0]{\mt{E}}  
\newcommand{\D}[0]{\mc{D}}
\newcommand{\Pa}[0]{\mt{Pa}}
\newcommand{\An}[0]{\mt{An}}
\newcommand{\Ch}[0]{\mt{Ch}}
\newcommand{\De}[0]{\mt{De}}
\newcommand{\Nd}[0]{\mt{Nd}}
\newcommand{\Pre}[0]{\mt{Pre}}
\newcommand{\MB}[0]{\mt{MB}}
\newcommand{\MEC}[0]{\mt{MEC}}
\newcommand{\CMC}[0]{\mt{CMC}}
\newcommand{\CFC}[0]{\mt{CFC}}
\newcommand{\SGS}[0]{\mt{SGS}}
\newcommand{\Pm}[0]{\mt{Pm}}
\newcommand{\uPm}[0]{\mt{uPm}}
\newcommand{\Fr}[0]{\mt{Fr}}
\newcommand{\uFr}[0]{\mt{uFr}}
\newcommand{\BIC}[0]{\mt{BIC}}
\newcommand{\DAG}[0]{\mt{DAG}}
\newcommand{\la}[0]{\langle}
\newcommand{\ra}[0]{\rangle}
\newcommand{\CI}[0]{\perp\!\!\!\perp}
\newcommand{\nCI}[0]{\perp\!\!\!\!/\!\!\!\!\!\perp}
\newcommand{\ot}[0]{\leftarrow}
\newcommand{\TP}[0]{\mathit{TP}}
\newcommand{\FP}[0]{\mathit{FP}}
\newcommand{\FN}[0]{\mathit{FN}}
\newtheorem{definition}{Definition}[section]
\newtheorem{corollary}[definition]{Corollary}
\newtheorem{lemma}[definition]{Lemma}
\newtheorem{theorem}[definition] {Theorem}
\newenvironment{proof}{\textit{Proof.\hspace{0.1cm}}}{\hfill$\square$}
\title{Greedy Relaxations of the Sparsest Permutation Algorithm}
\author[1]{\href{mailto:waiyinl@andrew.cmu.edu?Subject=}{Wai-Yin~Lam}}
\author[1]{\href{mailto:bjandrew@andrew.cmu.edu}{Bryan~Andrews}}
\author[1]{\href{mailto:jdramsey@andrew.cmu.edu}{Joseph~Ramsey}}
\affil[1]{
    Department of Philosophy\\
    Carnegie Mellon University\\
    Pittsburgh, Pennsylvania, USA
}
\begin{document}
\maketitle

\begin{abstract}
There has been an increasing interest in methods that exploit permutation reasoning to search for directed acyclic causal models, including the ``Ordering Search’' of Teyssier and Kohler and GSP of Solus, Wang and Uhler. We extend the methods of the latter by a permutation-based operation \textit{tuck}, and develop a class of algorithms, namely GRaSP, that are efficient and pointwise consistent under increasingly weaker assumptions than faithfulness. The most relaxed form of GRaSP outperforms many state-of-the-art causal search algorithms in simulation, allowing efficient and accurate search even for dense graphs and graphs with more than 100 variables.
\end{abstract}

\section{Introduction}
\label{sec:intro}
Searching for causal models by identifying patterns of conditional independence in observational data has become a well-established activity, though it is not without detractors. For one thing, it is commonly believed that the only correct method for establishing causal relationships is through experimental manipulation, as is done in a randomized controlled trial. Accordingly, causal inference from observational data alone can be seen as second-rate. This is not completely unreasonable; many causal search algorithms, even in seemingly ideal conditions, demonstrate poor performance, calling into question whether their inferences can be relied upon. Furthermore, the theoretical assumptions made by these algorithms are often criticized for being too strong. More specifically, these algorithms assume that the true model belongs to a model class with no latent variables or no cycles, and that the patterns of conditional independence in the data generating distribution can be represented by the assumed model class exactly. The latter assumption is called the causal faithfulness condition and can be violated (or almost violated) by unexpected patterns of conditional independence that arise from subtleties in the distribution, such as (near) determinism or (almost) path cancellation.

The most common model class assumed by causal search algorithms is characterized by directed acyclic graphs (DAGs). Many algorithms for causal inference search the space of causal DAGs, such as the PC (``Peter and Clark'', \citep{spirtes2000causation}) and GES (``Greedy equivalence Search'', \citep{chickering2002optimal}) algorithms, and provably return a set of DAGs that contains the true model under the causal faithfulness condition. However, the performance of these algorithms appear to fall short of their theoretical claims, especially when the true model is densely connected. One hypothesis for this phenomenon is that almost-violations of the causal faithfulness condition frequently occur and impede search procedures \citep{uhler2013geometry}. Accordingly, the performance of these algorithms might be improved by relaxing the causal faithfulness condition, as is done by the SP (``Sparsest Permutation'') algorithm of \cite{raskutti2018learning}. SP considers the space of variable orderings and builds a DAG using a procedure inspired by \cite{verma1988causal}, where the parents of each variable are selected from the preceding variables in the permutation. Ultimately, the permutations that induce DAGs with the minimal edge count are selected.

\cite{raskutti2018learning} proved that if the data generating distribution is a graphoid, then the set of DAGs returned by SP contains the true model under an assumption strictly weaker than the causal faithfulness condition. While SP recovers the set of all frugal models, it is super exponential in the number of variables, that is, if there are $n$ variables, then there are $n!$ permutations that must be visited. In practice, it is limited to a maximum of about nine variables due to its computational complexity. This naturally raises the question: is there an algorithm that is equally accurate in most cases for such data, but that can scale to larger problems?

\cite{teyssier2005ordering} give a clever search and score procedure, ``Ordering Search'', over variable permutations, pointing out that when two adjacent variables in a permutation are swapped, only local scores for the swapped variables need to be recalculated, the rest of the score calculation remains unchanged---this swapping operation is called an adjacency transposition (AT). The Ordering Search algorithm greedily traverses the space of permutations with adjacency transpositions using a hill-climbing approach, random restarts, and a tabu list. However, they do not give any consistency guarantees. 

The ESP (``Edge Sparsest Permutation'') algorithm of \cite{solus2021consistency} iterates upon the Ordering Search algorithm by greedily traversing the space of permutations by sequences of ATs where each AT leads to an equal or less edge count, found by depth first search (DFS), to achieve asymptotic correctness. In addition, their TSP (``Triangle Sparsest Permutation'') algorithm uses the theory of \cite{chickering2002optimal} to navigate the space of DAGs, more efficiently than ESP, under a stronger assumption. A simulation study using a Python implementation of TSP \citep{solus2021consistency} suggests that this procedure is fast, but has difficulty scaling accurately to moderate or large sized graphs \citep{lu2021improving}.

To address the scaling problem for both accuracy and timing, in this paper we explore different ways of traversing the space of permutations that get closer to the performance and assumption relaxation of Raskutti and Uhler while maintaining scalability. As part of this effort, we also use the ``Grow-Shrink'' algorithm from \cite{margaritis1999bayesian} to learn the DAG.

In what follows, we give an elaboration of the theoretical background of our set of permutation-based procedures, GRaSP (``Greedy Relaxations of Sparsest Permutation''). GRaSP has three tiers, GRaSP$_0$ (basically equivalent to TSP), GRaSP$_1$ (basically equivalent to ESP), and GRaSP$_2$ (a novel relaxation); we show how moving from a lower tier to a higher tier results in a gradual theoretical relaxation of the permutation search space and thus an improvement in accuracy. We then follow this with a study of oracle behavior for GRaSP$_0$, GRaSP$_1$, and GRaSP$_2$ on exhaustive lists of independence models with violations of faithfulness for all 4-variable regular Gaussian and positive discrete distributions and all 5-variable unfaithful DAGs with added marginal independencies between a pair of variables. We also give a detailed simulation study for the linear, Gaussian case for larger possibly dense models of up to 100 variables, with consistently accurate results using GRaSP$_2$. Further, we study an empirical example to test GRaSP$_2$. We then give a conclusion and discussion where we point out areas of immediate future work.


\section{Contributions}
\label{sec:contributions}
The most salient contribution is that GRaSP$_2$ can scale to at least 100 variables with average degree at least 10 on a laptop with high adjacency and arrowhead precision and recall for the linear, Gaussian case, addressing the longstanding practical problem of dense graph causal search in a meaningful way.

Second, theoretical development over assumptions on causal discovery from previous works has been simplified, in places corrected, and reworked as a structured study of casual razors. Accordingly, the proof that GRaSP$_0$, TSP, and by implication GSP, require faithfulness is a logical discovery. Also, the proof that faithfulness is equivalent to unique Pearl-minimality is a novel contribution.

Third, we extended the discussion of unit tests initiated in \citep{solus2021consistency} considerably, using the criterion that a wide variety of unit tests should systematically pass on all initial permutations using a d-separation oracle. More specifically, we run GRaSP on models detailed in \cite{vsimecek2006gaussian, vsimecek2006short} and those listed in Appendix \textcolor{blue}{\textbf{\ref{app:unit_tests}}}.

Finally, the tuck operation is a novel transformation that has not been considered in the literature before. We show that traversing any edge of the DAG-associahedron can be equivalently done via a tuck. Reframing TSP in terms of the tuck operation allows TSP and ESP to be neatly placed into a hierarchy. Moreover, it admits the natural generalization to GRaSP$_2$ (by not restricting which edges can be tucked).

\section{Background}
\label{sec:background}
Throughout this paper, italicized letters are used to denote variables (e.g., $X_1, Y$) and boldfaced letters for sets of variables (e.g., $\mb{X}$). Graphical definitions and notations related to directed acyclic graphs (DAGs) are provided in Appendix \textcolor{blue}{\textbf{\ref{app:graph_def}}}. A DAG $\G$ over a set of measured variables $\mb{V} = \{X_1,..., X_m\}$ consists of $m$ vertices $\mb{v} = \{1,...,m\}$ where each vertex $i$ associates to the variable $X_i$, and each directed edge between two distinct vertices $j \to k$ represents the direct causal influence from $X_j$ to $X_k$. We write $\mb{i} \perp_\G \mb{j}\,|\,\mb{k}$ to denote the \textit{d-separation} relation between $\mb{i}$ and $\mb{j}$ given $\mb{k}$ in $\G$ for any pairwise disjoint subsets of vertices $\mb{i}, \mb{j}, \mb{k} \subseteq \mb{v}$. Similarly, given a joint probability distribution $\Prob$ over $\mb{V}$, denote $\mb{X} \CI_\Prob \mb{Y}\,|\,\mb{Z}$ as the \textit{conditional independence} (CI) relation between $\mb{X}$ and $\mb{Y}$ given $\mb{Z}$ for any pairwise disjoint subsets of variables $\mb{X}, \mb{Y}, \mb{Z} \subseteq \mb{V}$. 

A \textit{model} is a pair $(\G, \Prob)$ where $\G$ is a DAG and $\Prob$ is a joint probability distribution over the same set of measured variables $\mb{V}$. We use $\G^*$ to refer to the \textit{true} data-generating DAG such that $(\G^*, \Prob)$ is the true model assumed to always exist. Certain standard properties of a model can be defined in terms of the d-separation relations in $\G$ and the CI relations in $\Prob$. Denote $\I(\G) = \{\la \mb{X}_\mb{j}, \mb{X}_\mb{k}\,|\,\mb{X}_\mb{l}\ra: \mb{j} \perp_\G \mb{k}\,|\,\mb{l}\}$ where $\mb{X}_\mb{i} = \{X_j \in \mb{V}: j \in \mb{i}\}$ for every $\mb{i} \subseteq \mb{v}$, and $\I(\Prob) = \{\la \mb{X}, \mb{Y}\,|\,\mb{Z}\ra: \mb{X} \CI_\Prob \mb{Y}\,|\,\mb{Z}\}$. Let $\DAG(\mb{V})$ be the set of all possible DAGs over $\mb{V}$.

\begin{definition}
\label{Markov}
(Markov) For any joint probability distribution $\Prob$ over $\mb{V}$, define $\CMC(\Prob) = \{\G \in \DAG(\mb{V}): \I(\G) \subseteq \I(\Prob)\}$ as the set of Markovian DAGs. $(\G^*, \Prob)$ satisfies the Markov assumption if $\G^* \in \CMC(\Prob)$.
\end{definition}

\begin{definition}
\label{faithful}
(Faithfulness) For any joint probability distribution $\Prob$, define $\CFC(\Prob) = \{\G \in \CMC(\Prob): \I(\Prob) \subseteq \I(\G)\}$ as the set of faithful DAGs. $(\G^*, \Prob)$ satisfies the faithfulness assumption if $\G^* \in \CFC(\Prob)$.
\end{definition}

A causal search algorithm is a procedure of recovering the causal information of the true DAG from its underlying joint probability distribution. Let $\MEC(\G)$ be the \textit{Markov equivalence class} (MEC) of $\G$ such that $\I(\G) = \I(\G')$ for each $\G' \in \MEC(\G)$. One crucial goal of causal search is the identification of $\MEC(\G^*)$ from $\Prob$. With regard to this goal, a causal search algorithm is \textit{correct} if its output DAG (or the DAG induced by its output) is in $\MEC(\G^*)$. All known causal search algorithms assume the Markov assumption, and some well-known algorithms in the relevant literature (e.g., GES) assume faithfulness as well. Nevertheless, as pointed out by \cite{uhler2013geometry}, learning CI relations from data by hypothesis testing is error-prone, and almost-violations of faithfulness are common. This motivates the exploration of causal search algorithms which rely on assumptions strictly weaker than faithfulness. These assumptions, faithfulness included, are what we refer to as \textit{causal razors}.

One recent approach proposed by \cite{raskutti2018learning} is the \textit{SP} algorithm, which identifies the set of \textit{sparsest permutations} defined over $\mb{v}$ under the following causal razor. Let $\E(\G)$ be the set of directed edges in a DAG $\G$. 

\begin{definition}
\label{frugal}
(U-frugality) For any joint probability distribution $\Prob$, define $\Fr(\Prob) = \{\G \in \CMC(\Prob): \neg \exists \G' \in \CMC(\Prob)$ s.t. $|\E(\G')| < |\E(\G)|\}$ and $\uFr(\Prob) = \{\G \in \Fr(\Prob): \neg \exists \G' \in \Fr(\Prob)$ s.t. $\G' \notin \MEC(\G)\}$ as the sets of frugal DAGs and uniquely frugal, or u-frugal, DAGs respectively. $(\G^*, \Prob)$ satisfies the u-frugality assumption if $\G^* \in \uFr(\Prob)$.\footnote{This assumption is named as \textit{sparsest Markov representation} (SMR) in \citep{raskutti2018learning}.}
\end{definition}

In words, u-frugality requires that $\G^*$ is not only the sparsest Markovian DAG, but also that all sparsest Markovian DAGs belong to the same MEC as $\G^*$. \cite{raskutti2018learning} showed that SP is correct under u-frugality which is strictly weaker than faithfulness. Below we introduce some necessary notations of permutation-based algorithms. To begin with, we refer the readers to Appendix \textcolor{blue}{\textbf{\ref{app:graphoid}}} for the \textit{graphoid axioms}. Generally speaking, every joint probability distribution is a \textit{semigraphoid}, strictly positive distributions are \textit{graphoids}, and regular Gaussian distributions are \textit{compositional graphoids}.  

Given $\mb{V} = \{X_1,..., X_m\}$, let $\Pi(\mb{v})$ be the set of all \textit{permutations} over $\mb{v} = \{1,...,m\}$. For each $\pi \in \Pi(\mb{v})$, let $\pi_i$ be the $i$-th vertex in $\pi$, $\pi[j]$ be the index of vertex $j$ in $\pi$ (s.t. $\pi_{\pi[j]} = j$), and $\Pre(j, \pi) = \{\pi_i: 1 \leq i < \pi[j]\}$ be the set of vertices that precede $j$'s index in $\pi$. We say that $\pi \in \Pi(\mb{v})$ is a \textit{causal order} of $\G \in \mt{DAG}(\mb{V})$ if $i \in \Pre(j, \pi)$ for each $j \in \mb{v}$ and each $i \in \An(j, \G)$ (i.e., the set of $j$'s \textit{ancestors} in $\G$). Given a graphoid $\Prob$ over $\mb{V}$, each $\pi \in \Pi(\mb{v})$ induces a DAG $\G_{\pi}$ satisfying the following condition:
\begin{align}
    & j \in \Pre(k, \pi) \text{ and } X_j \nCI_\Prob X_k\,|\,\mb{X}_{\Pre(k, \pi)\setminus \{j\}}\nonumber\\
    &\Leftrightarrow (j \to k) \in \E(\G_\pi). \tag{RU}
\end{align}

(RU) is the method of constructing a unique DAG from $\pi$ and $\Prob$ discussed in \citep{raskutti2018learning}. It is derived from a more general method in \citep{verma1988causal}. The two methods will be compared in Appendix \textcolor{blue}{\textbf{\ref{app:DAG_induce}}}. But we refer to $\G_\pi$ as the DAG induced from $\pi$ and the graphoid $\Prob$ using (RU) unless specified otherwise. Obviously, $\pi$ is a causal order of $\G_\pi$. Below is an important feature of $\G_\pi$.

\begin{definition}
\label{SGS-minimal} 
(SGS-minimality) For any joint probability distribution $\Prob$, define $\SGS(\Prob) = \{\G \in \CMC(\Prob): \neg \exists \G' \in \CMC(\Prob)$ s.t. $\E(\G') \subset \E(\G)\}$ as the set of SGS-minimal DAGs.\footnote{SGS-minimality is also known as \textit{minimal I-map} in the literature. We follow \cite{zhang2013comparison} to refer to this minimality condition as the one discussed in \citep{spirtes2000causation}.} 
\end{definition}

\begin{theorem}
\label{RU-theorem}
\citep{verma1988causal, raskutti2018learning} Given a graphoid $\Prob$ over $\mb{V}$, $\G_\pi$ induced by $\pi$ using (RU) is Markovian and SGS-minimal for every $\pi \in \Pi(\mb{v})$. 
\end{theorem}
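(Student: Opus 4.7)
For the Markov claim, my plan is to establish the ordered local Markov property for $\G_\pi$ with respect to $\pi$, namely that for every $k \in \mb{v}$,
\[
X_k \CI_\Prob X_{\Pre(k,\pi) \setminus \Pa(k, \G_\pi)} \mid X_{\Pa(k, \G_\pi)},
\]
and then invoke the standard (semigraphoid-level) result that ordered local Markov implies the global Markov property $\G_\pi \in \CMC(\Prob)$, e.g.\ by substituting the ordered independences into the chain-rule factorization $p(x_\mb{V}) = \prod_k p(x_k \mid x_{\Pre(k,\pi)})$ to obtain recursive factorization along $\G_\pi$. Rule (RU) supplies only pairwise ingredients: writing $\mathbf{W} := \Pre(k,\pi) \setminus \Pa(k, \G_\pi)$, for each $w \in \mathbf{W}$ we have $X_k \CI_\Prob X_w \mid X_{\mathbf{W}\setminus\{w\}} \cup X_{\Pa(k,\G_\pi)}$. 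The main obstacle is pooling these singleton independences into the single joint statement above; I would do this by enumerating $\mathbf{W} = \{w_1,\dots,w_r\}$ in $\pi$-order and inducting on $i$ to prove
\[
X_k \CI_\Prob X_{\{w_1,\dots,w_i\}} \mid X_{\{w_{i+1},\dots,w_r\}} \cup X_{\Pa(k,\G_\pi)},
\]
the inductive step merging the hypothesis for $i$ with the given independence for $w_{i+1}$ via the Intersection axiom. This pooling is precisely where the hypothesis that $\Prob$ is a \emph{graphoid} rather than merely a semigraphoid is used.

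For SGS-minimality I would argue by contradiction: suppose some $\G' \in \CMC(\Prob)$ satisfies $\E(\G') \subsetneq \E(\G_\pi)$, and pick any edge $(j \to k) \in \E(\G_\pi) \setminus \E(\G')$. Deleting edges preserves acyclicity, so $\pi$ is still a causal order of $\G'$, which forces $\Pre(k,\pi) \subseteq \Nd(k, \G')$. The Markov condition on $\G'$ then yields, via d-separation of $k$ from a subset of its non-descendants given its parents, the independence
\[
X_k \CI_\Prob X_{\Pre(k,\pi) \setminus \Pa(k, \G')} \mid X_{\Pa(k, \G')}.
\]
Since $j \in \Pre(k,\pi) \setminus \Pa(k, \G')$ and $\Pa(k, \G') \subseteq \Pre(k,\pi) \setminus \{j\}$, a single application of Weak Union (valid already for semigraphoids) extracts $X_k \CI_\Prob X_j \mid X_{\Pre(k,\pi)\setminus\{j\}}$, directly contradicting the rule (RU) that placed $j \to k$ into $\G_\pi$. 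Hence no such $\G'$ exists and $\G_\pi \in \SGS(\Prob)$.
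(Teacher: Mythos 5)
Your proposal is correct, but it proves the theorem by a genuinely different route than the paper. The paper's own proof is a two-line reduction: \textbf{Lemma \ref{VP=RU}} shows that for a graphoid the DAG induced by (RU) coincides with the boundary DAG induced by (VP), and \textbf{Theorem \ref{VP-theorem}} (cited to Pearl) already asserts that the (VP)-induced DAG is Markovian and SGS-minimal for any semigraphoid; the graphoid hypothesis is consumed entirely inside \textbf{Lemma \ref{VP=RU}}, via the uniqueness of Markov boundaries (\textbf{Lemma \ref{MB_unique}}/\textbf{Lemma \ref{MB_subset}}). You instead work directly with (RU): you pool the pairwise exclusions $X_k \CI_\Prob X_w\,|\,\mb{X}_{\Pre(k,\pi)\setminus\{w\}}$ into the ordered Markov statement $X_k \CI_\Prob \mb{X}_{\Pre(k,\pi)\setminus\Pa(k,\G_\pi)}\,|\,\mb{X}_{\Pa(k,\G_\pi)}$ by an induction whose merging step is a single application of \textit{intersection} (the sets involved are pairwise disjoint, so the axiom applies as stated), and you obtain SGS-minimality by a \textit{decomposition}/\textit{weak-union} contradiction against the $\nCI$ clause of (RU). Both are sound. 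What the paper's route buys is economy: \textbf{Lemma \ref{VP=RU}} is needed elsewhere anyway (for the Grow--Shrink/BIC implementation), and the global-Markov soundness is outsourced to the cited result. What your route buys is a self-contained argument at the level of (RU) that localizes exactly where graphoid closure is indispensable (the intersection-based pooling) versus where semigraphoid axioms suffice (the minimality argument); note that both routes still lean on the standard ``ordered local Markov implies global Markov'' soundness result, which you invoke via the chain-rule factorization and the paper absorbs into \textbf{Theorem \ref{VP-theorem}}.
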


The theorem above states that, for every permutation $\pi$, the induced DAG $\G_\pi$ is Markovian and no subgraph of $\G_\pi$ is Markovian. By identifying the sparsest permutation $\hat{\pi} = \text{argmin}_{\pi \in \Pi(\mb{v})}$ $|\E(\G_\pi)|$, $\G_{\hat{\pi}}$ returned by SP is guaranteed to be in $\MEC(\G^*)$ when u-frugality is satisfied. Nevertheless, SP needs to examine all $|\mb{v}|!$ permutations in $\Pi(\mb{v})$ to identify the sparsest one and hence lacks scalability. \cite{solus2021consistency} introduce a greedy version of SP, namely
\textit{Triangle SP} (TSP), which is proven to be correct under faithfulness.\footnote{In \citep{solus2021consistency}, \textit{Greedy SP} (GSP) is an operational version of TSP which imposes a depth bound on the DFS procedure and a parameter specifying the number of runs on selecting an arbitrary initial permutation. They claimed that TSP can be correct even when faithfulness fails. We examine their claim more carefully in Section \textcolor{blue}{\textbf{\ref{sec:method}}} and Appendix \textcolor{blue}{\textbf{\ref{app:ESP_GRaSP1}}}.} Below, we provide a quick and simple sketch of this result.

TSP borrows the \textit{Chickering algorithm} in \citep{chickering2002optimal} to perform their \textit{depth-first search} (DFS) procedure. For each vertex $i \in \mb{v}$, let $\Pa(i, \G)$ be the set of \textit{parents} in $\G$. A directed edge $j \to k$ is \textit{covered} in $\G$ if $\Pa(j, \G) = \Pa(k, \G) \setminus \{j\}$.

\begin{theorem}
\label{Chickering_seq} 
(Chickering sequences) \citep{chickering2002optimal} Given a set of variables $\mb{V}$, for every pair of DAGs $\G, \mc{H} \in \mt{DAG}(\mb{V})$, if $\I(\mc{H}) \subseteq \I(\G)$, there exists a sequence of DAGs, call it a \textit{Chickering sequence} $\la \mc{H} = \G^1, \G^2, ..., \G^k = \G\ra$ (from $\mc{H}$ to $\G$) s.t. $\I(\G^{i}) \subseteq \I(\G^{i+1})$ and $\G^{i+1}$ is obtained from $\G^i$ by either reversing a covered edge or deleting a directed edge for each $1 \leq i < k$.\footnote{The original theorem in \citep{chickering2002optimal} is expressed in terms of addition of directed edges. This modification helps by indicating that every Chickering sequence is a weakly decreasing sequence. In addition, one can easily observe that there does not exist any Chickering sequence from $\mc{H}$ to $\G$ if $\I(\mc{H}) \nsubseteq \I(\G)$.}
\end{theorem}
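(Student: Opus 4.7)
The plan is to mirror Chickering's original strategy, which proceeds in two phases: first use covered-edge reversals to align $\mc{H}$ with a topological order of $\G$, then use edge deletions to remove the remaining extra edges. So first I would fix a topological order $\pi$ of $\G$, i.e., a linear extension compatible with the ancestor relation in $\G$. The first phase transforms $\mc{H}$ into a DAG $\mc{H}'$ that also has $\pi$ as a topological order, using only covered-edge reversals. Since reversing a covered edge yields a Markov-equivalent DAG, we have $\I(\G^i) = \I(\G^{i+1})$ throughout this phase, so the weak-decrease condition is trivially maintained.

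The key substep in the first phase is showing that, whenever the current DAG does not yet respect $\pi$, there exists a covered edge whose reversal decreases a suitable potential function (for example, the number of inversions with respect to $\pi$). The standard argument is to pick a vertex $v$ that is placed latest in $\pi$ among those with at least one child $w$ earlier in $\pi$, then among its out-of-order children pick one that is earliest in $\pi$; one checks that the edge $v \to w$ must then be covered (any parent of $w$ other than $v$ would, by the extremal choices, have to also be a parent of $v$, and vice versa, else the reversal would create a cycle or contradict the choice). Reversing $v \to w$ strictly reduces the inversion count, so iterating terminates at a DAG $\mc{H}'$ consistent with $\pi$. This combinatorial step is the main obstacle and is where the classical proof spends most of its effort.

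The second phase shows that $\E(\G) \subseteq \E(\mc{H}')$. For every edge $i \to j$ in $\G$, the vertices $i,j$ are d-connected in $\G$ given $\Pa(j,\G) \setminus \{i\}$, so by $\I(\mc{H}') = \I(\mc{H}) \subseteq \I(\G)$ they are also d-connected in $\mc{H}'$ given the same set, which (since $\pi$ is also a topological order of $\mc{H}'$ and the conditioning set lies entirely among predecessors of $j$) forces $i \to j \in \E(\mc{H}')$. Hence $\mc{H}'$ contains $\G$ as a subgraph, and I can delete the extra edges one by one in any order consistent with shrinking toward $\G$; each deletion strictly enlarges the independence model of the current DAG while keeping it a superset of $\E(\G)$, so the invariant $\I(\G^i) \subseteq \I(\G^{i+1}) \subseteq \I(\G)$ is preserved. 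Concatenating the two phases yields the desired Chickering sequence from $\mc{H}$ to $\G$, and the footnote's observation that the sequence is weakly decreasing in edge count follows because covered-edge reversals preserve $|\E|$ and deletions reduce it.
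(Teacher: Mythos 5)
You should first note that the paper itself does not prove this theorem---it is imported wholesale from \citep{chickering2002optimal}---so your attempt has to be measured against Chickering's argument. Your two-phase plan (all covered-edge reversals first, then all deletions) does not work, and this is a genuine gap rather than a presentational one. Phase 1 asks for a DAG $\mc{H}'$ that is Markov equivalent to $\mc{H}$ and shares a topological order with $\G$; no such $\mc{H}'$ need exist. Take $\mb{V}=\{X_1,X_2,X_3\}$, let $\mc{H}$ be the collider $1\to 3\leftarrow 2$, and let $\G$ consist of the single edge $3\to 1$ with vertex $2$ isolated. Then $\I(\mc{H})=\{\la X_1,X_2\,|\,\varnothing\ra\}\subseteq\I(\G)$, so the theorem applies. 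But $\MEC(\mc{H})=\{\mc{H}\}$ (neither edge of $\mc{H}$ is covered), every topological order of $\mc{H}$ places $3$ last, and every topological order of $\G$ places $3$ before $1$; so Phase 1 cannot even begin. A valid Chickering sequence nonetheless exists: delete $2\to 3$, then reverse the now-covered edge $1\to 3$. The moral is that deletions may be needed to destroy v-structures before the graph can be reoriented toward $\G$. Chickering's actual proof (his \textsc{Apply-Edge-Operation} procedure, read backwards to obtain the deletion formulation used here) therefore \emph{interleaves} the two operations, choosing at each step either an edge modification or a covered reversal guided by a sink node of the target graph, and the bulk of the work lies in showing that such a choice always exists and makes progress; your inversion-counting argument does not substitute for this.

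A secondary issue: in Phase 2 you argue that d-connection of $i$ and $j$ in $\mc{H}'$ given $\Pa(j,\G)\setminus\{i\}$ forces $i\to j\in\E(\mc{H}')$; it does not, since the connection could run through a directed path $i\to k\to j$ with $k$ outside the conditioning set. The correct route to $\E(\G)\subseteq\E(\mc{H}')$ when $\pi$ is a common topological order is to condition on $\Pa(j,\mc{H}')$ instead: if $i$ and $j$ were non-adjacent in $\mc{H}'$, then $\la X_i,X_j\,|\,\mb{X}_{\Pa(j,\mc{H}')}\ra\in\I(\mc{H}')\subseteq\I(\G)$, contradicting the adjacency $i\to j$ in $\G$; this is essentially the argument of \textbf{Lemma \ref{SGS_same_DAG}}. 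But even with that repair, the impossibility of Phase 1 leaves the proof incomplete.
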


A sequence of DAGs $\la \G^1, ..., \G^k\ra$ is said to be \textit{weakly decreasing} if $|\E(\G^{i})| \geq |\E(\G^{i+1})|$ for each $1 \leq i < k$. Obviously, every Chickering sequence is weakly decreasing. Given an arbitrary initial permutation $\pi \in \Pi(\mb{v})$, TSP uses DFS to search for a Chickering sequence from $\G_{\pi}$ to some SGS-minimal DAG $\G_{\tau}$ where $|\E(\G_{\pi})| > |\E(\G_\tau)|$, and update $\G_\pi$ as $\G_\tau$ until no such $\G_\tau$ is found. Now we demonstrate TSP's correctness under faithfulness.

\begin{definition}
\label{P-minimal}
(U-P-minimality) For any joint probability distribution $\Prob$, define $\Pm(\Prob) = \{\G \in \CMC(\Prob): \neg \exists \G' \in \CMC(\Prob)$ s.t. $\I(\G) \subset \I(\G')\}$ and $\uPm(\Prob) = \{\G \in \Pm(\Prob): \neg \exists \G' \in \Pm(\Prob)$ s.t. $\G' \notin \MEC(\G)\}$ as the sets of P-minimal DAGs and uniquely P-minimal DAGs respectively. $(\G^*, \Prob)$ satisfies the u-P-minimality assumption if $\G^* \in \uPm(\Prob)$.\footnote{P-minimality refers to the minimality condition discussed in \citep{Pearl2009Causality}.}
\end{definition}

\begin{theorem}
\label{razors}
\citep{zhang2013comparison} For any joint probability distribution $\Prob$, $\CFC(\Prob) = \Pm(\Prob) = \MEC(\G^*)$ if faithfulness holds.
\end{theorem}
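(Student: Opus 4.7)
The plan is to prove the two set equalities $\CFC(\Prob) = \MEC(\G^*)$ and $\CFC(\Prob) = \Pm(\Prob)$ separately, both leveraging the key consequence of faithfulness: $\G^* \in \CFC(\Prob)$, i.e., $\I(\G^*) = \I(\Prob)$.

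For $\CFC(\Prob) = \MEC(\G^*)$, I would unwind the definitions. By Definition \ref{faithful}, $\G \in \CFC(\Prob)$ iff $\I(\G) = \I(\Prob)$. Under faithfulness $\I(\Prob) = \I(\G^*)$, so $\G \in \CFC(\Prob)$ iff $\I(\G) = \I(\G^*)$, which is precisely the definition of $\G \in \MEC(\G^*)$. Both inclusions fall out immediately.

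For $\CFC(\Prob) \subseteq \Pm(\Prob)$, I would argue by contradiction: suppose $\G \in \CFC(\Prob)$, so $\I(\G) = \I(\Prob)$, but there exists $\G' \in \CMC(\Prob)$ with $\I(\G) \subsetneq \I(\G')$. Then $\I(\G') \subseteq \I(\Prob) = \I(\G)$ by the Markov condition on $\G'$, contradicting strict inclusion. Conversely, for $\Pm(\Prob) \subseteq \CFC(\Prob)$, let $\G \in \Pm(\Prob)$. Since $\G \in \CMC(\Prob)$, we have $\I(\G) \subseteq \I(\Prob) = \I(\G^*)$. Because faithfulness gives $\G^* \in \CFC(\Prob) \subseteq \CMC(\Prob)$, a strict inclusion $\I(\G) \subsetneq \I(\G^*)$ would be witnessed by $\G^*$ itself, contradicting the P-minimality of $\G$. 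Hence $\I(\G) = \I(\Prob)$, so $\G \in \CFC(\Prob)$.

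There is essentially no hard step here; the argument is a bookkeeping exercise in set inclusions once one recognizes that faithfulness pins down $\I(\G^*)$ to exactly $\I(\Prob)$. The only mild subtlety is making sure, in the $\Pm(\Prob) \subseteq \CFC(\Prob)$ direction, to invoke $\G^*$ itself as the witness that prevents a P-minimal DAG from being a strict proper I-submap of the distribution — this is the only place where the existence of a faithful DAG (as opposed to merely the abstract condition) is used.
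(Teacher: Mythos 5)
Your proof is correct: the equivalence $\G \in \CFC(\Prob) \Leftrightarrow \I(\G) = \I(\Prob) = \I(\G^*)$ immediately gives $\CFC(\Prob) = \MEC(\G^*)$, and your two inclusions for $\Pm(\Prob)$ — using the Markov condition on any would-be dominating $\G'$ in one direction, and $\G^*$ itself as the dominating witness in the other — are exactly the standard argument. The paper does not supply its own proof of this statement (it is cited from Zhang and Spirtes, 2013), so there is nothing to diverge from; your observation that the only substantive input is the existence of a faithful DAG, used precisely in the $\Pm(\Prob) \subseteq \CFC(\Prob)$ direction, is accurate.
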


A DAG being P-minimal, as in \textbf{Definition \ref{P-minimal}}, states that there exists no Markovian DAG which can entail a proper superset of CI relations, and its unique variant further requires that all P-minimal DAGs belong to the same MEC as $\G^*$. We elaborate the importance of u-P-minimality in the next section. By \textbf{Theorem \ref{Chickering_seq}}, TSP guarantees that its output $\hat{\G}_\pi$ is P-minimal. When faithfulness holds, \textbf{Theorem \ref{razors}} ensures that $\hat{\G}_\pi \in \MEC(\G^*)$, and hence TSP is correct.

Notice that the identification of a Chickering sequence from $\G_\pi$ to a P-minimal $\G_\tau$ is essentially a DAG-based operation. In the next section, we introduce our permutation-based operation to converge to a P-minimal DAG, and propose a class of greedy permutation-based algorithms which employs weaker causal razors than TSP does.

In addition to TSP, \cite{solus2021consistency} introduced another greedy algorithm, namely \textit{Edge SP} (ESP), which is defined by weakly decreasing traversals over the \textit{DAG associahedron} (i.e., the \textit{permutohedron} contracted by $\I(\Prob)$). These technical terms are defined in the Appendix \textcolor{blue}{\textbf{\ref{app:ESP_GRaSP1}}}. ESP is shown to be assuming a weaker causal razor than TSP. In the next section, we will draw a logical discovery on how ESP is connected to our novel permutation-based operation.

\section{Methods}
\label{sec:method}
In this section, we introduce a class of permutation-based algorithms with a generic name \textit{Greedy Relaxations of Sparsest Permutation} (GRaSP). Three tiers of relaxation will be studied: GRaSP$_0$ is our basic algorithm, GRaSP$_1$ relaxes the search criterion of GRaSP$_0$ while GRaSP$_2$ further relaxes that of GRaSP$_1$. This hierarchy allows the identification of $\mt{MEC}(\G^*)$ under progressively weaker causal razors. In addition, we show that GRaSP$_0$ is logically equivalent to TSP, and GRaSP$_1$ to ESP. All proofs are left in Appendix \textcolor{blue}{\textbf{\ref{app:correct}}}-\textcolor{blue}{\textbf{\ref{app:tiers}}}. First, we introduce our characteristic permutation-based operation \textit{tuck} and how it operates under different types of directed edges.

\begin{definition}
\label{tuck}
(Tuck) Consider any graphoid $\Prob$ over $\mb{V}$, any $\pi \in \Pi(\mb{v})$, and any $j, k \in \mb{v}$ where $\pi[j] < \pi[k]$.
Rewrite $\pi$ as $\la \bs{\delta}_1, j, \bs{\delta}_2, k, \bs{\delta}_3 \ra$ where each $\bs{\delta}_i$ is a (possibly empty) sub-sequence of $\pi$.\footnote{To be precise, $\bs{\delta}_1 = \la \pi_i: 1 \leq i < \pi[j] \ra$, $\bs{\delta}_2 = \la \pi_i: \pi[j] < i < \pi[k]\ra$, and $\bs{\delta}_3 = \la \pi_i: \pi[k] < i \leq |\pi|\ra$.} Let $\bs{\gamma}$ and $\bs{\gamma}^c$ be the sub-sequences $\la i \in \bs{\delta}_2: i \in \mt{An}(k, \G_\pi)\ra$ and $\la i \in \bs{\delta}_2: i \notin \mt{An}(k, \G_\pi)\ra$ respectively. Define
\begin{align*}
    tuck(\pi, j, k) = 
    \begin{cases}
    \la \bs{\delta}_1, \bs{\gamma}, k, j, \bs{\gamma}^c, \bs{\delta}_3\ra & \text{ if } (j \to k) \in \E(\G_\pi)\\
    \pi & \text{ otherwise.}
    \end{cases}
\end{align*}
\end{definition}

\begin{definition}
\label{Et_edges}
Given a DAG $\G$, a directed edge $(j \to k) \in \E(\G)$ is said to be singular if there exists no unidirectional directed path from $j$ to $k$ in $\G$ except $j \to k$. Define
\begin{align*}
    \E^t(\G) =
    \begin{cases}
    \text{covered edges in } \E(\G) & \text{ if } t = 0\\
    \text{singular edges in } \E(\G) & \text{ if } t = 1\\
    \E(\G) & \text{ if } t = 2.
    \end{cases}
\end{align*}
\end{definition}

Readers can verify that $\E^0(\G) \subseteq \E^1(\G) \subseteq \E^2(\G)$ holds for any DAG $\G$. The introduction of singular edges is crucial to our logical discovery that every move ESP takes in the DAG associahedron (as defined in Appendix \textcolor{blue}{\textbf{\ref{app:ESP_GRaSP1}}}) corresponds to tucking a unique singular edge. Figure \ref{fig:tuck_illustration} provides an example on how \textit{tuck} works for each defined type of edges. As seen in the example, \textit{tuck} is an operation that aims to change a permutation \textit{minimally} to obtain a differently induced DAG, while a broader class of directed edges generally leads to more possible re-orderings of the vertices.

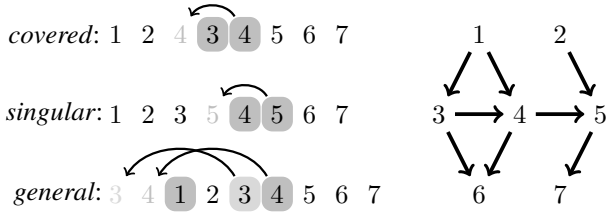
\begin{figure}[ht!]
    \centering
    \begin{tikzpicture}[scale=0.85, roundnode/.style={circle, draw=black!60, very thick, minimum size=5mm}]
    \node (X1) at (5.625, 2.5) {$1$};
    \node (X2) at (6.875, 2.5) {$2$};
    \node (X3) at (5, 1.25) {$3$};
    \node (X4) at (6.25, 1.25) {$4$};
    \node (X5) at (7.5, 1.25) {$5$};
    \node (X6) at (5.625, 0) {$6$};
    \node (X7) at (6.875, 0) {$7$};
    \path [->,line width=0.5mm] (X1) edge (X3);
    \path [->,line width=0.5mm] (X1) edge (X4);
    \path [->,line width=0.5mm] (X2) edge (X5);
    \path [->,line width=0.5mm] (X3) edge (X4);
    \path [->,line width=0.5mm] (X4) edge (X5);
    \path [->,line width=0.5mm] (X3) edge (X6);
    \path [->,line width=0.5mm] (X4) edge (X6);
    \path [->,line width=0.5mm] (X5) edge (X7);
    \node (c) at (-0.95, 2.5) {\textit{covered}:};
    \node (s) at (-0.95, 1.25) {\textit{singular}:};
    \node (g) at (-0.9, 0) {\textit{general}:};
    \node (c1) at (0.0, 2.5) {$1$};
    \node (c2) at (0.5, 2.5) {$2$};
    \node [text=lightgray](c3) at (1.0, 2.5) {$4$};
    \node [fill=lightgray, rounded corners](c4) at (1.5, 2.5) {$3$};
    \node [fill=lightgray, rounded corners](c5) at (2.0, 2.5) {$4$};
    \node (c6) at (2.5, 2.5) {$5$};
    \node (c7) at (3.0, 2.5) {$6$};
    \node (c8) at (3.5, 2.5) {$7$};
    \node (s1) at (0.0, 1.25) {$1$};
    \node (s2) at (0.5, 1.25) {$2$};
    \node (s3) at (1.0, 1.25) {$3$};
    \node [text=lightgray](s4) at (1.5, 1.25) {$5$};
    \node [fill=lightgray, rounded corners](s5) at (2.0, 1.25) {$4$};
    \node [fill=lightgray, rounded corners](s6) at (2.5, 1.25) {$5$};
    \node (s7) at (3.0, 1.25) {$6$};
    \node (s8) at (3.5, 1.25) {$7$};
    \node [text=gray!30](ns1) at (0.0, 0) {$3$};
    \node [text=lightgray](ns2) at (0.5, 0) {$4$};
    \node [fill=lightgray, rounded corners](ns3) at (1.0, 0) {$1$};
    \node (ns4) at (1.5, 0) {$2$};
    \node [fill=gray!30, rounded corners](ns5) at (2.0, 0) {$3$};
    \node [fill=lightgray, rounded corners](ns6) at (2.5, 0) {$4$};
    \node (ns7) at (3.0, 0) {$5$};
    \node (ns8) at (3.5, 0) {$6$};
    \node (ns9) at (4.0, 0) {$7$};
    \path [->,line width=0.3mm,bend right=60] (c5) edge (c3);
    \path [->,line width=0.3mm,bend right=60] (s6) edge (s4);
    \path [->,line width=0.3mm,bend right=60] (ns5) edge (ns1);
    \path [->,line width=0.3mm,bend right=60] (ns6) edge (ns2);
    \end{tikzpicture}
    \caption{Consider $\pi = \la 1, 2, 3, 4, 5, 6, 7\ra$ and its induced $\G_\pi$ shown on the right. Each of the three orderings on the left illustrates how a directed edge between two darkly shaded vertices is tucked to obtain a new permutation. For example, consider $1 \to 4$ which is \textit{not} singular due to the unidirectional path $1 \to 3 \to 4$. Performing $\textit{tuck}(\pi, 1, 4)$ requires the identification of the intermediate vertices between $1$ and $4$ in $\pi$ which are ancestors of 4 in $\G_\pi$ (i.e., the lightly shaded $3$). Then, while the positions of other vertices remain intact, $3$ and $4$ are moved to the front of $1$.}
    \label{fig:tuck_illustration}
\end{figure}

After clarifying how \textit{tuck} works, we can define a sequence of \textit{tuck} operations, particularly when applied to covered edges, and how 
a sequence of \textit{covered tucks} (\textit{ct}) is connected to a Chickering sequence.

\begin{definition}
\label{ct-seq}
(ct-sequence) Given a graphoid $\Prob$ over $\mb{V}$, for any $\pi, \tau \in \Pi(\mb{v})$, $\tau$ is said to be a ct-mutation of $\pi$ if there exist $j, k \in \mb{v}$ s.t. $(j \to k) \in \E(\G_\pi)$ is covered and $\tau = \textit{tuck}(\pi, j, k)$. Also, $\la \pi^1,...,\pi^m\ra$ is said to be a ct-sequence if $\pi^{i+1}$ is a ct-mutation of $\pi^i$ for each $1 \leq i < m$, and $(\G_{\pi^i}, \G_{\pi^l})$ are pairwise distinct for any $1 \leq i < l \leq m$. 
\end{definition}

\begin{lemma}
\label{ct-better}
[Appendix \textcolor{blue}{\textbf{\ref{app:correct}}}] Given a graphoid $\Prob$, for any $\pi \in \Pi(\mb{v})$ and any Chickering sequence from $\G_\pi$ to some $\mc{H} \in \SGS(\Prob)$ considered by TSP, there exists a ct-sequence $\la \pi,...,\tau\ra$ s.t. $\G_\tau = \mc{H}$.
\end{lemma}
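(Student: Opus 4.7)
The plan is to induct on the length $n$ of the Chickering sequence $\la \G_\pi = \G^1, \G^2, \ldots, \G^n = \mc{H}\ra$. The base case $n = 1$ is immediate since $\la \pi \ra$ is itself a ct-sequence whose induced DAG is $\mc{H}$. For the inductive step, I would construct the first ct-mutation $\pi \to \pi^2$ such that $\G_{\pi^2}$ admits a Chickering sequence to $\mc{H}$ of length strictly less than $n$, and then invoke the inductive hypothesis.

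To construct $\pi^2$, I would split on the nature of the first transition $\G^1 \to \G^2$. If it is a covered-edge reversal of $j \to k$, I would set $\pi^2 = \textit{tuck}(\pi, j, k)$ and argue, via (RU) together with the graphoid axioms, that $\G_{\pi^2}$ is obtained from $\G_\pi$ by reversing $j \to k$, possibly accompanied by some additional edges being absent by (RU). Since reversing a covered edge keeps the DAG Markov equivalent and any further ``free'' deletions can only shorten the remaining Chickering path, Theorem \ref{Chickering_seq} still provides a Chickering sequence from $\G_{\pi^2}$ to $\mc{H}$ of length less than $n$. If the first transition is instead an edge deletion, then $\G^1$ is not SGS-minimal, and I would locate a covered edge in $\G^1$ whose tuck produces a $\pi^2$ whose induced DAG lies strictly closer to $\mc{H}$ in the Chickering sense; such a covered edge must exist because the Chickering sequence must pass through some Markov-equivalent representative of $\G^1$ before any deletion can eliminate $j \to k$ from the relevant MEC, and any such reversal--deletion block can be realized by a single covered-edge tuck whose (RU)-induced DAG absorbs the deletion automatically.

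The main obstacle will be precisely characterizing $\G_{\pi^2}$ after a tuck on a covered edge. When the intermediate block $\bs{\gamma}$ of ancestors of $k$ lying between $j$ and $k$ in $\pi$ is non-empty, the parent sets of $j$, of $k$, and of each vertex in $\bs{\gamma}$ under (RU) applied to $\pi^2$ must be computed carefully using decomposition, weak union, contraction, and intersection to conclude that the only structural change, beyond possibly graphoid-forced deletions, is the reversal of $j \to k$. A secondary concern is verifying the pairwise-distinctness requirement on the induced DAGs along the ct-sequence: this should follow because every covered-edge tuck either moves to a different Markov equivalence class or strictly reduces the edge count, so any two induced DAGs obtained along the way must differ.
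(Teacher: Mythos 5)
Your high-level idea --- simulate each covered-edge reversal by a tuck and let (RU) absorb the deletions --- is the same as the paper's, but your induction has a genuine gap at the point where the tuck's induced DAG diverges from the DAG in the Chickering sequence. Suppose the first move reverses the covered edge $j \to k$, producing $\G^2$, and set $\pi^2 = \textit{tuck}(\pi, j, k)$. One gets $\E(\G_{\pi^2}) \subseteq \E(\G^2)$ (this is \textbf{Lemma \ref{tuck_lemma}}(b)), possibly strictly. In the strict case you assert that \textbf{Theorem \ref{Chickering_seq}} ``still provides a Chickering sequence from $\G_{\pi^2}$ to $\mc{H}$ of length less than $n$,'' but that theorem requires $\I(\G_{\pi^2}) \subseteq \I(\mc{H})$, and all you actually know is $\I(\G^2) \subseteq \I(\G_{\pi^2})$ and $\I(\G^2) \subseteq \I(\mc{H})$; these do not imply the needed inclusion, so the remaining sequence may simply fail to exist --- the ``free'' deletions can carry you to a sparser SGS-minimal DAG from which $\mc{H}$ is no longer reachable. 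Even when a new sequence exists, \textbf{Theorem \ref{Chickering_seq}} gives no length bound, so your induction measure is not shown to decrease. The paper avoids both problems by first formalizing which Chickering sequences TSP actually considers (\textbf{Definition \ref{TSP_Chickering}}): all covered-edge reversals occur first, among SGS-minimal DAGs, followed by a single terminal block of deletions. During the reversal phase, SGS-minimality of $\G^{i+1}$ together with \textbf{Lemma \ref{SGS_same_DAG}} forces $\G_{\pi^{i+1}} = \G^{i+1}$ exactly (no spurious deletions can occur), and at the turning index the entire deletion tail is absorbed in one step: $\pi^l$ is a causal order of $\G^l$ and hence of the final SGS-minimal $\mc{H} \subseteq \G^l$, so $\G_{\pi^l} = \mc{H}$ by \textbf{Lemma \ref{SGS_same_DAG}} again. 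This is \textbf{Lemma \ref{Chickering_2_ct}}, from which the statement is immediate.

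Two smaller points. Your deletion-first case is vacuous: every DAG in a Chickering sequence ending at a Markovian DAG is itself Markovian (the independence models are nested along the sequence), and $\G_\pi$ is SGS-minimal by \textbf{Theorem \ref{RU-theorem}}, so the first move cannot be a deletion; your proposed argument for that case (locating a covered edge whose tuck lands ``strictly closer to $\mc{H}$'') is in any event too vague to verify. And the ``main obstacle'' you flag --- computing the parent sets of $\G_{\pi^2}$ directly from (RU) via the graphoid axioms --- is not where the work lies: the clean route is \textbf{Lemma \ref{tuck_lemma}}(a) (the tucked permutation is a causal order of the reversed DAG) combined with \textbf{Lemma \ref{SGS_same_DAG}}, which your sketch never invokes.
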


Similar to the DAG-based DFS over Chickering sequences employed by TSP, the lemma above motivates our permutation-based DFS over ct-sequences as shown in \textbf{Algorithm \ref{alg:dfs}}. 

\begin{algorithm}[!ht]
\DontPrintSemicolon
\caption{\textsc{DFS: }$\textit{dfs}(\Prob, \pi, d, d_\textit{cur}, t)$}
\label{alg:dfs}
\KwIn{(a) $\Prob$: a graphoid over $\mb{V}$; (b) $\pi \in \Pi(\mb{v}$); (c) $d$: depth bound; (d) $d_\textit{cur}$: recorder of the recursive call; (e) $t$: type of directed edges}
\KwOut{$\tau \in \Pi(\mb{v}$) where $\textit{score}(\tau) \geq \textit{score}(\pi)$}
\ForEach{$(j \to k) \in \E^t(\G_\pi)$}{
    $\tau \ot \textit{tuck}(\pi, j, k)$ \;
    \If{$\textit{score}(\tau) = \textit{score}(\pi)$ and $d_\textit{cur} < d$}{
        $\tau \ot \textit{dfs}(\Prob, \tau, d, d_\textit{cur} + 1, t)$ \;
    }
    \If{$\textit{score}(\tau) > \textit{score}(\pi)$}{
        return $\tau$
    }
}
return $\pi$    
\end{algorithm}

\begin{algorithm}[!ht]
\DontPrintSemicolon
\caption{\textsc{GR}a\textsc{SP}$_t$: $\textit{grasp}(\Prob, \pi, d, t)$}
\label{alg:grasp}
\KwIn{(a) $\Prob$: a graphoid over $\mb{V}$; (b) $\pi \in \Pi(\mb{v})$; (c) $d$: depth bound; (d) $t$: tier of GRaSP}
\KwOut{$\tau \in \Pi(\mb{v}$) where $\textit{score}(\tau) \geq \textit{score}(\pi)$}
\If{$t \neq 0$}{
$\pi = \textit{grasp}(\Prob, \pi, d, t-1)$}
$\tau \ot \pi$\;
\Do{$\textit{score}(\tau) > \textit{score}(\pi)$}{
    $\pi \ot \tau$\;
    $\tau \ot \textit{dfs}(\Prob, \pi, d, 1, t)$\;
}
return $\tau$
\end{algorithm}

    First, we use \textit{negative edge count} as the scoring function in our oracle version of the algorithm such that $\textit{score}(\pi) = - |\E(\G_\pi)|$ where $\G_\pi$ is induced from $\pi$ and $\Prob$. $d$ bounds the search depth of DFS. We assume that $d = |\mb{v}|!$ for now and call the corresponding algorithm \textit{unbounded}. We will examine some small number $d$ in light of finite samples in Section \textcolor{blue}{\textbf{\ref{Linear_Gauss_Sim}}}. Also, we assume that no induced DAG can be revisited in the DFS procedure in order to avoid any infinite loop between DAGs. 

Next, $\E^t(\G_\pi)$, as defined in \textbf{Definition \ref{Et_edges}}, is the crucial function distinguishing our three tiers of GRaSP in \textbf{Algorithm \ref{alg:grasp}}. Consider $t = 0$ in particular. Given an arbitrary initial permutation $\pi$, \textbf{Algorithm \ref{alg:dfs}} performs a greedy procedure to identify a ct-sequence from $\pi$. Figure \ref{fig:ct_seq} shows a simple example. Then \textbf{Algorithm \ref{alg:grasp}} iterates the DFS in \textbf{Algorithm \ref{alg:dfs}} until no sparser permutation can be found. The theorem below ensures that $\hat{\tau}$ returned by \textbf{Algorithm \ref{alg:grasp}} induces $\G_{\hat{\tau}} \in \Pm(\Prob)$.

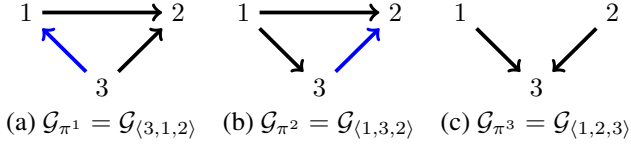
\begin{figure}
\begin{center}
\subfloat{
\begin{tikzpicture}[roundnode/.style={circle, draw=black!60, very thick, minimum size=5mm}]
\node(X1) at (0,0) {$1$};
\node(X2) at (2,0) {$2$};
\node(X3) at (1,-1) {$3$};
\node(G) at (1, -1.5) {(a) $\mc{G}_{\pi^1} = \mc{G}_{\la 3, 1, 2\ra}$};
\path [->,line width=0.5mm] (X1) edge (X2);
\path [->,line width=0.5mm, blue] (X3) edge (X1);
\path [->,line width=0.5mm] (X3) edge (X2);
\end{tikzpicture}}
\subfloat{
\begin{tikzpicture}[roundnode/.style={circle, draw=black!60, very thick, minimum size=5mm}]
\node(X1) at (0,0) {$1$};
\node(X2) at (2,0) {$2$};
\node(X3) at (1,-1) {$3$};
\node(G) at (1, -1.5) {(b) $\mc{G}_{\pi^2} = \mc{G}_{\la 1, 3, 2\ra}$};
\path [->,line width=0.5mm] (X1) edge (X2);
\path [->,line width=0.5mm] (X1) edge (X3);
\path [->,line width=0.5mm, blue] (X3) edge (X2);
\end{tikzpicture}}
\subfloat{
\begin{tikzpicture}[roundnode/.style={circle, draw=black!60, very thick, minimum size=5mm}]
\node(X1) at (0,0) {$1$};
\node(X2) at (2,0) {$2$};
\node(X3) at (1,-1) {$3$};
\node(G) at (1, -1.5) {(c) $\mc{G}_{\pi^3} = \mc{G}_{\la 1, 2, 3\ra}$};
\path [->,line width=0.5mm] (X1) edge (X3);
\path [->,line width=0.5mm] (X2) edge (X3);
\end{tikzpicture}}
\end{center}
\caption{Example of a ct-sequence $\la \pi^1, \pi^2, \pi^3\ra$ where $\I(\Prob) = \{\la X_1, X_2\,|\,\varnothing\ra\}$. The blue (covered) edges indicate how a subsequent permutation is obtained by \textit{tuck}. For example, $3 \to 1$ in (a) specifies that $\pi^2$ is obtained from $\textit{tuck}(\pi^1, 3, 1)$. Also, \textbf{Algorithm \ref{alg:dfs}} returns $\pi^3 = \la 1,2,3\ra$ since the DAG in (c) is sparser than those in (a) and (b).}
\label{fig:ct_seq}
\end{figure}

\begin{theorem}
\label{ct-theorem}
[Appendix \textcolor{blue}{\textbf{\ref{app:correct}}}] Given a graphoid $\Prob$ over $\mb{V}$ and any $\pi \in \Pi(\mb{v})$, if $\G_\pi \notin \Pm(\Prob)$, then there exists a ct-sequence $\mf{T} = \la \pi,..., \tau\ra$ s.t. $\G_\tau \in \Pm(\Prob)$.
\end{theorem}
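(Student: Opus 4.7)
The plan is to construct $\mf{T}$ by converting a Chickering sequence into a ct-sequence via Lemma \ref{ct-better}, choosing as target a P-minimal DAG that is $\I$-above $\G_\pi$. The crux will be ensuring that such a target also lies in $\SGS(\Prob)$, which is what Lemma \ref{ct-better} requires.

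First, I would locate an ascent target inside $\Pm(\Prob)$. Since $\G_\pi \notin \Pm(\Prob)$, there exists $\mc{H}_0 \in \CMC(\Prob)$ with $\I(\G_\pi) \subsetneq \I(\mc{H}_0)$. Because $\DAG(\mb{V})$ is finite, iteratively replacing $\mc{H}_0$ by any strictly $\I$-larger Markovian DAG must terminate at some $\mc{H} \in \Pm(\Prob)$ still satisfying $\I(\G_\pi) \subsetneq \I(\mc{H})$.

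Second, I would show that every $\mc{H} \in \Pm(\Prob)$ automatically belongs to $\SGS(\Prob)$, so that Lemma \ref{ct-better} is applicable. Suppose for contradiction that some $\G' \in \CMC(\Prob)$ has $\E(\G') \subsetneq \E(\mc{H})$. A standard path-analysis argument gives $\I(\mc{H}) \subseteq \I(\G')$: every path in $\G'$ is also a path in $\mc{H}$, its colliders and non-colliders coincide in the two graphs, and the descendant set of any vertex in $\G'$ is contained in the descendant set in $\mc{H}$, so any conditioning set d-separating in $\mc{H}$ still d-separates in $\G'$. Strictness then follows because any edge $j \to k$ in $\mc{H}$ but not in $\G'$ admits a conditioning set separating $j$ and $k$ in $\G'$, while $j$ and $k$ are adjacent in $\mc{H}$. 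Hence $\I(\mc{H}) \subsetneq \I(\G')$, contradicting P-minimality of $\mc{H}$.

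Third, with $\I(\G_\pi) \subseteq \I(\mc{H})$ and $\mc{H} \in \SGS(\Prob)$ established, Theorem \ref{Chickering_seq} produces a Chickering sequence from $\G_\pi$ to $\mc{H}$, and Lemma \ref{ct-better} converts this into a ct-sequence $\mf{T} = \la \pi, \ldots, \tau\ra$ with $\G_\tau = \mc{H} \in \Pm(\Prob)$. I expect the main obstacle to be the second step: formalizing that P-minimality implies SGS-minimality requires a careful comparison of d-separation between a DAG and a Markovian subgraph, whereas the first and third steps are essentially mechanical assembly of results already in hand.
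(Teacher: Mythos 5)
Your step 1 is fine, and your step 2 is correct but already available: $\Pm(\Prob) \subseteq \SGS(\Prob)$ is recorded in \textbf{Theorem \ref{razors_hierarchy}}(a), and the strictness argument you sketch is essentially \textbf{Theorem \ref{Jiji_thm}} plus the fact that non-adjacent vertices are d-separable. The genuine gap is in step 3, which you flagged as ``mechanical.'' \textbf{Lemma \ref{ct-better}} does not apply to an arbitrary Chickering sequence produced by \textbf{Theorem \ref{Chickering_seq}}: it applies only to Chickering sequences ``considered by TSP,'' i.e.\ TSP-Chickering sequences in the sense of \textbf{Definition \ref{TSP_Chickering}}, which consist of a single block of covered-edge reversals among SGS-minimal DAGs followed by a single block of deletions, with SGS-minimal endpoints. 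A general Chickering sequence from $\G_\pi$ to a P-minimal $\mc{H}$ that is several edges sparser will typically interleave reversals and deletions many times, and the conversion machinery (\textbf{Lemma \ref{Chickering_2_ct}}) breaks down after the first deletion: the running permutation remains a causal order of the post-deletion DAG, but the DAG it induces need not equal that DAG unless the latter is SGS-minimal, so subsequent covered edges of the Chickering sequence need not be covered edges of the induced DAG. Your one-shot conversion to a prescribed P-minimal target is therefore not licensed by the cited lemmas.

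The paper avoids this by proving only one strict improvement per ct-sequence (\textbf{Lemma \ref{ct_lemma}}): it truncates the Chickering sequence immediately after the \emph{first} edge deletion, extends the deletion phase until an SGS-minimal DAG is reached, and thereby obtains a genuine TSP-Chickering sequence to which \textbf{Lemma \ref{Chickering_2_ct}}(b) applies, yielding a ct-sequence $\la \pi, \ldots, \tau \ra$ with $\I(\G_\pi) \subset \I(\G_\tau)$. The theorem then follows by iterating this strict $\I$-increase over the finite set of DAGs. To repair your proof you would either need to iterate in the same way (abandoning the fixed target $\mc{H}$), or prove a strengthened version of \textbf{Lemma \ref{ct-better}} for arbitrary Chickering sequences with SGS-minimal endpoints, which is a nontrivial new claim rather than an assembly of results already in hand.
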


By \textbf{Theorem \ref{ct-theorem}}, the correctness of unbounded GRaSP$_0$ under faithfulness follows immediately from \textbf{Theorem \ref{razors}}. As shown by \cite{forster2020frugal}, $\CFC(\Prob) = \Fr(\Prob)$ holds under faithfulness. Since \textbf{Algorithm \ref{alg:grasp}} requires that the permutation returned by a higher tier of GRaSP cannot be denser than that returned by a lower tier, the correctness of unbounded GRaSP$_1$ and unbounded GRaSP$_2$ under faithfulness immediately follows. The sample version of GRaSP can be obtained by substituting the graphoid $\Prob$ with an \textit{i.i.d.} observational dataset $\mc{D}$, and $\textit{score}(\pi)$ with the BIC score of $\G_\pi$ from $\mc{D}$ (defined in Appendix \textcolor{blue}{\textbf{\ref{app:gs}}}). Pointwise consistency under faithfulness directly follows from the \textit{local consistency} of BIC.\footnote{See \citep{10.1214/aos/1176350709} and \citep{chickering2002optimal} for the (local) consistency of BIC.}

\begin{corollary}
\label{correct_consistent}
Unbounded GRaSP$_0$, GRaSP$_1$, and GRaSP$_2$ are correct and pointwise consistent under faithfulness.
\end{corollary}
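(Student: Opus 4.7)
The plan is to first establish oracle correctness of each tier (the returned permutation $\hat{\tau}$ satisfies $\G_{\hat{\tau}} \in \MEC(\G^*)$ when the algorithm is fed the population graphoid $\Prob$), and then upgrade to pointwise consistency by invoking the local consistency of BIC so that the sample run matches the oracle run asymptotically.

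For unbounded GRaSP$_0$, the key observation is that Algorithm \ref{alg:grasp} terminates only when a full DFS call from the current $\pi$ produces no strictly better score, so it suffices to show that whenever $\G_\pi \notin \Pm(\Prob)$ the DFS with $d = |\mb{v}|!$ must uncover a strictly sparser induced DAG. Theorem \ref{ct-theorem} supplies a ct-sequence $\la \pi, \ldots, \tau \ra$ with $\G_\tau \in \Pm(\Prob)$, and by combining Lemma \ref{ct-better} with the remark after Theorem \ref{Chickering_seq} that every Chickering sequence is weakly decreasing, I can take this ct-sequence to be weakly decreasing in edge count. Since each step of a ct-sequence tucks a covered edge, i.e.\ an edge in $\E^0(\G)$, and Algorithm \ref{alg:dfs} with $t=0$ recurses on every equal-score covered tuck up to depth $|\mb{v}|!$, the DFS can retrace the ct-sequence; at the first step where the edge count strictly drops, the inner ``if score$(\tau)>$ score$(\pi)$'' returns that improvement. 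Hence every fixed point $\hat{\tau}$ of the do-while loop has $\G_{\hat{\tau}} \in \Pm(\Prob)$, and Theorem \ref{razors} gives $\Pm(\Prob) = \MEC(\G^*)$ under faithfulness.

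For GRaSP$_1$ and GRaSP$_2$, the recursive top line of Algorithm \ref{alg:grasp} seeds tier $t$ with the output of tier $t-1$, and the do-while loop only updates when the score strictly improves, so the final score of GRaSP$_t$ is at least that of GRaSP$_{t-1}$. Combined with the correctness of GRaSP$_0$, the seed already lies in $\MEC(\G^*)$, and under faithfulness the identity $\CFC(\Prob) = \Fr(\Prob)$ of \cite{forster2020frugal} says this seed is already frugal; no tuck along a singular or general edge can produce a Markovian DAG with strictly fewer edges. Hence the do-while loop of the higher tier performs no score-improving update and, although the DFS may wander through equal-score permutations, every such permutation induces a frugal and therefore Markov-equivalent DAG, so the returned $\G_{\hat{\tau}}$ stays in $\MEC(\G^*)$.

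Finally, to get pointwise consistency I would substitute the graphoid $\Prob$ by the empirical conditional independence structure inferred from $\D$ and replace $-|\E(\G_\pi)|$ by the BIC score of $\G_\pi$ given $\D$. Local consistency of BIC \citep{10.1214/aos/1176350709, chickering2002optimal} guarantees that for every pair of induced DAGs compared by the algorithm, the sign of the BIC difference matches the sign of the edge-count difference almost surely as $n \to \infty$, and that the sample version of (RU) recovers the oracle $\G_\pi$ for each fixed $\pi$. Since the oracle trace of Algorithm \ref{alg:grasp} makes only finitely many such comparisons, a union bound gives that the sample execution coincides with the oracle execution with probability tending to one, yielding $\G_{\hat\tau} \in \MEC(\G^*)$ pointwise. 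I expect the main obstacle to be the bookkeeping in the GRaSP$_0$ step: the DFS recurses only on equal-score moves, so one must carefully argue that a \emph{weakly} decreasing ct-sequence (rather than a strictly decreasing one) still lets the DFS pass through every intermediate permutation and ultimately report the sparser terminal DAG.
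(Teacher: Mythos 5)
Your proposal is correct and follows essentially the same route as the paper: GRaSP$_0$'s output is P-minimal by Theorem \ref{ct-theorem} (via the DFS retracing a weakly decreasing ct-sequence), hence in $\MEC(\G^*)$ by Theorem \ref{razors}; the higher tiers inherit correctness because they are seeded by the lower tier and, under faithfulness, $\CFC(\Prob)=\Fr(\Prob)$ leaves no strictly sparser Markovian DAG to move to; and pointwise consistency comes from the local consistency of BIC. Your added bookkeeping about equal-score wandering is harmless but not needed, since Algorithm \ref{alg:dfs} returns the input permutation itself whenever no strictly better score is found.
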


Next, we want to highlight two logical discoveries with respect to the discussion of TSP and GRaSP$_0$.  

\begin{theorem}
\label{TSP=GRaSP0}
[Appendix \textcolor{blue}{\textbf{\ref{app:correct}}}] Given a graphoid $\Prob$ and an initial permutation, the DAG returned by TSP is the same as the DAG induced by the output of unbounded GRaSP$_0$. 
\end{theorem}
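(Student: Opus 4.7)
The plan is to prove TSP and unbounded GRaSP$_0$ produce the same DAG by establishing a bidirectional correspondence between the Chickering sequences TSP explores and the ct-sequences GRaSP$_0$ explores, and then invoking the determinism of the inner DFS to conclude step-by-step equality of outputs.

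For the forward direction, that any sparser SGS-minimal DAG reachable by TSP from the current $\G_\pi$ is also reachable by GRaSP$_0$, I would apply \textbf{Lemma \ref{ct-better}} directly: for every Chickering sequence from $\G_\pi$ to a sparser SGS-minimal $\mc{H}$ (exactly what TSP's inner DFS searches for), the lemma produces a ct-sequence from $\pi$ to some $\tau$ with $\G_\tau = \mc{H}$, which is exactly what the inner DFS of \textbf{Algorithm \ref{alg:dfs}} with $t=0$ is designed to find.

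For the reverse direction, that any sparser $\G_\tau$ reachable by GRaSP$_0$ from $\pi$ is also reachable by TSP from $\G_\pi$, I would prove the following invariant by induction on $i$: for any ct-sequence $\la \pi, \pi^2, \dots, \pi^m\ra$, the inclusion $\I(\G_{\pi^i}) \subseteq \I(\G_{\pi^{i+1}})$ holds. A single ct-step tucks a covered edge $(j \to k) \in \E(\G_{\pi^i})$; since $j \to k$ is covered and $\G_{\pi^i} \in \SGS(\Prob)$ by \textbf{Theorem \ref{RU-theorem}}, I would argue using the graphoid axioms that every d-separation of $\G_{\pi^i}$ survives (possibly alongside new ones) in the re-derived $\G_{\pi^{i+1}}$. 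Chaining the inclusions gives $\I(\G_\pi) \subseteq \I(\G_{\pi^m})$, so \textbf{Theorem \ref{Chickering_seq}} yields a Chickering sequence from $\G_\pi$ to $\G_{\pi^m}$ that TSP's DFS can follow.

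Assuming the inner DFS in both algorithms traverses edges in a matching order induced by $\pi$, the bidirectional correspondence forces both to select the same first sparser SGS-minimal DAG at every iteration; iterating this equality until termination gives that TSP's output DAG equals $\G_{\hat{\tau}}$, where $\hat{\tau}$ is the output of unbounded GRaSP$_0$. I expect the main obstacle to be the reverse direction, specifically establishing the CI-set monotonicity along a ct-sequence: unlike a static covered-edge reversal, the RU construction re-derives $\G_{\pi^{i+1}}$ from $\Prob$, so one has to rule out the destruction of existing d-separations; the argument has to leverage the covered-edge structure together with the graphoid axioms to ensure monotonic growth of $\I(\cdot)$.
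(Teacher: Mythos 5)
Your forward direction is sound and coincides with the paper's route: Lemma \ref{ct-better} (derived from Lemma \ref{Chickering_2_ct}) converts every Chickering sequence considered by TSP into a ct-sequence with the same terminal DAG, so GRaSP$_0$ can match any move TSP makes; and the per-iteration matching under a fixed ordering of $\E^0(\cdot)$ is exactly the order-dependence assumption the paper makes explicit before Definition \ref{TSP_Chickering}.

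The gap is in your reverse direction, and it is not where you expect it. The monotonicity $\I(\G_{\pi^i}) \subseteq \I(\G_{\pi^{i+1}})$ along a ct-sequence is not the hard part: it is Lemma \ref{tuck_lemma}(d), proven not by chasing d-separations through the graphoid axioms but by observing that the tucked permutation is a causal order of the DAG $\mc{H}$ obtained by reversing the covered edge, so $\E(\G_{\pi^{i+1}}) \subseteq \E(\mc{H})$ by Lemma \ref{SGS_same_DAG}, and $\I(\G_{\pi^i}) = \I(\mc{H}) \subseteq \I(\G_{\pi^{i+1}})$ by Lemma \ref{covered_MEC} and Theorem \ref{Jiji_thm}. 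The real problem is your final step: from $\I(\G_\pi) \subseteq \I(\G_{\pi^m})$, Theorem \ref{Chickering_seq} only gives the existence of \emph{some} Chickering sequence from $\G_\pi$ to $\G_{\pi^m}$; it does not give one ``that TSP's DFS can follow.'' TSP's DFS explores only the restricted class formalized as TSP-Chickering sequences in Definition \ref{TSP_Chickering} --- covered-edge reversals among SGS-minimal DAGs followed by a terminal block of edge deletions --- whereas an arbitrary Chickering sequence interleaves reversals and deletions freely and may pass through DAGs that TSP never visits. The paper closes exactly this hole with Lemma \ref{ct_2_Chickering}, which shows that each individual covered tuck corresponds to a TSP-Chickering sequence of the required form (one covered-edge reversal, then a chain of deletions landing on $\G_{\pi^{i+1}}$), so the correspondence holds step by step rather than only at the endpoints. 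Without that, or an argument that any Chickering sequence can be reorganized into TSP-Chickering form with the same (or Markov-equivalent) endpoint, your reverse direction does not establish that TSP actually reaches $\G_{\pi^m}$.
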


The theorem above suggests that TSP and GRaSP$_0$ are \textit{logically equivalent}. Additionally,  contrary to what \cite{solus2021consistency} argued, faithfulness is a necessary condition for TSP. 

\begin{theorem}
\label{TSP_necessary}
[Appendix \textcolor{blue}{\textbf{\ref{app:correct}}}] Given a graphoid $\Prob$, faithfulness is necessary for the correctness of TSP.
\end{theorem}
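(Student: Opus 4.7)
The plan is to use Theorem \ref{TSP=GRaSP0}, which identifies TSP with unbounded GRaSP$_0$ in terms of the returned MEC for any initial permutation. Thus it suffices to exhibit a single graphoid $\Prob$, a true DAG $\G^*$, and an initial permutation $\pi$ such that faithfulness fails while GRaSP$_0$ (and hence TSP) terminates at a DAG outside $\MEC(\G^*)$.

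I would construct a minimal three-variable counterexample. Take $\mb{V} = \{X_1, X_2, X_3\}$ with $\G^*$ the v-structure $X_1 \to X_3 \ot X_2$, so $\I(\G^*) = \{\la X_1, X_2\,|\,\varnothing\ra\}$. Define $\I(\Prob) = \{\la X_1, X_2\,|\,\varnothing\ra,\, \la X_1, X_2\,|\,X_3\ra\}$, a single extra independence beyond $\I(\G^*)$ that represents an ``extra'' CI of the classical unfaithful-collider type. Markov holds since $\I(\G^*) \subsetneq \I(\Prob)$, and faithfulness fails because $\la X_1, X_2\,|\,X_3\ra \notin \I(\G^*)$.

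Now take $\pi = \la 1, 3, 2\ra$. Applying (RU), $\G_\pi$ is the chain $X_1 \to X_3 \to X_2$, which lies in a different MEC than $\G^*$. A direct enumeration of all three-variable DAGs shows that the only members of $\CMC(\Prob)$ are the v-structure, the chain $X_1 \to X_3 \to X_2$, and the fork $X_1 \ot X_3 \to X_2$, with the latter two forming a single MEC. None of these has a strict superset of the chain's entailed CIs, so $\G_\pi$ is P-minimal. The only covered edge of $\G_\pi$ is $X_1 \to X_3$, and tucking it produces $\la 3, 1, 2\ra$, whose induced DAG is the fork (same MEC, same edge count); every reachable ct-mutation likewise stays within the chain-fork MEC. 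Hence the DFS in Algorithm \ref{alg:dfs} never achieves a strictly better score, and by Theorem \ref{ct-theorem} GRaSP$_0$ terminates at a DAG in the chain-fork MEC, distinct from $\MEC(\G^*)$. By Theorem \ref{TSP=GRaSP0}, TSP is therefore incorrect on this instance.

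The main obstacle is verifying that the candidate set $\I(\Prob)$ is already closed under the graphoid axioms, so that $\Prob$ is a legitimate graphoid rather than only a syntactic stipulation. This reduces to a small finite case-check: symmetry is immediate; decomposition and weak union act trivially because each CI in $\I(\Prob)$ has a singleton middle component; contraction requires a relation with conditioning set extending a previous one (e.g., $\la X_1, C \,|\, \{X_2, X_3\}\ra$), which is absent; and intersection requires a symmetric pair such as $\la X_1, X_2 \,|\, X_3\ra$ together with $\la X_1, X_3 \,|\, X_2\ra$, which is also absent. No new relations are forced, so $\Prob$ is a graphoid and the counterexample is valid.
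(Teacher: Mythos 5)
There is a genuine gap, and it is a quantifier gap rather than a computational one. The theorem asserts that faithfulness is necessary for the correctness of TSP, i.e., that for \emph{every} graphoid $\Prob$ and true DAG $\G^*$ with $(\G^*,\Prob)$ unfaithful, TSP returns a DAG outside $\MEC(\G^*)$ from some initial permutation. Your argument exhibits a single unfaithful instance on which TSP fails. That establishes only the existential claim ``there is an unfaithful model on which TSP is incorrect,'' which rules out correctness under particular weaker razors satisfied by your example but does not show that correctness of TSP on an arbitrary model forces faithfulness of that model. The paper's proof is genuinely universal: it splits an arbitrary unfaithful $(\G^*,\Prob)$ into the undetectable case (some $\G'\in\CFC(\Prob)$ with $\G'\notin\MEC(\G^*)$, to which TSP converges) and the detectable case ($\CFC(\Prob)=\varnothing$), and in the latter it invokes the novel equivalence $\CFC(\Prob)=\uPm(\Prob)$ (Theorem \ref{CFC-uPm}) to produce a P-minimal $\G\notin\MEC(\G^*)$; since Chickering sequences only weakly enlarge $\I(\cdot)$, TSP initialized at a permutation inducing this $\G$ can never leave $\MEC(\G)$ and hence fails. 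None of this machinery appears in your proposal, and without it the universal statement is not reached.

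A secondary issue: your verification that the stipulated set $\{\la X_1,X_2\,|\,\varnothing\ra,\la X_1,X_2\,|\,X_3\ra\}$ is closed under the graphoid axioms shows it is a formal graphoid, but the theorem quantifies over joint probability distributions; you would still need to argue that some actual distribution (e.g., a discrete one) realizes exactly this independence model, since a regular Gaussian cannot (with $\rho_{12}=0$, vanishing of $\rho_{12\cdot 3}$ forces $\rho_{13}\rho_{23}=0$ and hence further independencies). Within your example the local computations are fine --- $\G_\pi$ for $\pi=\la 1,3,2\ra$ is the chain, its only covered edge is $1\to 3$, and the DFS cycles within the chain--fork MEC without improving the score --- so the instance does witness a failure of TSP; it just does not prove the theorem as stated.
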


This theorem is entailed by a novel logical result that $\CFC(\Prob) = \uPm(\Prob)$ as proven in Appendix \textcolor{blue}{\textbf{\ref{app:correct}}}. Thus, the two theorems together prompt the usage of GRaSP with a higher tier. Extending $\E^0(\cdot)$ to $\E^1(\cdot)$ and $\E^2(\cdot)$ licenses a higher tier of GRaSP to attain a strictly sparser permutation under unfaithfulness. Examples of this sort will be studied in Section \textcolor{blue}{\textbf{\ref{u_fru_unfaithful}}} and Appendix \textcolor{blue}{\textbf{\ref{app:tiers}}}. 

\begin{corollary}
\label{coro_GRaSP_hierarchy}
Given a graphoid $\Prob$, unbounded GRaSP$_2$ is correct under a strictly weaker causal razor than unbounded GRaSP$_1$, which is correct under a strictly weaker causal razor than unbounded GRaSP$_0$.
\end{corollary}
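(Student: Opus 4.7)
The plan is to split each of the two hierarchy steps into a containment direction (if unbounded GRaSP$_{t-1}$ is correct on a graphoid $\Prob$, so is unbounded GRaSP$_t$) and a strictness direction (some graphoid makes GRaSP$_t$ correct but GRaSP$_{t-1}$ incorrect). The same argument will handle both $t=1$ and $t=2$, so I would state it once, parameterized by $t \in \{1,2\}$ together with the inclusion $\E^{t-1}(\G) \subseteq \E^t(\G)$ noted after Definition \ref{Et_edges}.

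For containment, fix any initial $\pi \in \Pi(\mb{v})$. Algorithm \ref{alg:grasp} first invokes GRaSP$_{t-1}$ and obtains some $\pi'$; by hypothesis $\G_{\pi'} \in \MEC(\G^*)$, so $|\E(\G_{\pi'})| = |\E(\G^*)|$. Suppose, toward contradiction, that the subsequent outer do-while loop of GRaSP$_t$ finds a strict improvement to some $\pi''$ with $|\E(\G_{\pi''})| < |\E(\G^*)|$. By Theorem \ref{RU-theorem}, $\G_{\pi''}$ is Markovian, exhibiting a Markov DAG strictly sparser than $\G^*$. Now consider re-running unbounded GRaSP$_{t-1}$ from the initial permutation $\pi''$: the dfs subroutine (Algorithm \ref{alg:dfs}) always returns permutations of non-decreasing score, and the outer loop of Algorithm \ref{alg:grasp} advances only on strict improvements, so the final output of GRaSP$_{t-1}$ has an induced DAG with at most $|\E(\G_{\pi''})| < |\E(\G^*)|$ edges and therefore cannot lie in $\MEC(\G^*)$. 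This contradicts the assumed correctness of GRaSP$_{t-1}$ on $\Prob$. Hence GRaSP$_t$'s loop terminates without improvement, and its output induces a DAG in $\MEC(\G^*)$.

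For strictness, I would exhibit, for each $t \in \{1,2\}$, a specific graphoid on which tier $t$ succeeds and tier $t-1$ does not. These are exactly the distributions highlighted in Section \ref{u_fru_unfaithful} and Appendix \ref{app:tiers}: an unfaithful pattern leaves the lower-tier algorithm stuck at a P-minimal (respectively, singular-minimal) DAG outside $\MEC(\G^*)$, while the extra tucks enabled by tier $t$'s larger edge set $\E^t$ open an escape to a sparser Markov DAG in $\MEC(\G^*)$. I would cite those worked examples as witnesses rather than construct them here.

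The main obstacle is the containment half: the use of $\pi''$ as a hypothetical re-initialization of GRaSP$_{t-1}$ forces the reading of ``correctness on $\Prob$'' as ``correct from every initial permutation,'' and without that reading the reduction to a contradiction breaks. Once that reading is granted, the proof relies only on the Markov guarantee of (RU) via Theorem \ref{RU-theorem} and the weakly decreasing score behavior of Algorithms \ref{alg:dfs} and \ref{alg:grasp}. Strictness, by contrast, is essentially an existence claim and is deferred to the concrete unfaithful examples catalogued later in the paper.
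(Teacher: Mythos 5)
Your proposal is correct and matches the paper's strategy: the paper also formalizes each tier's razor as ``$\G^*$ is recovered from \emph{every} initial permutation'' (via the sets $\mt{GRaSP}_t\mt{r}(\Prob)$), derives the containment of razors from the fact that Algorithm \ref{alg:grasp} wraps the lower tier and never returns a denser permutation, and establishes strictness by pointing to the concrete unfaithful examples (Theorems \ref{ESP_not_TSP} and \ref{2_not_1}). The only cosmetic difference is that you prove the containment by contradiction, re-initializing the lower tier at the improved permutation $\pi''$, whereas the paper argues directly that under the lower tier's razor every output is already a sparsest Markovian DAG (since any frugal DAG is a fixed point of every tier), so the higher tier cannot move; both hinge on exactly the reading of correctness you flag.
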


Further, in Appendix \textcolor{blue}{\textbf{\ref{app:ESP_GRaSP1}}}, we show the logical equivalence between unbounded GRaSP$_1$ and ESP. As a consequence, unbounded GRaSP$_2$ is a relaxation beyond the two causal razors discussed in \citep{solus2021consistency}. That said, we are aware of cases where unbounded GRaSP$_2$ is incorrect under u-frugality. Such a counterexample will be studied in Section \textcolor{blue}{\textbf{\ref{u_fru_unfaithful}}} and Appendix \textcolor{blue}{\textbf{\ref{app:tiers}}}.

We conclude this section by discussing how to use the DAG-inducing method in \citep{verma1988causal} based on BIC scores. This facilitates our simulations done in Section \textcolor{blue}{\textbf{\ref{Linear_Gauss_Sim}}}. Given a semigraphoid $\Prob$ over $\mb{V}$, each $\pi \in \Pi(\mb{v})$ induces a DAG $\G_\pi$ satisfying the following condition:
\begin{align}\tag{VP}
    X_j \in \mb{M} \Leftrightarrow (j \to k) \in \E(\G_\pi) 
\end{align}
where $\mb{M}$ is a \textit{Markov boundary} of $X_k$ relative to $\mb{X}_{\Pre(k, \pi)}$ (defined in Appendix \textcolor{blue}{\textbf{\ref{app:DAG_induce}}}). \textbf{Lemma \ref{VP=RU}} highlights that the DAGs induced by (VP) and (RU) are equivalent when $\Prob$ is a graphoid. But (VP) is preferred since we can estimate the \textit{unique} Markov boundary by the \textit{Grow-Shrink} (GS) algorithm from \citep{margaritis1999bayesian} using BIC scores and avoid hypothesis testing needed in (RU). We leave the discussion of the GS algorithm in Appendix \textcolor{blue}{\textbf{\ref{app:gs}}}. In Section \textcolor{blue}{\textbf{\ref{Linear_Gauss_Sim}}}, we are going to evaluate the performance of GRaSP through (VP) and GS in light of finite samples.



\section{Simulations}
\label{sec:sims}
In this section, we review empirical results of unfaithful u-frugal models with respect to DAGs and algorithmic performance on Gaussian distributed data generated under a variety of situations. References to the code and instantiated models with replicability instructions are included on a GitHub site for the project\footnote{\url{https://github.com/cmu-phil/grasp}.}. Also referenced will be a running version of GRaSP in the Tetrad project (\cite{ramsey2018tetrad}) as well as tabular data for all simulations. A scalable Python translation of GRaSP$_2$ using (VP) with a linear, Gaussian BIC score 
is included in the causal-learn Python package.\footnote{\url{https://github.com/cmu-phil/causal-learn}.}


\subsection{U-Frugal Faithfulness Violations}
\label{u_fru_unfaithful}

In what follows, we consider three sets of u-frugal models that violate faithfulness. The sets of models correspond to: regular Gaussian distributions over four variables \citep{vsimecek2006gaussian}, discrete distributions over four variables satisfying the intersection graphoid axiom and the Spohn condition (this includes all positive discrete distributions) \citep{vsimecek2006short} (see Appendix \textcolor{blue}{\textbf{\ref{app:graphoid}}}), and unfaithful DAGs (uDAGs) over five variables where a path cancellation induces a marginal independence between a pair of variables (see Appendix \textcolor{blue}{\textbf{\ref{app:unit_tests}}})\footnote{The first two sets of models can be found at \url{http://5r.matfyz.cz/skola/models}.}. In Table \ref{tab:unit_test}, these sets are denoted Gaussian, Discrete, and uDAGs, respectively.

We evaluate the capabilities of GRaSP$_0$, GRaSP$_1$, and GRaSP$_2$ to recover u-frugal DAGs using an independence oracle on models from each set. We say that a GRaSP variant recovers the u-frugal model if it can do so from every permutation; if the algorithm can reach the u-frugal model from every permutation, then the correctness of the variant will be independent of the DFS implementation.

\begin{table}[ht!]
    \centering
    \begin{tabular}{c|c|c|c|c}
    	& GRaSP$_0$ & GRaSP$_1$ & GRaSP$_2$ & Total \\
    	\hline
        Gaussian & 0 & 7 & 10 & 10 \\
        Discrete & 0 & 79 & 84 & 84 \\
        uDAGs & 0 & 19 & 49 & 61
    \end{tabular}
    \caption{The number of u-frugal models recovered by GRaSP$_0$, GRaSP$_1$, and GRaSP$_2$ from three sets of u-frugal models that violate faithfulness. A model is considered to be recovered if it is recovered from every permutation.}
    \label{tab:unit_test}
\end{table}

Table \ref{tab:unit_test} provides a computational proof that there are GRaSP$_1$ models not found by GRaSP$_0$, and GRaSP$_2$ models not found by GRaSP$_1$. These results support the claims in \textbf{Corollary \ref{coro_GRaSP_hierarchy}}.

\subsection{Linear Gaussian Simulations}
\label{Linear_Gauss_Sim}
We studied GRaSP's performance in the linear Gaussian case by varying simulations parameters around a configuration with 60 variables, an average degree of 6, and a sample size of 1,000 against two standard algorithms: fGES \citep{chickering2002optimal, ramsey2017million} and PC \citep{spirtes2000causation}. In Figure \ref{fig:mVar}, we vary the number of measured variables from 20 to 100 with values 20, 30, 40, 50, 60, 70, 80, 90, and 100. In Figure \ref{fig:avgDeg}, we vary the average degree from 2 to 10 with values 2, 3, 4, 5, 6, 7, 8, 9, and 10. For Figure \ref{fig:sampSize}, we vary the sample size from 200 to 100,000, with values 200, 500, 1,000, 2,000, 5,000, 10,000, 20,000, 50,000, and 100,000. In all cases, we draw coefficient values uniformly from $U(-1, 1)$ and incorporate independent additive exogenous noise distributions set to $N(0, 1)$.
All statistics are averaged over 20 independent runs. Finally, in Figure \ref{fig:secs}, we give the running times for our Java implementation of the algorithms. All of the algorithms except PC used BIC with a parameter penalty multiplier of 2 as a score; PC used partial correlation with a significance threshold of 0.001 as a conditional independence test. For the GRaSP variants, we allow tucks of covered edges up to depth 3, and tucks of non-covered edges at depth 1 when applicable\footnote{In the Java implementation of the algorithm, we include parameters for uncovered depth and non-singular depth to provide the user with more control over this heuristic.}. In all cases, we follow the procedure set out in the text of running lower tiers of GRaSP before running higher tiers of GRaSP to guarantee consistent improvement of statistics. 

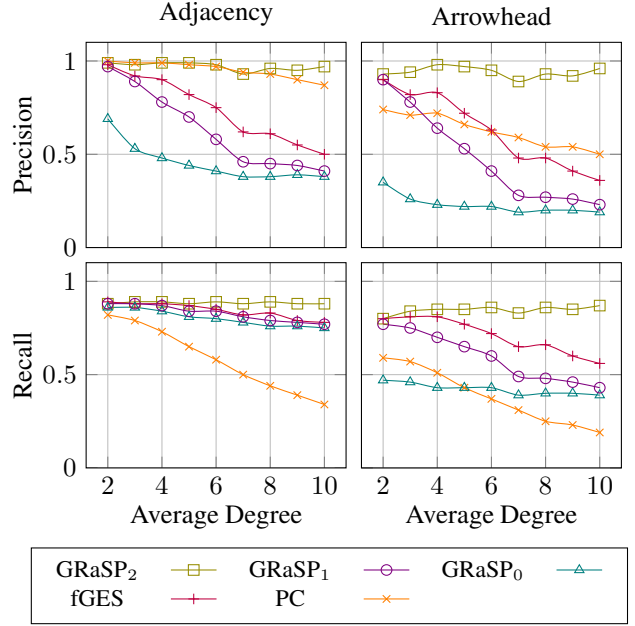
\begin{figure}[ht!]
    \centering
    \begin{tikzpicture}
        \begin{groupplot}[
            group style={
                group size=2 by 2,
                group name=avgDeg,
                x descriptions at=edge bottom,
                y descriptions at=edge left,
                horizontal sep=2mm,
                vertical sep=2mm
            },
            xlabel=Average Degree,
            ymin=0, ymax=1.1,
            grid=both,
            xlabel style={yshift=1mm},
            ylabel style={yshift=-1mm},
            width=5cm
        ]
        
            \nextgroupplot[ylabel=Precision]
            \addplot[mark=square, smooth, color=olive] table[x=avgDegree, y=GRaSP_AP]{Figures/avgDegree.txt}; \label{avgDeg_grasp}
            \addplot[mark=x, smooth, color=orange] table[x=avgDegree, y=PC_AP]{Figures/avgDegree.txt}; \label{avgDeg_pc}
            \addplot[mark=+, smooth, color=purple] table[x=avgDegree, y=fGES_AP]{Figures/avgDegree.txt}; \label{avgDeg_fges}
            \addplot[mark=triangle, smooth, color=teal] table[x=avgDegree, y=TSP_AP]{Figures/avgDegree.txt}; \label{avgDeg_tsp}
            \addplot[mark=o, smooth, color=violet] table[x=avgDegree, y=ESP_AP]{Figures/avgDegree.txt}; \label{avgDeg_esp}
            
            \nextgroupplot
            \addplot[mark=square, smooth, color=olive] table[x=avgDegree, y=GRaSP_AHP]{Figures/avgDegree.txt};
            \addplot[mark=x, smooth, color=orange] table[x=avgDegree, y=PC_AHP]{Figures/avgDegree.txt};
            \addplot[mark=+, smooth, color=purple] table[x=avgDegree, y=fGES_AHP]{Figures/avgDegree.txt};
            \addplot[mark=triangle, smooth, color=teal] table[x=avgDegree, y=TSP_AHP]{Figures/avgDegree.txt};
            \addplot[mark=o, smooth, color=violet] table[x=avgDegree, y=ESP_AHP]{Figures/avgDegree.txt};
            
            \nextgroupplot[ylabel=Recall]
            \addplot[mark=square, smooth, color=olive] table[x=avgDegree, y=GRaSP_AR]{Figures/avgDegree.txt};
            \addplot[mark=x, smooth, color=orange] table[x=avgDegree, y=PC_AR]{Figures/avgDegree.txt};
            \addplot[mark=+, smooth, color=purple] table[x=avgDegree, y=fGES_AR]{Figures/avgDegree.txt};
            \addplot[mark=triangle, smooth, color=teal] table[x=avgDegree, y=TSP_AR]{Figures/avgDegree.txt};
            \addplot[mark=o, smooth, color=violet] table[x=avgDegree, y=ESP_AR]{Figures/avgDegree.txt};
            
            \nextgroupplot
            \addplot[mark=square, smooth, color=olive] table[x=avgDegree, y=GRaSP_AHR]{Figures/avgDegree.txt};
            \addplot[mark=x, smooth, color=orange] table[x=avgDegree, y=PC_AHR]{Figures/avgDegree.txt};
            \addplot[mark=+, smooth, color=purple] table[x=avgDegree, y=fGES_AHR]{Figures/avgDegree.txt};
            \addplot[mark=triangle, smooth, color=teal] table[x=avgDegree, y=TSP_AHR]{Figures/avgDegree.txt};
            \addplot[mark=o, smooth, color=violet] table[x=avgDegree, y=ESP_AHR]{Figures/avgDegree.txt};
            
        \end{groupplot}
        \node[above=1mm of avgDeg c1r1] {Adjacency};
        \node[above=1mm of avgDeg c2r1] {Arrowhead};
        \node[fill=white, draw=black] at (3.2, -4.5) {
            \small
            \begin{tabular}{c c c c c c}
                GRaSP$_2$ & \ref{avgDeg_grasp} & GRaSP$_1$ & \ref{avgDeg_esp} & GRaSP$_0$ & \ref{avgDeg_tsp} \\
                fGES & \ref{avgDeg_fges} & PC & \ref{avgDeg_pc}
            \end{tabular}
        };
    \end{tikzpicture}
    \caption{Average degree varied, measured variables fixed to 60, sample size fixed to 1,000.}
    \label{fig:avgDeg}
\end{figure}

In these figures, precision = $\TP / (\TP + \FP)$ and recall = $\TP / (\TP + \FN)$, where $\TP$ is the number of true positives, $\FP$ is the number of false positives, and $\FN$ is the number of false negatives. We give precision and recall statistics for adjacencies and arrowheads separately. For adjacencies, true (false) adjacencies are pairs of vertices that are (not) adjacent in the generative graphical model, and positive (negative) adjacencies are pairs of vertices that are (not) adjacent in the estimated graphical model for each algorithm, respectively. For arrowhead statistics, a true arrowhead is a directed edge in the CPDAG\footnote{A CPDAG (a.k.a. ``pattern'') is a graphical representation of the Markov equivalence class for a DAG. See \citep{spirtes2000causation} for details.} of the generative graphical model and a positive arrowhead is a directed edge in the CPDAG of the estimated DAG, with negative and false arrowheads indicating the absence of these directed edges in their respective CPDAGs.

Figure \ref{fig:avgDeg} shows that algorithmic performance is strongly dependent on the average degree. While the compared algorithms generally perform well on sparse models, their performance drops off as the density increases. The exception is GRaSP$_2$, which dominates this group of algorithms, with a strong performance for both adjacencies and arrowheads as average degree is increased.

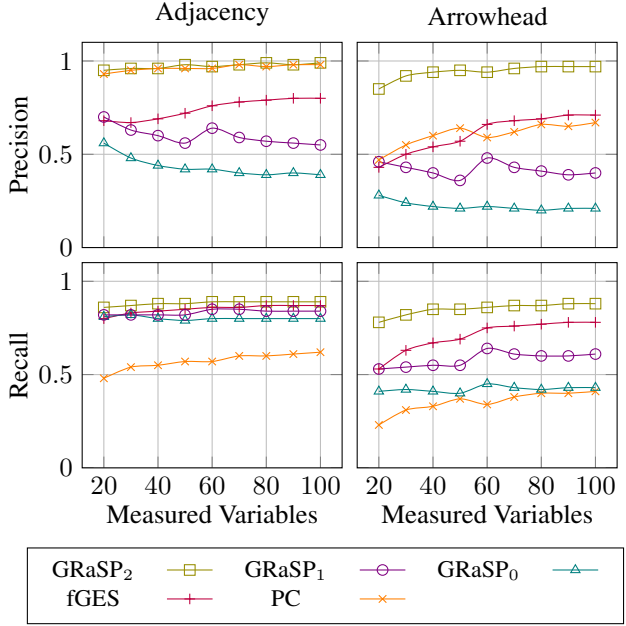
\begin{figure}[ht!]
    \centering
    \begin{tikzpicture}
        \begin{groupplot}[
            group style={
                group size=2 by 2,
                group name=mVar,
                x descriptions at=edge bottom,
                y descriptions at=edge left,
                horizontal sep=2mm,
                vertical sep=2mm
            },
            xlabel=Measured Variables,
            ymin=0, ymax=1.1,
            grid=both,
            xlabel style={yshift=1mm},
            ylabel style={yshift=-1mm},
            width=5cm
        ]
        
            \nextgroupplot[ylabel=Precision]
            \addplot[mark=square, smooth, color=olive] table[x=numMeasures, y=GRaSP_AP]{Figures/numMeasures.txt}; \label{mVar_grasp}
            \addplot[mark=x, smooth, color=orange] table[x=numMeasures, y=PC_AP]{Figures/numMeasures.txt}; \label{mVar_pc}
            \addplot[mark=+, smooth, color=purple] table[x=numMeasures, y=fGES_AP]{Figures/numMeasures.txt}; \label{mVar_fges}
            \addplot[mark=triangle, smooth, color=teal] table[x=numMeasures, y=TSP_AP]{Figures/numMeasures.txt}; \label{mVar_tsp}
            \addplot[mark=o, smooth, color=violet] table[x=numMeasures, y=ESP_AP]{Figures/numMeasures.txt}; \label{mVar_esp}
            
            \nextgroupplot
            \addplot[mark=square, smooth, color=olive] table[x=numMeasures, y=GRaSP_AHP]{Figures/numMeasures.txt};
            \addplot[mark=x, smooth, color=orange] table[x=numMeasures, y=PC_AHP]{Figures/numMeasures.txt};
            \addplot[mark=+, smooth, color=purple] table[x=numMeasures, y=fGES_AHP]{Figures/numMeasures.txt};
            \addplot[mark=triangle, smooth, color=teal] table[x=numMeasures, y=TSP_AHP]{Figures/numMeasures.txt};
            \addplot[mark=o, smooth, color=violet] table[x=numMeasures, y=ESP_AHP]{Figures/numMeasures.txt};
            
            \nextgroupplot[ylabel=Recall]
            \addplot[mark=square, smooth, color=olive] table[x=numMeasures, y=GRaSP_AR]{Figures/numMeasures.txt};
            \addplot[mark=x, smooth, color=orange] table[x=numMeasures, y=PC_AR]{Figures/numMeasures.txt};
            \addplot[mark=+, smooth, color=purple] table[x=numMeasures, y=fGES_AR]{Figures/numMeasures.txt};
            \addplot[mark=triangle, smooth, color=teal] table[x=numMeasures, y=TSP_AR]{Figures/numMeasures.txt};
            \addplot[mark=o, smooth, color=violet] table[x=numMeasures, y=ESP_AR]{Figures/numMeasures.txt};
            
            \nextgroupplot
            \addplot[mark=square, smooth, color=olive] table[x=numMeasures, y=GRaSP_AHR]{Figures/numMeasures.txt};
            \addplot[mark=x, smooth, color=orange] table[x=numMeasures, y=PC_AHR]{Figures/numMeasures.txt};
            \addplot[mark=+, smooth, color=purple] table[x=numMeasures, y=fGES_AHR]{Figures/numMeasures.txt};
            \addplot[mark=triangle, smooth, color=teal] table[x=numMeasures, y=TSP_AHR]{Figures/numMeasures.txt};
            \addplot[mark=o, smooth, color=violet] table[x=numMeasures, y=ESP_AHR]{Figures/numMeasures.txt};
            
        \end{groupplot}
        \node[above=1mm of mVar c1r1] {Adjacency};
        \node[above=1mm of mVar c2r1] {Arrowhead};
        \node[fill=white, draw=black] at (3.2, -4.5) {
            \small
            \begin{tabular}{c c c c c c}
                GRaSP$_2$ & \ref{mVar_grasp} & GRaSP$_1$ & \ref{mVar_esp} & GRaSP$_0$ & \ref{mVar_tsp} \\
                fGES & \ref{mVar_fges} & PC & \ref{mVar_pc}
            \end{tabular}
        };
    \end{tikzpicture}
    \caption{Measured variables varied, average degree fixed to 6, sample size fixed to 1,000.}
    \label{fig:mVar}
\end{figure}

Figure \ref{fig:mVar} shows the result of varying the number of measured variables. Notably, increasing the number of measured variables while holding the average degree constant decreases graph density. We see upward trends for some arrowhead statistics corresponding to this decrease in density. Again, GRaSP$_2$ dominates this group of algorithms, with strong precision and recall for both adjacencies and arrowheads.

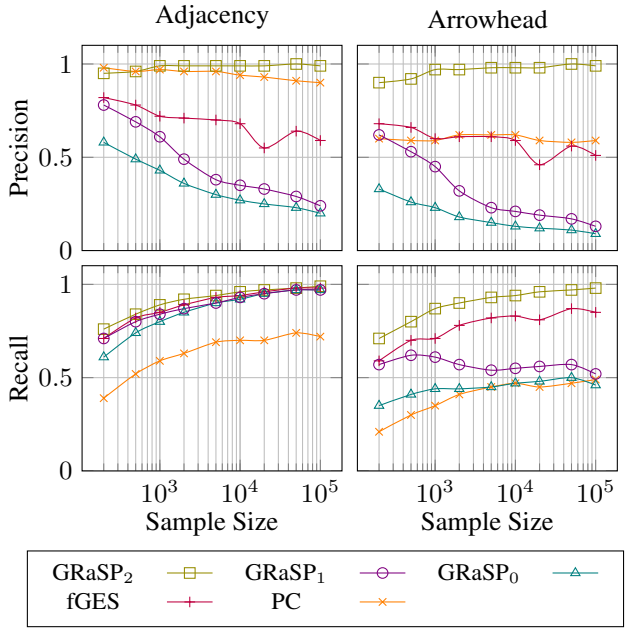
\begin{figure}[ht!]
    \centering
    \begin{tikzpicture}
        \begin{groupplot}[
            group style={
                group size=2 by 2,
                group name=sampSize,
                x descriptions at=edge bottom,
                y descriptions at=edge left,
                horizontal sep=2mm,
                vertical sep=2mm
            },
            xlabel=Sample Size,
            ymin=0, ymax=1.1,
            grid=both,
            xmode=log, 
            xlabel style={yshift=1mm},
            ylabel style={yshift=-1mm},
            width=5cm
        ]
        
            \nextgroupplot[ylabel=Precision]
            \addplot[mark=square, smooth, color=olive] table[x=sampleSize, y=GRaSP_AP]{Figures/sampleSize.txt}; \label{sampSize_grasp}
            \addplot[mark=x, smooth, color=orange] table[x=sampleSize, y=PC_AP]{Figures/sampleSize.txt}; \label{sampSize_pc}
            \addplot[mark=+, smooth, color=purple] table[x=sampleSize, y=fGES_AP]{Figures/sampleSize.txt}; \label{sampSize_fges}
            \addplot[mark=triangle, smooth, color=teal] table[x=sampleSize, y=TSP_AP]{Figures/sampleSize.txt}; \label{sampSize_tsp}
            \addplot[mark=o, smooth, color=violet] table[x=sampleSize, y=ESP_AP]{Figures/sampleSize.txt}; \label{sampSize_esp}
            
            \nextgroupplot
            \addplot[mark=square, smooth, color=olive] table[x=sampleSize, y=GRaSP_AHP]{Figures/sampleSize.txt};
            \addplot[mark=x, smooth, color=orange] table[x=sampleSize, y=PC_AHP]{Figures/sampleSize.txt};
            \addplot[mark=+, smooth, color=purple] table[x=sampleSize, y=fGES_AHP]{Figures/sampleSize.txt};
            \addplot[mark=triangle, smooth, color=teal] table[x=sampleSize, y=TSP_AHP]{Figures/sampleSize.txt};
            \addplot[mark=o, smooth, color=violet] table[x=sampleSize, y=ESP_AHP]{Figures/sampleSize.txt};
            
            \nextgroupplot[ylabel=Recall]
            \addplot[mark=square, smooth, color=olive] table[x=sampleSize, y=GRaSP_AR]{Figures/sampleSize.txt};
            \addplot[mark=x, smooth, color=orange] table[x=sampleSize, y=PC_AR]{Figures/sampleSize.txt};
            \addplot[mark=+, smooth, color=purple] table[x=sampleSize, y=fGES_AR]{Figures/sampleSize.txt};
            \addplot[mark=triangle, smooth, color=teal] table[x=sampleSize, y=TSP_AR]{Figures/sampleSize.txt};
            \addplot[mark=o, smooth, color=violet] table[x=sampleSize, y=ESP_AR]{Figures/sampleSize.txt};
            
            \nextgroupplot
            \addplot[mark=square, smooth, color=olive] table[x=sampleSize, y=GRaSP_AHR]{Figures/sampleSize.txt};
            \addplot[mark=x, smooth, color=orange] table[x=sampleSize, y=PC_AHR]{Figures/sampleSize.txt};
            \addplot[mark=+, smooth, color=purple] table[x=sampleSize, y=fGES_AHR]{Figures/sampleSize.txt};
            \addplot[mark=triangle, smooth, color=teal] table[x=sampleSize, y=TSP_AHR]{Figures/sampleSize.txt};
            \addplot[mark=o, smooth, color=violet] table[x=sampleSize, y=ESP_AHR]{Figures/sampleSize.txt};
            
        \end{groupplot}
        \node[above=1mm of sampSize c1r1] {Adjacency};
        \node[above=1mm of sampSize c2r1] {Arrowhead};
        \node[fill=white, draw=black] at (3.2, -4.5) {
            \small
            \begin{tabular}{c c c c c c}
                GRaSP$_2$ & \ref{sampSize_grasp} & GRaSP$_1$ & \ref{sampSize_esp} & GRaSP$_0$ & \ref{sampSize_tsp} \\
                fGES & \ref{sampSize_fges} & PC & \ref{sampSize_pc}
            \end{tabular}
        };
    \end{tikzpicture}
    \caption{Sample size varied, measured variables fixed to 60, average degree fixed to 6.}
    \label{fig:sampSize}
\end{figure}

All the algorithms compared in these simulations claim pointwise consistency, however, only GRaSP$_2$ seems to back up these claims in Figure \ref{fig:sampSize}. This might suggest that GRaSP$_2$ is better equipped to handle almost-violations of faithfulness in linear Gaussian models. As with previous figures, GRaSP$_2$ dominates this group of algorithms for precision and recall for both adjacencies and arrowheads for all sample sizes studied.

\begin{figure}[ht!]
    \centering
    \begin{tikzpicture}
        \begin{groupplot}[
            group style={
                group size=2 by 1,
                group name=secs,
                x descriptions at=edge bottom,
                y descriptions at=edge left,
                horizontal sep=2mm,
                vertical sep=2mm
            },
            ylabel=Seconds,
            ymin=0.01, ymax=300,
            grid=both,
            ymode=log, 
            xlabel style={yshift=1mm},
            ylabel style={xshift=3mm, yshift=-3mm},
            width=5cm      
        ]
        
            \nextgroupplot[xlabel=Average Degree]
            \addplot[mark=square, smooth, color=olive] table[x=avgDegree, y=GRaSP_Secs]{Figures/avgDegree.txt}; \label{secs_grasp}
            \addplot[mark=x, smooth, color=orange] table[x=avgDegree, y=PC_Secs]{Figures/avgDegree.txt}; \label{secs_pc}
            \addplot[mark=+, smooth, color=purple] table[x=avgDegree, y=fGES_Secs]{Figures/avgDegree.txt}; \label{secs_fges}
            \addplot[mark=triangle, smooth, color=teal] table[x=avgDegree, y=TSP_Secs]{Figures/avgDegree.txt}; \label{secs_tsp}
            \addplot[mark=o, smooth, color=violet] table[x=avgDegree, y=ESP_Secs]{Figures/avgDegree.txt}; \label{secs_esp}
            
            \nextgroupplot[xlabel=Measured Variables]
            \addplot[mark=square, smooth, color=olive] table[x=numMeasures, y=GRaSP_Secs]{Figures/numMeasures.txt};
            \addplot[mark=x, smooth, color=orange] table[x=numMeasures, y=PC_Secs]{Figures/numMeasures.txt};
            \addplot[mark=+, smooth, color=purple] table[x=numMeasures, y=fGES_Secs]{Figures/numMeasures.txt};
            \addplot[mark=triangle, smooth, color=teal] table[x=numMeasures, y=TSP_Secs]{Figures/numMeasures.txt};
            \addplot[mark=o, smooth, color=violet] table[x=numMeasures, y=ESP_Secs]{Figures/numMeasures.txt};
            
        \end{groupplot}
        \node[fill=white, draw=black] at (3.2, -1.5) {
            \small
            \begin{tabular}{c c c c c c}
                GRaSP$_2$ & \ref{secs_grasp} & GRaSP$_1$ & \ref{secs_esp} & GRaSP$_0$ & \ref{secs_tsp} \\
                fGES & \ref{secs_fges} & PC & \ref{secs_pc}
            \end{tabular}
        };
    \end{tikzpicture}
    \caption{Measured variables fixed to 60 when not varied, average degree fixed to 6 when not varied, sample size fixed to 1,000.}
    \label{fig:secs}
\end{figure}
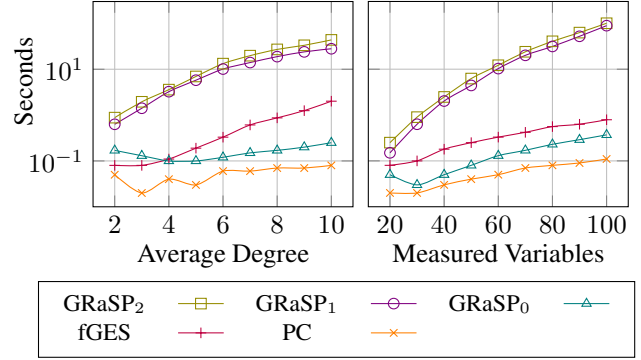

Figure \ref{fig:secs} shows that all the algorithms on average return in under two minutes for the studied scenarios. However, given the log scale, it should be noted that the computation time for GRaSP$_2$ increases exponentially with respect to the average degree of the graph and with respect to the number of measured variables. Other algorithms see similar slow-downs, but, other than GRaSP$_1$, none of the other algorithms experience as significant of a slow-down.\footnote{All simulations in this paper were run on a MacBook Pro laptop computer, M1, 2020, with 16G of RAM, using the Corretto 18 Java SDK. Memory is the main resource constraint on the procedure, which is needed for caching scores. Thanks to the comment of an anonymous reviewer, a machine with 256GB of RAM may be useful for analyses significantly larger than the ones studied.}

In this paper, we focused on algorithms that can run on a 100 variable problem in a reasonable amount of time on a laptop. However, we would be remiss if we did not mention a recent algorithm by \cite{lu2021improving} called Triplet A$^*$ that performs in terms of accuracy as well as, if not better than, GRaSP$_2$. We declined to directly compare the Triple A$^*$ algorithm in our Figures because it was unable to finish our simulations in reasonable time; for instance, the point they give in their Figure 6 for the 60-variable, average degree 5 case was already as slow as could be managed (personal communication); we took our simulations out to an average degree of 10. In lieu of this, we include in Appendix \textcolor{blue}{\textbf{\ref{luetal-comparison}}} results of running GRaSP$_2$ on their published simulation data. 


\section{Empirical Example}
\label{sec:empirical}
We give a simple empirical example, the 6-variable Airfoil example from the Irvine Machine Learning Repository (\cite{Dua:2019}. The experiment measures sound pressure elicited by an airfoil in a wind tunnel. The variables in the example are as follows: (1) Velocity of the wind in the tunnel, (2) chord length of the airfoil, (3) angle of attack of the airfoil, (4) displacement of the wind away from the airfoil, (5) frequency of the elicited sound, and (6) measured pressure of the elicited sound. (1), (2), and (3) are experimental variables and thus exogenous; (6) in the experiment is endogenous. The GRaSP$_2$, PC and GES graphs are given in Appendix \textcolor{blue}{\textbf{\ref{airfoil-example}}}. The GRaSP$_2$ model (which is the same as the SP model) is uniquely frugal; background knowledge is satisfied, except possibly for (3), which looks to be not exogenous in the model; here, it helps to remember that latent variables might exist. This raises the question as to whether a causally insufficient algorithm might find a model consistent with (3) being exogenous. We will explore how GRaSP$_2$ may be used to do latent variable reasoning to see whether (3) remains non-exogenous in general. 

This example has a number of advantages: (a) It is an experiment so readily interpretable as a causal system; (b) because it is an experiment, partial ground truth for the system can easily be adduced; and, (c) it is small enough to run SP on the data, and since this produces a single model, we can simply compare the output of GRaSP$_2$ to the output of SP to show that GRaSP$_2$ finds the optimal BIC model.

Further empirical examples with SP (where possible), GRaSP$_2$, fGES, and PC are given on our GitHub site.


\section{Discussion}
\label{sec:discussion}
Permutation-based reasoning in designing causal search algorithms is increasingly influential in the literature, including the methods from  \cite{teyssier2005ordering} and \cite{raskutti2018learning}. We propose a class of algorithms under the generic name GRaSP characterized by an efficient permutation-based operation, \textit{tuck}. All tiers of GRaSP are shown to be correct and pointwise consistent under the assumption of faithfulness. Also, we show that the two lower tiers of GRaSP are logically equivalent to the algorithms TSP and ESP discussed in \citep{solus2021consistency}. We further prove that the final tier of GRaSP makes a strictly weaker assumption than its lower-tier counterparts and demonstrate that it outperforms the lower-tier algorithms and two standard causal search algorithms, PC and fGES, in simulations.

Discussion of GRaSP can be extended in several directions. First, we have already begun to explore even higher tiers of GRaSP which relax the search criterion even further. Figure \ref{fig:avgDeg} suggests that GRaSP may provide tools helpful for the discussion of dense graph search. Given the hierarchy of GRaSP, higher tiers will hopefully improve the performance statistics and employ weaker assumption than the existing tiers. Ultimately, we hope to develop a tier of GRaSP that is correct under u-frugality alone.

Second, many advances have been made in the area of more or completely general modeling of data distributions, with corresponding improvements in accuracy of causal search for algorithms taking general modeling assumptions into account. It would be helpful to consider how such ideas can be incorporated into GRaSP. For example, \cite{huang2018generalized} show how a consistent general score can be incorporated into GES; it will be interesting to see whether GRaSP is able to show similar improvement in applicability when using such a score.

Third, we have analyzed Gaussian simulations in Section \textcolor{blue}{\textbf{\ref{sec:sims}}}, but some simulation work needs to be done to show that GRaSP works well for discrete distributions (where the theory is already applicable) and also for mixed Gaussian/discrete distributions studied in \citep{andrews2019learning}.

Fourth, the discussion of this paper is built upon the assumptions of causal sufficiency, that is, no latent common causes, and no selection bias. Causal search without these assumptions was pioneered by the FCI algorithm from \cite{spirtes2000causation} and \cite{ZHANG20081873}. To improve empirical performance of FCI, \cite{ogarrio2016hybrid} initiated a hybrid algorithm GFCI which combines GES with FCI. To follow suit, we plan to explore an algorithm that incorporates GRaSP into GFCI (in place of GES), further improving this empirical performance.

Fifth, more direct comparisons to other algorithms need ideally to be done. As a step in this direction, we include figures on our GitHub site using the simulation  parameters in \citep{lu2021improving}, corresponding to their Figures 6, so there is oblique comparison to the algorithms in those figures, including GES and PC in the PCALG package \citep{kalisch2012causal}, Triplet A$^*$ \citep{lu2021improving}, NOTEARS \citep{zheng2018dags}, the GSP implementation in the Python causaldag package, LiNGAM \citep{shimizu2006linear}, and MMHC \citep{tsamardinos2006max}. The reader is invited to explore those comparisons.

Finally, we have taken up just one real data example in this paper, but it is useful to point out in a forward-looking way that improvements in the ability to handle latent and mixed continuous/discrete variables in a scalable and accurate causal search algorithm would put one in a good position to analyze a number of otherwise difficult real data examples. Accurate preliminary results consistent with ground truth using the suggested modification of GFCI for a number of mixed datasets from the Irvine Machine Learning Repository (\citep{Dua:2019}), for instance, suggest that this would be a good direction to look for new practical methods (cf. \citep{raghu2018comparison}).

\begin{contributions}
    WL contributed theoretical results, with input from BA, while BA and JR worked on the algorithm implementations and contributed empirical results. All authors contributed to algorithmic development.
\end{contributions}

\begin{acknowledgements}
    We thank Greg Cooper, Clark Glymour, Ignavier Ng, Peter Spirtes, Jiji Zhang, and Kun Zhang for discussion and feedback, and the anonymous reviewers for detailed and insightful comments.
\end{acknowledgements}

\bibliography{refs}
\clearpage
\onecolumn
\appendix

\section{Background materials}
\label{app:bkgd}
\subsection{Graphical Definitions}
\label{app:graph_def}

A \textit{directed graph} $\G$ over a set of measured variables $\mb{V} = \{X_1,...,X_m\}$ consists of $m$ vertices $\mb{v} = \{1,...,m\}$ where each vertex $i \in \mb{v}$ associates to a variable $X_i \in \mb{V}$, and each edge in $\G$ is directed with the form $j \to k$ and no vertex has a directed edge to itself. A \textit{directed path} $\mf{p}$ is a sequence of vertices $\la \mb{i}_1, \mb{i}_2,..., \mb{i}_k\ra$ for some $k \geq 2$ where $\mb{i}_j \in \mb{v}$ for each $1 \leq j \leq k$, and $\mb{i}_j$ and $\mb{i}_{j+1}$ are connected by a directed edge (i.e., $\mb{i}_j \to \mb{i}_{j+1}$ or $\mb{i}_{j+1} \to \mb{i}_{j}$). Such a path $\mf{p}$ is \textit{unidirectional} if $\mb{i}_{j} \to \mb{i}_{j+1}$ for each $1 \leq j < k$. A \textit{directed acyclic graph} (DAG) is a directed graph where no vertex can have a unidirectional directed path to itself.

Denote $\E(\G)$ as the set of directed edges in $\G$. A pair of DAGs $\G_1, \G_2$ over the same set of variables $\mb{V}$ are equivalent if and only if $\E(\G_1) = \E(\G_2)$. Let $\Pa(j, \G) = \{k \in \mb{v}: (k \to j) \in \E(\G)\}$ be the set of \textit{parents} of $j$ in $\G$, and $\Ch(j, \G) = \{k \in \mb{v}: (j \to k) \in \E(\G)\}$ be the set of \textit{children} of $j$ in $\G$. $\An(j, \G)$, the \textit{ancestors} of $j$ in $\G$, is defined by the transitive closure of $\Pa(j, \G)$. Similarly, $\De(j, \G)$, the \textit{descendants} of $j$ in $\G$, is defined by the transitive closure of $\Ch(j, \G)$ and union with $\{j\}$ itself (i.e., $j$ is its own descendant). Further let $\Nd(j, \G) = \mb{v} \setminus \De(j, \G)$ be the set of $j$'s \textit{non-descendants}.

A pair of vertices $j, k \in \mb{v}$ are said to be \textit{adjacent} in $\G$ if $(j \to k) \in \E(\G)$ or $(k \to j) \in \E(\G)$. For any triple of pairwise distinct vertices $i, j, k \in \mb{v}$, we say that $(i, j, k)$ is \textit{unshielded} if $(i, j)$ and $(j, k)$ are adjacent pairs in $\G$, but not $(i, k)$. $(i, j, k)$ forms a \textit{triangle} if they are pairwise adjacent. If $(i, j, k)$ is an unshielded triple or is a triangle, $j$ is a \textit{collider} (on the directed path $\la i, j, k\ra$) if $(i \to j), (k \to j) \in \E(\G)$, and a \textit{non-collider} otherwise. A directed path $\mf{p}$ is a \textit{trek} if it contains no collider. 

For any $j, k \in \mb{v}$ and any $\mb{i} \subseteq \mb{v} \setminus \{j, k\}$, $j$ and $k$ are \textit{d-connected} given $\mb{i}$ in $\G$ if there exists a directed path $\mf{p}$ between $j$ and $k$ in $\G$ such that no non-collider on $\mf{p}$ is in $\mb{i}$, and each collider $l$ on $\mf{p}$ or a $l$'s descendant is in $\mb{i}$. $j$ and $k$ are \textit{d-separated} given $\mb{i}$ in $\G$ if $j$ and $k$ are not d-connected given $\mb{i}$. For any disjoint subsets of vertices $\mb{j}, \mb{k}, \mb{i} \subseteq \mb{v}$, $\mb{j}$ and $\mb{k}$ are d-separated given $\mb{i}$ in $\G$ if $j$ and $k$ are d-separated by $\mb{i}$ in $\G$ for every $j \in \mb{j}$ and every $k \in \mb{k}$. 

Given a model $(\G, \Prob)$ over $\mb{V}$, $\G$ is said to be \textit{local Markov} to $\Prob$ if $X_j \CI_\Prob \mb{X}_{\Nd(j, \G)} \setminus \mb{X}_{\Pa(j, \G)}\,|\,\mb{X}_{\Pa(j, \G)}$ for every $j \in \mb{v}$. It is a well-known fact that $\G$ is local Markov to $\Prob$ if and only if $\I(\G) \subseteq \I(\Prob)$ (i.e., global Markov as defined by d-separation).

\subsection{Graphoid Axioms} 
\label{app:graphoid}

For any pairwise disjoint sets of variables $\mb{W}, \mb{X}, \mb{Y},$ and $\mb{Z}$,
\begin{align*}
\mb{X} \CI \mb{Y} \,|\,\mb{Z} &\,\,\Rightarrow\,\, \mb{Y} \CI \mb{X} \,|\,\mb{Z} & (\textit{symmetry})\\
\mb{X} \CI \mb{Y} \cup \mb{W} \,|\,\mb{Z} &\,\,\Rightarrow\,\, (\mb{X} \CI \mb{Y} \,|\,\mb{Z})  \wedge (\mb{X} \CI \mb{W}\,|\,\mb{Z}) &
(\textit{decomposition})\\
\mb{X} \CI \mb{Y} \cup \mb{W} \,|\,\mb{Z} &\,\,\Rightarrow\,\, \mb{X} \CI \mb{Y} \,|\,\mb{Z} \cup \mb{W} &
(\textit{weak union})\\
(\mb{X} \CI \mb{Y}\,|\,\mb{Z}) \wedge (\mb{X} \CI \mb{W}\,|\,\mb{Z} \cup \mb{Y}) &\,\,\Rightarrow\,\, \mb{X} \CI \mb{Y}\cup \mb{W}\,|\,\mb{Z} &(\textit{contraction})\\
(\mb{X} \CI \mb{Y}\,|\,\mb{Z} \cup \mb{W}) \wedge (\mb{X} \CI \mb{W}\,|\,\mb{Z} \cup \mb{Y}) &\,\,\Rightarrow\,\, \mb{X} \CI \mb{Y}\cup \mb{W}\,|\,\mb{Z} &(\textit{intersection})\\
(\mb{X} \CI \mb{Y}\,|\,\mb{Z}) \wedge (\mb{X} \CI \mb{W}\,|\,\mb{Z}) &\,\,\Rightarrow\,\, \mb{X} \CI \mb{Y}\cup \mb{W}\,|\,\mb{Z} & (\textit{composition})
\end{align*}




A distribution $\Prob$ is a \textit{semigraphoid} if $\I(\Prob)$ is closed under \textit{symmetry}, \textit{decomposition}, \textit{weak union}, and \textit{contraction}. A semigraphoid $\Prob$ is a \textit{graphoid} if $\I(\Prob)$ is closed under \textit{intersection}. A graphoid $\Prob$ is \textit{compositional} if $\I(\Prob)$ is closed under \textit{composition}. See Chapter 2 of \citep{Studeny10.5555/1044858} for a more comprehensive study of graphoid axioms. In addition, applications of \textit{symmetry} in our upcoming proofs will be done implicitly for the sake of simplicity. 

Additionally, \cite{Spohn1994}
notes that the following property necessarily holds in the independence models induced by positive discrete probability distributions. For any pairwise disjoint sets of variables $\mb{W}, \mb{X}, \mb{Y},$ and $\mb{Z}$,
\[
    (\mb{X} \CI \mb{Y}\,|\,\mb{W} \cup \mb{Z}) \wedge (\mb{W} \CI \mb{Z}\,|\,\mb{X} \cup \mb{Y}) \wedge (\mb{W} \CI \mb{Z}\,|\,\mb{X}) \,\,\Rightarrow\,\, [(\mb{W} \CI \mb{Z}\,|\,\mb{Y}) \Leftrightarrow (\mb{W} \CI \mb{Z}\,|\,\varnothing)] \qquad (\textit{Spohn condition})
\]

\subsection{DAG induced from a permutation}
\label{app:DAG_induce}

\begin{definition}
\label{MB}
Given a semigraphoid $\Prob$ over $\mb{V}$, for every $X \in \mb{V}$, we say that $\mb{M} \subseteq \mb{V}$ is a \textit{Markov blanket} of $X$ relative to $\mb{Z} \subseteq \mb{V} \setminus \{X\}$ if
\begin{enumerate}
    \item[(i)] $\mb{M} \subseteq \mb{Z}$;
    \item[(ii)] $X \CI_\Prob (\mb{Z}\setminus \mb{M})\,|\,\mb{M}$.
\end{enumerate}
Such a Markov blanket $\mb{M}$ is said to be a \textit{Markov boundary} if it further satisfies the following condition:
\begin{enumerate}
    \item[(iii)] there does not exist $\mathbf{M}' \subset \mathbf{M}$ s.t. $X \CI_\Prob (\mb{Z} \setminus \mb{M}')\,|\,\mb{M}'$.
\end{enumerate}
\end{definition}

\begin{lemma}
\label{MB_unique}
\citep{verma1988causal} Given a graphoid $\Prob$ over $\mb{V}$, for every $X \in \mb{V}$ and every $\mb{Z} \subseteq \mb{V} \setminus \{X\}$, there is a \textit{unique} Markov boundary of $X$ relative to $\mb{Z}$.
\end{lemma}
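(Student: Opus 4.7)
\textbf{Proof proposal for Lemma \ref{MB_unique}.}

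The plan is to argue existence trivially and then prove uniqueness by showing that the intersection of two Markov boundaries is itself a Markov blanket, which by minimality forces both boundaries to coincide. Existence follows because $\mb{Z}$ itself trivially satisfies conditions (i)--(ii) of Definition \ref{MB} (since $X \CI_\Prob \varnothing \mid \mb{Z}$ holds vacuously), so a finite descent on $|\mb{M}|$ terminates at a Markov boundary. The substantive content is uniqueness, and this is where the graphoid axiom of intersection will be essential --- this will be the main obstacle, because it requires chaining several graphoid moves in the correct order.

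For uniqueness, suppose $\mb{M}_1$ and $\mb{M}_2$ are both Markov boundaries of $X$ relative to $\mb{Z}$. I would partition $\mb{Z}$ into four pairwise disjoint pieces
\[
\mb{A} = \mb{M}_1 \setminus \mb{M}_2, \quad \mb{B} = \mb{M}_2 \setminus \mb{M}_1, \quad \mb{C} = \mb{M}_1 \cap \mb{M}_2, \quad \mb{D} = \mb{Z} \setminus (\mb{M}_1 \cup \mb{M}_2),
\]
so that the two Markov blanket conditions read
\[
X \CI_\Prob \mb{B} \cup \mb{D} \mid \mb{A} \cup \mb{C} \qquad \text{and} \qquad X \CI_\Prob \mb{A} \cup \mb{D} \mid \mb{B} \cup \mb{C}.
\]
The target is to show $X \CI_\Prob \mb{A} \cup \mb{B} \cup \mb{D} \mid \mb{C}$, which says that $\mb{C}$ is itself a Markov blanket of $X$ relative to $\mb{Z}$; minimality of $\mb{M}_1$ and $\mb{M}_2$ then forces $\mb{M}_1 = \mb{C} = \mb{M}_2$.

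To reach the target, I would first apply weak union to each of the two Markov blanket statements to obtain $X \CI_\Prob \mb{B} \mid \mb{A} \cup \mb{C} \cup \mb{D}$ and $X \CI_\Prob \mb{A} \mid \mb{B} \cup \mb{C} \cup \mb{D}$, and then invoke intersection to conclude $X \CI_\Prob \mb{A} \cup \mb{B} \mid \mb{C} \cup \mb{D}$. Decomposition gives $X \CI_\Prob \mb{A} \mid \mb{C} \cup \mb{D}$, and decomposition of the original first statement yields $X \CI_\Prob \mb{D} \mid \mb{A} \cup \mb{C}$. A second application of intersection on these two combines $\mb{A}$ and $\mb{D}$ out of the conditioning set to give $X \CI_\Prob \mb{A} \cup \mb{D} \mid \mb{C}$. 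Finally, combining this with $X \CI_\Prob \mb{B} \mid \mb{A} \cup \mb{C} \cup \mb{D}$ via contraction yields the desired $X \CI_\Prob \mb{A} \cup \mb{B} \cup \mb{D} \mid \mb{C}$.

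The step I expect to require the most care is the double use of intersection: once to collapse $\mb{D}$ out of the conditioning set of both original statements (producing a clean $X \CI_\Prob \mb{A}\cup\mb{B}\mid \mb{C}\cup\mb{D}$), and once to further strip $\mb{A}$ or $\mb{D}$ from the conditioning set. Intersection is precisely the axiom that distinguishes graphoids from semigraphoids, and it is what fails for general semigraphoids; so one must be vigilant that $\Prob$ is assumed to be a graphoid at every invocation. Once the two intersections and the single contraction are lined up in the order above, the conclusion $\mb{C} = \mb{M}_1 = \mb{M}_2$ is immediate from condition (iii) of Definition \ref{MB}.
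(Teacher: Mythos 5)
Your proof is correct. The paper itself offers no proof of this lemma---it is stated with a citation to Verma and Pearl (1988)---so there is nothing internal to compare against; your argument is the standard one underlying that citation, namely that the intersection of two Markov blankets is again a Markov blanket whenever the intersection axiom holds, after which minimality forces $\mb{M}_1 = \mb{M}_1 \cap \mb{M}_2 = \mb{M}_2$. I checked each application of weak union, intersection, decomposition, and contraction against the axioms in Appendix A.2 with the substitutions you indicate, and every step is licensed (the required pairwise disjointness holds since $\mb{A},\mb{B},\mb{C},\mb{D}$ partition $\mb{Z}$ and $X \notin \mb{Z}$), so the derivation of $X \CI_\Prob \mb{A} \cup \mb{B} \cup \mb{D}\,|\,\mb{C}$ goes through as written.
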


In the following, we use $\MB_\Prob(X, \mb{Z})$ to refer to the unique Markov boundary of $X$ relative to $\mb{Z}$. The subscript $\Prob$ will be suppressed if the underlying graphoid is clear from context.

\begin{lemma}
\label{MB_subset}
Given a graphoid $\Prob$  over $\mb{V}$, for every $X \in \mb{V}$ and every $\mb{Z} \subseteq \mb{V} \setminus \{X\}$, if $\mb{M}$ is a Markov blanket of $X$ relative to $\mb{Z}$, then $\MB(X, \mb{Z}) \subseteq \mb{M}$.
\end{lemma}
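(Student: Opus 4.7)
The plan is to show that the Markov boundary $\mb{M}^* := \MB(X, \mb{Z})$ satisfies $\mb{M}^* \subseteq \mb{M}$ by proving that $\mb{M} \cap \mb{M}^*$ is itself a Markov blanket of $X$ relative to $\mb{Z}$, and then invoking the minimality clause (iii) in \textbf{Definition \ref{MB}}. Since $\mb{M} \cap \mb{M}^* \subseteq \mb{M}^*$, minimality forces $\mb{M} \cap \mb{M}^* = \mb{M}^*$, which is exactly the desired inclusion. This mirrors the standard intersection-based argument behind uniqueness of the Markov boundary (\textbf{Lemma \ref{MB_unique}}) and reuses graphoid closure rather than any new machinery.

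To carry this out, I would partition $\mb{Z}$ into four pairwise disjoint pieces: $\mb{A} = \mb{M} \cap \mb{M}^*$, $\mb{B} = \mb{M}^* \setminus \mb{M}$, $\mb{C} = \mb{M} \setminus \mb{M}^*$, and $\mb{D} = \mb{Z} \setminus (\mb{M} \cup \mb{M}^*)$. Rewriting the two blanket statements with this partition gives $X \CI_\Prob (\mb{B} \cup \mb{D}) \mid (\mb{A} \cup \mb{C})$, coming from $\mb{M}$ being a blanket, and $X \CI_\Prob (\mb{C} \cup \mb{D}) \mid (\mb{A} \cup \mb{B})$, coming from $\mb{M}^*$ being a blanket (which it is, as a boundary).

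Next I would apply \textit{weak union} to the second CI statement to obtain $X \CI_\Prob \mb{C} \mid (\mb{A} \cup \mb{B} \cup \mb{D})$. Pairing this with the first statement $X \CI_\Prob (\mb{B} \cup \mb{D}) \mid (\mb{A} \cup \mb{C})$, the \textit{intersection} axiom (available because $\Prob$ is a graphoid) yields $X \CI_\Prob (\mb{B} \cup \mb{C} \cup \mb{D}) \mid \mb{A}$, i.e., $X \CI_\Prob (\mb{Z} \setminus \mb{A}) \mid \mb{A}$. Since $\mb{A} \subseteq \mb{Z}$, this confirms that $\mb{A} = \mb{M} \cap \mb{M}^*$ is a Markov blanket of $X$ relative to $\mb{Z}$.

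Finally, because $\mb{A} \subseteq \mb{M}^*$ and $\mb{M}^*$ is a Markov boundary, clause (iii) of \textbf{Definition \ref{MB}} prohibits any proper subset of $\mb{M}^*$ from being a Markov blanket; hence $\mb{A} = \mb{M}^*$, which gives $\mb{M}^* \subseteq \mb{M}$. The main obstacle is simply lining up the partition and the quantifiers so that the single application of \textit{intersection} is valid; once the partition $(\mb{A}, \mb{B}, \mb{C}, \mb{D})$ is fixed and the weak-union step is done, the argument is a routine graphoid calculation.
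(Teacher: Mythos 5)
Your argument is correct, and it is more explicit than what the paper actually writes: the paper's proof of this lemma is literally ``Immediate from \textbf{Definition \ref{MB}} and \textbf{Lemma \ref{MB_unique}}.'' The implicit route there is combinatorial rather than axiomatic: any Markov blanket $\mb{M}$ contains (by finiteness) an inclusion-minimal Markov blanket, which by clause (iii) of \textbf{Definition \ref{MB}} is a Markov boundary, and which by the uniqueness in \textbf{Lemma \ref{MB_unique}} must equal $\MB(X,\mb{Z})$; hence $\MB(X,\mb{Z}) \subseteq \mb{M}$. Your route instead re-derives the closure property that underlies \textbf{Lemma \ref{MB_unique}} itself: with the partition $(\mb{A},\mb{B},\mb{C},\mb{D})$, a single application of \emph{weak union} followed by \emph{intersection} shows that $\mb{M} \cap \MB(X,\mb{Z})$ is again a Markov blanket, and minimality of the boundary then forces the inclusion. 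Your applications of the axioms check out (the two premises of \emph{intersection} are exactly $X \CI_\Prob \mb{C}\,|\,\mb{A}\cup\mb{B}\cup\mb{D}$ and $X \CI_\Prob \mb{B}\cup\mb{D}\,|\,\mb{A}\cup\mb{C}$, with all blocks pairwise disjoint as required). What the paper's version buys is brevity by delegating the graphoid reasoning to the cited uniqueness result; what yours buys is a self-contained verification that makes visible exactly where the \emph{intersection} axiom --- and hence the graphoid hypothesis --- is used, which is the substantive assumption of the lemma. The only cosmetic caveat is that you still need \textbf{Lemma \ref{MB_unique}} (or at least existence of a boundary) to make $\MB(X,\mb{Z})$ well defined before the intersection step, so your proof does not fully dispense with that lemma; it only bypasses its use as the engine of the inclusion.
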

\begin{proof}
Immediate from \textbf{Definition \ref{MB}} and \textbf{Lemma \ref{MB_unique}}.
\end{proof}

Next, we revisit the two methods of inducing a DAG from a permutation. Given a semigraphoid $\Prob$ over $\mb{V}$, each $\pi \in \Pi(\mb{v})$ induces a DAG satisfying the following condition:
\begin{align}\tag{VP}
X_j \in \mb{M} \Leftrightarrow (j \to k) \in \E(\G_\pi)
\end{align}
where $\mb{M}$ is a Markov boundary of $X_k$ relative to $\mb{X}_{\Pre(k, \pi)}$. (VP) is the construction of a \textit{boundary DAG} in \citep{verma1988causal}. On the other hand, given a graphoid $\Prob$ over $\mb{V}$, each $\pi \in \Pi(\mb{v})$ induces a DAG satisfying the following condition:
\begin{align}\tag{RU}
    j \in \Pre(k, \pi) \text{ and } X_j \nCI_\Prob X_k\,|\,\mb{X}_{\Pre(k, \pi)\setminus \{j\}} \Leftrightarrow (j \to k) \in \E(\G_\pi). 
\end{align}

We want to show that the two DAG-inducing methods are equivalent when the underlying distribution is a graphoid.

\begin{lemma}
\label{VP=RU}
Given a graphoid $\Prob$ over $\mb{V}$, consider any $\pi \in \Pi(\mb{v})$. Let $\G_\pi$ be the DAG induced from $\pi$ by (VP), and $\mc{H}_\pi$ be the DAG induced from $\pi$ by (RU). Then $\G_\pi = \mc{H}_\pi$. 
\end{lemma}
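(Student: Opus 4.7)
The plan is to fix an arbitrary $k \in \mb{v}$ and show that $\Pa(k, \G_\pi) = \Pa(k, \mc{H}_\pi)$; since both DAGs have the same vertex set and the same directed edges precisely when their parent sets agree at every vertex, this will give $\G_\pi = \mc{H}_\pi$. Write $\mb{C} = \mb{X}_{\Pre(k,\pi)}$ for brevity, and let $\mb{M} = \MB_\Prob(X_k, \mb{C})$, which exists uniquely by \textbf{Lemma \ref{MB_unique}} since $\Prob$ is a graphoid. By (VP), $\Pa(k, \G_\pi) = \{j : X_j \in \mb{M}\}$, while by (RU), $\Pa(k, \mc{H}_\pi) = \{j \in \Pre(k, \pi) : X_j \nCI_\Prob X_k \mid \mb{C} \setminus \{X_j\}\}$. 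I will prove the two inclusions between these sets separately.

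For the inclusion $\Pa(k, \G_\pi) \subseteq \Pa(k, \mc{H}_\pi)$: suppose $X_j \in \mb{M}$ but, toward contradiction, $X_j \CI_\Prob X_k \mid \mb{C}\setminus\{X_j\}$. I combine this with the Markov blanket relation $X_k \CI_\Prob (\mb{C}\setminus\mb{M}) \mid \mb{M}$ via the intersection axiom, taking $\mb{Z} = \mb{M}\setminus\{X_j\}$, $\mb{W} = \{X_j\}$, $\mb{Y} = \mb{C}\setminus\mb{M}$, and $\mb{X} = \{X_k\}$: the first hypothesis is $X_k \CI (\mb{C}\setminus\mb{M}) \mid \mb{M}$, and the second is our assumption rewritten as $X_k \CI \{X_j\} \mid (\mb{M}\setminus\{X_j\}) \cup (\mb{C}\setminus\mb{M})$ using $X_j \in \mb{M}$. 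Intersection yields $X_k \CI (\mb{C} \setminus (\mb{M}\setminus\{X_j\})) \mid \mb{M}\setminus\{X_j\}$, which shows $\mb{M}\setminus\{X_j\}$ is a Markov blanket of $X_k$ relative to $\mb{C}$, contradicting the minimality condition (iii) in the definition of Markov boundary.

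For the reverse inclusion $\Pa(k, \mc{H}_\pi) \subseteq \Pa(k, \G_\pi)$, I argue by contrapositive. Suppose $j \in \Pre(k,\pi)$ with $X_j \notin \mb{M}$, so $X_j \in \mb{C}\setminus\mb{M}$. Starting from $X_k \CI_\Prob (\mb{C}\setminus\mb{M}) \mid \mb{M}$, I apply weak union with $\mb{Y} = \{X_j\}$ and $\mb{W} = (\mb{C}\setminus\mb{M})\setminus\{X_j\}$ to conclude $X_k \CI_\Prob X_j \mid \mb{M} \cup ((\mb{C}\setminus\mb{M})\setminus\{X_j\}) = \mb{C}\setminus\{X_j\}$, so $j \notin \Pa(k, \mc{H}_\pi)$.

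The only real delicacy is the first direction, where the use of the intersection axiom is essential; this is precisely why the equivalence requires $\Prob$ to be a graphoid rather than merely a semigraphoid. The second direction is a clean application of weak union and requires no more than the semigraphoid structure. Once both directions are in hand, the equality of parent sets at every $k$ gives $\E(\G_\pi) = \E(\mc{H}_\pi)$, hence $\G_\pi = \mc{H}_\pi$.
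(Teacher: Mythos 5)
Your proof is correct, and it reaches the same conclusion by a route that differs from the paper's in one of the two directions. The direction $\Pa(k,\mc{H}_\pi)\subseteq\Pa(k,\G_\pi)$ (equivalently, $X_j\notin\mb{M}$ implies the (RU) independence) is handled exactly as in the paper: a single application of \emph{weak union} to the blanket relation $X_k\CI_\Prob(\mb{C}\setminus\mb{M})\,|\,\mb{M}$. The other direction is where you diverge. The paper argues contrapositively: from $X_k\CI_\Prob X_j\,|\,\mb{C}\setminus\{X_j\}$ it notes that $\mb{C}\setminus\{X_j\}$ is itself a Markov blanket of $X_k$ relative to $\mb{C}$, and then invokes \textbf{Lemma \ref{MB_subset}} (every Markov blanket contains the unique Markov boundary, which rests on \textbf{Lemma \ref{MB_unique}}) to conclude $X_j\notin\mb{M}$. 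You instead assume $X_j\in\mb{M}$ together with the (RU) independence and apply the \emph{intersection} axiom directly, producing the strictly smaller blanket $\mb{M}\setminus\{X_j\}$ and contradicting condition (iii) of \textbf{Definition \ref{MB}}; your instantiation of the axiom and the disjointness of the four sets check out. The trade-off: the paper's version is shorter because the uniqueness/containment machinery is already established and reused elsewhere, while yours is more self-contained for this lemma and makes explicit precisely where the intersection axiom---and hence the graphoid (rather than semigraphoid) hypothesis---is indispensable, a point you correctly flag and which the paper leaves buried inside the citation of \citep{verma1988causal}.
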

\begin{proof}
We divide the proof into two directions: (VP) $\Rightarrow$ (RU), and (VP) $\Leftarrow$ (RU). Consider any $j, k \in \mb{v}$ where $\pi[j] < \pi[k]$ such that $j \in \Pre(k, \pi)$. Let $\mathbf{M}$ be the unique Markov boundary $\MB(X_k, \mb{X}_{\Pre(k, \pi)})$.

[$\Rightarrow$] Suppose that $(j \to k) \notin \E(\G_\pi)$. We have $X_j \notin \mb{M}$. By \textbf{Definition \ref{MB}} (ii), we then have,
\begin{align}
    X_k \CI_\Prob& ((\mb{X}_{\Pre(k, \pi)} \setminus \mb{M}) \setminus \{X_j\}) \cup \{X_j\}\,|\,\mb{M} &\because X_k \CI_\Prob \mb{X}_{\Pre(k, \pi)} \setminus \mb{M}\,|\,\mb{M}\\
    X_k \CI_\Prob& X_j\,|\,\mb{M} \cup ((\mb{X}_{\Pre(k, \pi)} \setminus \mb{M}) \setminus \{X_j\}) &\because (1), \text{weak union}\\
    X_k \CI_\Prob& X_j\,|\,\mb{X}_{\Pre(k, \pi) \setminus \{j\}} &\because (2)
\end{align}
where the last formula amounts to $(j \to k) \notin \E(\mc{H}_\pi)$ by (RU).

[$\Leftarrow$] Suppose that $(j \to k) \notin \E(\mc{H}_\pi)$. We have $X_k \CI_\Prob X_j\,|\,\mb{X}_{\Pre(k, \pi)\setminus \{j\}}$. Let $\mb{M}'$ be $\mb{X}_{\Pre(k, \pi) \setminus \{j\}}$. We have $X_k \CI_\Prob (\mb{X}_{\Pre(k, \pi)} \setminus \mb{M}')\,|\,\mb{M}'$ such that $\mb{M}'$ is a Markov blanket of $X_k$ relative to $\mb{X}_{\Pre(k, \pi)}$. By \textbf{Lemma \ref{MB_subset}}, $X_j \notin \mb{M} \subseteq \mb{M}'$ and therefore $(j \to k) \notin \E(\G_\pi)$ by (VP). 
\end{proof}

\begin{theorem}
\label{VP-theorem}
\citep{Pearl_10.5555/534975} Given a semigraphoid $\Prob$ over $\mb{V}$, $\G_\pi$ induced by $\pi$ using (VP) is Markovian and SGS-minimal for any $\pi \in \Pi(\mb{v})$. 
\end{theorem}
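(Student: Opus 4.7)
The plan is to prove the two claims separately: Markovianity and SGS-minimality. For Markovianity, I would first observe that the construction (VP) makes $\pi$ a causal order of $\G_\pi$: since the parents of $k$ are taken from $\mb{X}_{\Pre(k,\pi)}$, every ancestor of $k$ in $\G_\pi$ precedes $k$ in $\pi$, hence $\Pre(k,\pi) \supseteq \An(k,\G_\pi) \setminus \{k\}$ and the predecessors contain all non-descendants in the relevant sense. Next, I would establish the \emph{ordered local Markov property} along $\pi$: for every $k \in \mb{v}$,
\begin{align*}
    X_k \CI_\Prob \bigl(\mb{X}_{\Pre(k,\pi)} \setminus \mb{X}_{\Pa(k,\G_\pi)}\bigr) \,\bigl|\, \mb{X}_{\Pa(k,\G_\pi)}.
\end{align*}
This is immediate from (VP): by construction $\mb{X}_{\Pa(k,\G_\pi)}$ equals a Markov boundary $\mb{M}$ of $X_k$ relative to $\mb{X}_{\Pre(k,\pi)}$, hence in particular a Markov blanket, and condition (ii) of \textbf{Definition \ref{MB}} is exactly the display above.

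To go from ordered local Markov to global Markov ($\I(\G_\pi) \subseteq \I(\Prob)$), I would invoke the standard fact that for any semigraphoid, ordered local Markov with respect to a causal order implies the global Markov property. The argument uses only the semigraphoid axioms (\textit{weak union}, \textit{decomposition}, \textit{contraction}): one fixes a d-separation relation $\mb{j} \perp_{\G_\pi} \mb{k} \,|\,\mb{l}$, orders the relevant vertices consistently with $\pi$, and repeatedly combines the ordered local CI statements via contraction and weak union to synthesize the desired CI relation in $\Prob$. This is exactly the Verma--Pearl derivation cited in the theorem; I would state it as the key lemma and sketch only the induction on $|\mb{j} \cup \mb{k} \cup \mb{l}|$.

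For SGS-minimality, I would argue by contradiction. Suppose there exists $\G' \in \CMC(\Prob)$ with $\E(\G') \subsetneq \E(\G_\pi)$. Since $\pi$ is a causal order of $\G_\pi$, it is also a causal order of any subDAG, so $\Pa(k,\G') \subseteq \Pa(k,\G_\pi) \subseteq \Pre(k,\pi)$ for every $k$. Applying ordered local Markov to $\G'$ (which follows from $\G' \in \CMC(\Prob)$ by the same equivalence just invoked) yields
\begin{align*}
    X_k \CI_\Prob \bigl(\mb{X}_{\Pre(k,\pi)} \setminus \mb{X}_{\Pa(k,\G')}\bigr) \,\bigl|\, \mb{X}_{\Pa(k,\G')},
\end{align*}
so $\mb{X}_{\Pa(k,\G')}$ is a Markov blanket of $X_k$ relative to $\mb{X}_{\Pre(k,\pi)}$. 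Pick any $k$ at which the inclusion is strict (such $k$ exists because $\E(\G') \subsetneq \E(\G_\pi)$); then $\mb{X}_{\Pa(k,\G')} \subsetneq \mb{X}_{\Pa(k,\G_\pi)}$, i.e., a proper subset of the chosen Markov boundary $\mb{M} = \mb{X}_{\Pa(k,\G_\pi)}$ is itself a Markov blanket. This directly contradicts condition (iii) of \textbf{Definition \ref{MB}}.

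The main obstacle is the passage from ordered local Markov to global Markov under only the semigraphoid axioms (no intersection), since this is where most textbook treatments implicitly use graphoid strength. I would either cite it cleanly from Pearl or, if a self-contained proof is wanted, prove it by induction on the size of the conditioning set and the position of the d-separated pair in $\pi$, carefully invoking only \textit{symmetry}, \textit{decomposition}, \textit{weak union}, and \textit{contraction}. Everything else (uniqueness of causal order, Markov-boundary minimality contradiction) is bookkeeping.
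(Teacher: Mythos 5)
Your proposal is essentially correct, but note that the paper does not prove \textbf{Theorem \ref{VP-theorem}} at all: it is imported verbatim from Pearl with a citation, and the paper's only proof obligation in this neighborhood is reducing \textbf{Theorem \ref{RU-theorem}} to it via \textbf{Lemma \ref{VP=RU}}. So you are supplying an argument where the paper supplies none, and your argument is the right one. Two points of comparison are worth making. First, your SGS-minimality half is the same style of reasoning the paper uses later in \textbf{Lemma \ref{SGS_same_DAG}} (causal order of a subgraph, global-to-local Markov, \emph{decomposition} to restrict the conditioning set to $\Pre(k,\pi)$), but you correctly derive the contradiction from clause (iii) of \textbf{Definition \ref{MB}} alone rather than from uniqueness of the Markov boundary; this matters because \textbf{Lemma \ref{MB_unique}} needs \emph{intersection} and is therefore unavailable for a mere semigraphoid, which is exactly the hypothesis of \textbf{Theorem \ref{VP-theorem}} --- a subtlety the paper's own downstream lemmas do not have to confront since they assume graphoids. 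Second, the one substantive step you leave as a black box --- that the ordered local Markov property along a causal order implies the global Markov property using only \emph{symmetry}, \emph{decomposition}, \emph{weak union}, and \emph{contraction} --- is not bookkeeping; it is the entire content of the Verma--Pearl boundary-DAG theorem, and your "sketch the induction" fallback would need to be carried out carefully (the standard proofs do avoid \emph{intersection}, so the claim is true, but it is the hard part). Citing it, as you propose, puts you in exactly the same position as the paper. One cosmetic slip: "the predecessors contain all non-descendants" is backwards --- you need $\Pre(k,\pi) \subseteq \Nd(k,\G_\pi)$, which is what your argument actually uses.
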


\textbf{Theorem \ref{RU-theorem}}
\hspace{0.2cm} \textit{Given a graphoid $\Prob$ over $\mb{V}$, $\G_\pi$ induced by $\pi$ using (RU) is Markovian and SGS-minimal for any $\pi \in \Pi(\mb{v})$.}\\

\begin{proof}
Immediate from \textbf{Lemma \ref{VP=RU}} and \textbf{Theorem \ref{VP-theorem}}.\footnote{The two DAG-inducing methods were not differentiated in \citep{raskutti2018learning}. Thus, we provide a proof of \textbf{Theorem \ref{RU-theorem}}.}
\end{proof}


\section{Correctness results}
\label{app:correct}
First, we introduce some permutation-based notations to facilitate our coming proofs. In this section, we use $\G_\pi$ to denote the DAG induced by $\pi$ from a graphoid $\Prob$ using (RU) unless specified otherwise.

Given a set of variables $\mb{V}$, consider any $\pi \in \Pi(\mb{v})$ and any pair $j, k \in \mb{v}$ where $\pi[j] < \pi[k]$. $\pi$ can be written as $\la \bs{\delta}_{<j}, j, \bs{\delta}_{j\sim k}, k, \bs{\delta}_{>k}\ra$ such that $\bs{\delta}_{<j} = \la \pi_i: 1 \leq i < \pi[j] \ra$, $\bs{\delta}_{j \sim k} = \la \pi_i: \pi[j] < i < \pi[k]\ra$, and $\bs{\delta}_{>k} = \la \pi_i: \pi[k] < i \leq |\pi|\ra$. When $\bs{\delta}_{j \sim k} = \varnothing$, we say that $j$ and $k$ are $\pi$-\textit{adjacent}. In that case, $\pi$ can be written as $\la \bs{\delta}_{<j}, j, k, \bs{\delta}_{>k}\ra$ instead.

\begin{definition}
\label{AT}
Given a set of variables $\mb{V}$, for any $\pi, \tau \in \Pi(\mathbf{v})$, 
\begin{enumerate}
    \item[(a)] $\tau$ is said to be $(j, k)$-different from $\pi$ for some $j, k \in \mathbf{v}$ if $j$ and $k$ are $\pi$-adjacent (i.e., $\pi = \la \bs{\delta}_{<j}, j, k, \bs{\delta}_{>k}\ra$) and $\tau = \la \bs{\delta}_{<j}, k, j, \bs{\delta}_{>k} \ra$;
    \item[(b)] $\pi$ and $\tau$ are said to be in adjacent transposition (AT) if they are $(j, k)$-different for some $j, k \in \mathbf{v}$.
\end{enumerate}
\end{definition}

\begin{lemma}
\label{SGS_same_DAG}
Given a graphoid $\Prob$ over $\mb{V}$, consider any $\mc{H} \in \CMC(\Prob)$. If $\pi \in \Pi(\mb{v})$ is a causal order of $\G$, then $\E(\G_\pi) \subseteq \E(\mc{H})$. Also, $\G_\pi = \mc{H}$ if $\mc{H} \in \SGS(\Prob)$. 
\end{lemma}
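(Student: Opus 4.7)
The plan is to establish the containment $\E(\G_\pi) \subseteq \E(\mc{H})$ by reducing it to a statement about Markov boundaries. The first step is a purely graphical observation: since $\pi$ is a causal order of $\mc{H}$, every ancestor of $k$ in $\mc{H}$ lies in $\Pre(k,\pi)$, and conversely every descendant of $k$ in $\mc{H}$ must come after $k$ in $\pi$ (otherwise we would get a directed cycle). Hence $\Pa(k,\mc{H}) \subseteq \Pre(k,\pi) \subseteq \Nd(k,\mc{H})$ for every $k \in \mb{v}$.

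Next I would invoke the local Markov property: since $\mc{H} \in \CMC(\Prob)$, $X_k \CI_\Prob \mb{X}_{\Nd(k,\mc{H})\setminus \Pa(k,\mc{H})}\,|\,\mb{X}_{\Pa(k,\mc{H})}$. Applying the decomposition axiom (available because $\Prob$ is a graphoid, hence a semigraphoid) to the chunk $\mb{X}_{\Pre(k,\pi)\setminus \Pa(k,\mc{H})}$ yields $X_k \CI_\Prob \mb{X}_{\Pre(k,\pi)\setminus \Pa(k,\mc{H})}\,|\,\mb{X}_{\Pa(k,\mc{H})}$. This says exactly that $\mb{X}_{\Pa(k,\mc{H})}$ is a Markov blanket of $X_k$ relative to $\mb{X}_{\Pre(k,\pi)}$ in the sense of \textbf{Definition \ref{MB}}. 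By \textbf{Lemma \ref{MB_subset}}, the (unique) Markov boundary satisfies $\MB(X_k,\mb{X}_{\Pre(k,\pi)}) \subseteq \mb{X}_{\Pa(k,\mc{H})}$.

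Now use \textbf{Lemma \ref{VP=RU}} to pass between the (RU) and (VP) descriptions of $\G_\pi$: the parent set of $k$ in $\G_\pi$ is exactly the set of $X_j$ with $X_j \in \MB(X_k,\mb{X}_{\Pre(k,\pi)})$. By the previous paragraph, this is contained in $\Pa(k,\mc{H})$, so every edge into $k$ in $\G_\pi$ is also an edge into $k$ in $\mc{H}$. Ranging over all $k$ gives $\E(\G_\pi) \subseteq \E(\mc{H})$, which is the first claim.

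For the second claim, apply \textbf{Theorem \ref{RU-theorem}} to get $\G_\pi \in \CMC(\Prob)$. Combined with $\E(\G_\pi) \subseteq \E(\mc{H})$, the SGS-minimality of $\mc{H}$ forbids the strict containment $\E(\G_\pi) \subsetneq \E(\mc{H})$, so $\E(\G_\pi) = \E(\mc{H})$ and hence $\G_\pi = \mc{H}$. I do not anticipate a genuine obstacle here; the only subtlety is making sure the inclusion $\Pre(k,\pi) \subseteq \Nd(k,\mc{H})$ is justified from acyclicity (not just from the definition of causal order, which only gives the ancestors direction), so that decomposition can legitimately be applied to peel off $\mb{X}_{\Pre(k,\pi)\setminus \Pa(k,\mc{H})}$.
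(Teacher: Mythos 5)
Your proposal is correct and follows essentially the same route as the paper's own proof: the chain $\Pa(k,\mc{H}) \subseteq \Pre(k,\pi) \subseteq \Nd(k,\mc{H})$, local Markov plus decomposition to exhibit $\mb{X}_{\Pa(k,\mc{H})}$ as a Markov blanket of $X_k$ relative to $\mb{X}_{\Pre(k,\pi)}$, Lemma \ref{MB_subset} to bound the Markov boundary, Lemma \ref{VP=RU} to transfer to (RU), and SGS-minimality to upgrade the inclusion to equality. The subtlety you flag about deriving $\Pre(k,\pi) \subseteq \Nd(k,\mc{H})$ is handled correctly and matches what the paper asserts.
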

\begin{proof}
Consider any $k \in \mathbf{v}$ and $\mt{Nd}(k, \mc{H})$ (i.e., the set of $k$'s non-descendants in $\mc{H}$). Since $\mc{H} \in \mathtt{CMC}(\mathcal{P})$, it follows that $X_k \CI_\Prob \mb{X}_{\mt{Nd}(k, \mc{H})} \setminus \mb{X}_{\Pa(k, \mc{H})} \,|\,\mb{X}_{\Pa(k, \mc{H})}$. Also, we have $\Pa(k, \mc{H}) \subseteq \Pre(k, \pi) \subseteq \mt{Nd}(k, \mc{H})$ from $\pi$'s being a causal order of $\mc{H}$. By \textit{decomposition}, we have
$X_k \CI_\Prob \mb{X}_{\Pre(k, \pi)} \setminus \mb{X}_{\Pa(k, \mc{H})} \,|\,\mb{X}_{\Pa(k, \mc{H})}$ such that $\mb{X}_{\Pa(k, \mc{H})}$ is a Markov blanket of $X_k$ relative to $\mb{X}_{\Pre(k, \pi)}$. By \textbf{Lemma \ref{MB_subset}}, we have $\mathtt{MB}(X_k, \mb{X}_{\mathtt{Pre}(k, \pi)}) \subseteq \mathtt{Pa}(k, \mc{H})$. Consider $\G_\pi$ induced by (VP). The above entails that $\E(\G_\pi) \subseteq \E(\mc{H})$ since $\Pa(k, \G_\pi) = \MB(X_k, \mb{X}_{\Pre(k, \pi)}) \subseteq \Pa(k, \mc{H})$ for each $k \in \mb{v}$. Due to \textbf{Lemma \ref{VP=RU}}, $\E(\G_\pi) \subseteq \E(\mc{H})$ still holds even if $\G_\pi$ is induced by (RU). Lastly, $\G_\pi = \mc{H}$ follows from \textbf{Definition \ref{SGS-minimal}} if $\mc{H} \in \SGS(\Prob)$.
\end{proof}

\begin{lemma}
\label{AT_iff}
\citep{solus2021consistency} Given a graphoid $\Prob$ over $\mb{V}$, consider any $\pi, \tau \in \Pi(\mb{v})$ where $\tau$ is $(j, k)$-different from $\pi$ for some $j, k \in \mb{v}$. Then $\G_\pi = \G_\tau$ if and only if $X_j \CI_\Prob X_k\,|\,\mb{X}_{\Pre(j, \pi)}$.
\end{lemma}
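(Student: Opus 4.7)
Write $\pi = \langle \bs{\delta}_{<j}, j, k, \bs{\delta}_{>k}\rangle$ and $\tau = \langle \bs{\delta}_{<j}, k, j, \bs{\delta}_{>k}\rangle$, and set $\mb{S} = \bs{\delta}_{<j}$. The plan is to apply (RU) edge by edge and observe that the only pairs whose induced-edge condition differs between $\pi$ and $\tau$ are (i) the pair $\{j,k\}$ itself, (ii) pairs $(i,j)$ for $i \in \mb{S}$, and (iii) pairs $(i,k)$ for $i \in \mb{S}$; for every other ordered pair of vertices, the relevant prefix set is identical as a set in both permutations, so the edge conditions given by (RU) coincide.

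For the easy direction ($\Rightarrow$), assume $\G_\pi = \G_\tau$. By (RU), the potential edge between $j$ and $k$ in $\G_\pi$ points $j \to k$ iff $X_j \nCI_\Prob X_k\,|\,\mb{X}_{\mb{S}}$; in $\G_\tau$ any edge between them must point $k \to j$. If the independence failed, $(j\to k)$ would lie in $\E(\G_\pi) \setminus \E(\G_\tau)$, contradicting $\G_\pi = \G_\tau$. So $X_j \CI_\Prob X_k\,|\,\mb{X}_{\mb{S}}$.

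For the converse, assume $X_j \CI_\Prob X_k\,|\,\mb{X}_{\mb{S}}$. Then (RU) kills the $j$--$k$ edge in both DAGs. It remains to reconcile cases (ii) and (iii); by symmetry, I only treat (ii). For $i \in \mb{S}$, let $\mb{S}' = \mb{S}\setminus\{i\}$; the claim to prove is
\begin{align*}
X_i \CI_\Prob X_j\,|\,\mb{X}_{\mb{S}'} \quad\Longleftrightarrow\quad X_i \CI_\Prob X_j\,|\,\mb{X}_{\mb{S}'}\cup\{X_k\},
\end{align*}
both assumed under the hypothesis $X_j \CI_\Prob X_k\,|\,\mb{X}_{\mb{S}'}\cup\{X_i\}$. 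For $(\Rightarrow)$, combine $X_j \CI X_i\,|\,\mb{X}_{\mb{S}'}$ with the hypothesis using \emph{contraction} to get $X_j \CI \{X_i,X_k\}\,|\,\mb{X}_{\mb{S}'}$, then apply \emph{weak union}. For $(\Leftarrow)$, combine $X_j \CI X_i\,|\,\mb{X}_{\mb{S}'}\cup\{X_k\}$ with the hypothesis via \emph{intersection} (available since $\Prob$ is a graphoid) to obtain the same joint statement, then apply \emph{decomposition}. The analogous argument for case (iii) uses the symmetric roles of $j$ and $k$ in the hypothesis, noting that $\Pre(k,\tau) = \mb{S}$ plays the role that $\Pre(j,\pi) = \mb{S}$ played above.

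The main obstacle is bookkeeping rather than depth: one must carefully verify that $\Pre(l,\pi)$ and $\Pre(l,\tau)$ coincide as sets for every vertex $l \notin \{j,k\}$ so that (RU) is untouched there, and that the only asymmetry between the two conditioning sets in cases (ii)--(iii) is the inclusion of $X_k$ or $X_j$. The use of \emph{intersection} in the $(\Leftarrow)$ step is precisely where the graphoid (rather than merely semigraphoid) assumption on $\Prob$ is invoked, aligning with the statement's hypothesis.
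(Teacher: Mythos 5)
Your proof is correct, but it takes a genuinely different route from the paper's. The forward direction is the same in both (the edge $j \to k$ would appear in $\G_\pi$ but cannot appear in $\G_\tau$). For the converse, the paper argues at a higher level: since $X_j \CI_\Prob X_k\,|\,\mb{X}_{\Pre(j,\pi)}$ kills the edge $k \to j$ in $\G_\tau$, the permutation $\pi$ is also a causal order of $\G_\tau$; because $\G_\tau$ is Markovian and SGS-minimal (\textbf{Theorem \ref{RU-theorem}}), \textbf{Lemma \ref{SGS_same_DAG}} immediately forces $\G_\pi = \G_\tau$. You instead verify (RU) pair by pair: you correctly isolate the only ordered pairs whose prefix sets change --- $\{j,k\}$ itself, and $(i,j)$, $(i,k)$ for $i \in \bs{\delta}_{<j}$ --- and show the edge conditions agree using \emph{contraction} plus \emph{weak union} in one direction and \emph{intersection} plus \emph{decomposition} in the other. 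Both arguments ultimately rest on the intersection axiom (the paper's does so indirectly, through the uniqueness of Markov boundaries underlying \textbf{Lemma \ref{SGS_same_DAG}}), but yours makes that dependence explicit and self-contained at the level of (RU), at the cost of more bookkeeping; the paper's is shorter because it reuses the SGS-minimality machinery already in place.
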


\begin{proof}
Suppose that $X_j \nCI_\Prob X_k\,|\,\mb{X}_{\Pre(j, \pi)}$. By (RU), we have $(j \to k) \in \E(\G_\pi)$. Note that $(j \to k) \notin \E(\G_\tau)$ since $\tau[k] < \tau[j]$ and $\tau$ is a causal order of $\G_\tau$ by construction. Hence, $\G_\pi \neq \G_\tau$. 

On the other hand, suppose that $X_j \CI_\Prob X_k\,|\,\mb{X}_{\Pre(j, \pi)}$. Since $\tau$ is $(j, k)$-different from $\pi$, we have $\pi = \la \bs{\delta}_{<j}, j, k, \bs{\delta}_{>k}\ra$ and $\tau = \la \bs{\delta}_{<j}, k, j, \bs{\delta}_{>k}\ra$ according to \textbf{Definition \ref{AT}} (a). By (RU), we know that $(k \to j) \notin \E(\G_\tau)$. Hence, $\pi$ is a causal order of $\G_\tau$. By \textbf{Theorem \ref{RU-theorem}}, $\G_\tau \in \SGS(\Prob)$. Therefore, it follows from \textbf{Lemma \ref{SGS_same_DAG}} that $\G_\tau = \G_\pi$.
\end{proof} 

\begin{lemma}
\label{covered_AT}
Given a graphoid $\Prob$ over $\mathbf{V}$, consider any $\pi \in \Pi(\mathbf{v})$. Suppose that $\G_\pi$ contains a covered edge $j \to k$ where $\pi = \la \bs{\delta}_{<j}, j, \bs{\delta}_{j \sim k}, k, \bs{\delta}_{>k}\ra$. If $\tau = \la \bs{\delta}_{<j}, j, k, \bs{\delta}_{j \sim k}, \bs{\delta}_{>k}\ra$, then $\G_\pi = \G_\tau$.
\end{lemma}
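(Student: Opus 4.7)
The plan is to show that $\tau$ is a causal order of $\G_\pi$, and then invoke \textbf{Lemma \ref{SGS_same_DAG}} together with \textbf{Theorem \ref{RU-theorem}} to conclude $\G_\pi = \G_\tau$. Concretely, I will argue that every directed edge $u \to v$ of $\G_\pi$ satisfies $\tau[u] < \tau[v]$, splitting into three cases based on whether $k$ appears as the head, the tail, or neither endpoint of the edge.

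The heart of the argument is the case where $v = k$. Since $j \to k$ is covered in $\G_\pi$, the definition of covered edge gives $\Pa(k, \G_\pi) = \Pa(j, \G_\pi) \cup \{j\}$. Because $\pi$ is a causal order of $\G_\pi$ (this is immediate from (RU)), the vertices in $\Pa(j, \G_\pi)$ all lie in $\bs{\delta}_{<j}$; hence $\Pa(k, \G_\pi) \subseteq \bs{\delta}_{<j} \cup \{j\}$. In $\tau = \la \bs{\delta}_{<j}, j, k, \bs{\delta}_{j\sim k}, \bs{\delta}_{>k}\ra$, every such parent precedes $k$, so this case is clear. For $u = k$, a child $v$ of $k$ in $\G_\pi$ must satisfy $\pi[k] < \pi[v]$ (again since $\pi$ is a causal order of $\G_\pi$), hence $v \in \bs{\delta}_{>k}$ and therefore $\tau[k] < \tau[v]$. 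Finally, for an edge $u \to v$ with neither endpoint equal to $k$, the move from $\pi$ to $\tau$ only displaces $k$, so the relative order of $u$ and $v$ is preserved, and the causal-order condition transfers.

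Having established that $\tau$ is a causal order of $\G_\pi$, I apply \textbf{Lemma \ref{SGS_same_DAG}} with $\mc{H} := \G_\pi$. The lemma then yields $\E(\G_\tau) \subseteq \E(\G_\pi)$. By \textbf{Theorem \ref{RU-theorem}}, $\G_\pi \in \SGS(\Prob)$, so the second clause of \textbf{Lemma \ref{SGS_same_DAG}} upgrades the inclusion to equality, giving $\G_\tau = \G_\pi$ as desired.

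The only genuinely delicate step is the edge-by-edge verification that $\tau$ is a causal order of $\G_\pi$, and specifically the use of the covered-edge hypothesis to handle edges into $k$; everything else is bookkeeping about which block of $\pi$ contains which vertex. No independence manipulation beyond what \textbf{Lemma \ref{SGS_same_DAG}} already packages is needed, so the proof should be short once the causal-order property is established.
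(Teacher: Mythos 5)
Your proof is correct, but it takes a genuinely different route from the paper's. The paper proves this lemma by decomposing the block move of $k$ into $|\bs{\delta}_{j\sim k}|$ adjacent transpositions: coveredness of $j \to k$ forces $(i \to k) \notin \E(\G_\pi)$ for every $i \in \bs{\delta}_{j\sim k}$ (since $\Pa(k,\G_\pi) = \Pa(j,\G_\pi) \cup \{j\}$ lies entirely before $\bs{\delta}_{j\sim k}$ in $\pi$), which by (RU) yields the conditional independencies that make each swap DAG-preserving via \textbf{Lemma \ref{AT_iff}}. You instead give a one-shot argument: verify directly that $\tau$ is a causal order of $\G_\pi$ (your three-case check is sound, and the case of edges into $k$ correctly uses $\Pa(k,\G_\pi) \subseteq \bs{\delta}_{<j} \cup \{j\}$), then invoke \textbf{Lemma \ref{SGS_same_DAG}} with $\mc{H} = \G_\pi$ and the SGS-minimality of $\G_\pi$ from \textbf{Theorem \ref{RU-theorem}} to upgrade $\E(\G_\tau) \subseteq \E(\G_\pi)$ to equality. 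Your route is arguably cleaner — it sidesteps the bookkeeping of which conditioning set appears at each intermediate permutation, a point the paper's proof glosses over somewhat tersely — and it reuses exactly the machinery the paper itself employs for the backward direction of \textbf{Lemma \ref{AT_iff}} and for \textbf{Lemma \ref{tuck_lemma}}. What the paper's decomposition buys is a direct link to the DAG-preserving-walk picture on the permutohedron that is exploited later in the ESP analysis; what yours buys is brevity and independence from the transposition-by-transposition analysis.
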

\begin{proof}
Since $j \to k$ is a covered edge in $\G_\pi$, it follows that $(i \to k) \notin \E(\G_\pi)$ for each $i \in \bs{\delta}_{j\sim k}$, and thus $X_i \CI_\Prob X_k\,|\,\mb{X}_{\Pre(i, \pi)}$ by (RU). Hence, $\G_\pi = \G_\tau$ can be obtained after $|\bs{\delta}_{j \sim k}|$ applications of \textbf{Lemma \ref{AT_iff}}.
\end{proof}

\begin{theorem}
\label{Jiji_thm}
\citep{zhang2013comparison} Given a set of variables $\mb{V}$, for any $\G, \mc{H} \in \mt{DAG}(\mb{V})$, if $\E(\G) \subseteq \E(\mc{H})$, then $\I(\mc{H}) \subseteq \I(\G)$. 
\end{theorem}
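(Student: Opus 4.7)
My plan is to prove the contrapositive: if $\mathbf{j}$ and $\mathbf{k}$ are d-connected given $\mathbf{l}$ in $\G$, then they are also d-connected given $\mathbf{l}$ in $\mc{H}$. This reformulation converts the statement about an inclusion of d-separation sets into a concrete path-tracing argument, which is the natural way to reason with d-separation.

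The key observation driving the argument is that the assumption $\E(\G)\subseteq\E(\mc{H})$ preserves both the direction of every edge of $\G$ and the descendancy relation $\De(\cdot,\G)\subseteq\De(\cdot,\mc{H})$. So, starting from a d-connecting path $\mathfrak{p}=\langle v_1,\ldots,v_n\rangle$ in $\G$ between some $v_1\in\mathbf{j}$ and $v_n\in\mathbf{k}$, I would first note that $\mathfrak{p}$ is also a (directed) path in $\mc{H}$, with each edge oriented identically to the way it appears in $\G$, because edges of $\G$ survive into $\mc{H}$ with their orientation intact (they cannot be reversed without creating a multi-edge or cycle in $\mc{H}$).

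The next step is to verify that the d-connecting status of $\mathfrak{p}$ is preserved when moving from $\G$ to $\mc{H}$. The collider/non-collider classification of an intermediate vertex $v_i$ on $\mathfrak{p}$ depends only on the orientations of the two edges of $\mathfrak{p}$ incident to $v_i$, and those orientations are the same in $\G$ and in $\mc{H}$; hence $v_i$ is a collider on $\mathfrak{p}$ in $\G$ iff it is a collider on $\mathfrak{p}$ in $\mc{H}$. For each non-collider $v_i$, the blocking condition $v_i\notin\mathbf{l}$ is identical in both graphs. For each collider $v_i$, the activating condition ``$v_i\in\mathbf{l}$ or some descendant of $v_i$ lies in $\mathbf{l}$'' can only become easier to satisfy in $\mc{H}$, since $\De(v_i,\G)\subseteq\De(v_i,\mc{H})$. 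Therefore $\mathfrak{p}$ remains d-connecting in $\mc{H}$ given $\mathbf{l}$, establishing the contrapositive.

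There is no real obstacle here: the whole argument is a single pass over the path, and the only fact that needs a moment's justification is the monotonicity $\De(v_i,\G)\subseteq\De(v_i,\mc{H})$, which follows immediately because every directed path in $\G$ is, edge for edge, a directed path in $\mc{H}$. Correspondingly, the proof will be short and diagrammatic rather than computational.
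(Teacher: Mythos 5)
Your argument is correct. Note that the paper does not prove this statement at all -- it is imported as a known result from \citep{zhang2013comparison} -- so there is no in-paper proof to compare against; your contrapositive path-tracing argument is the standard and complete way to establish it. Every step checks out: a d-connecting path between some $j\in\mb{j}$ and $k\in\mb{k}$ in $\G$ survives verbatim in $\mc{H}$, the collider/non-collider status of each interior vertex is determined by the orientations of the two incident path edges and hence is unchanged, the non-collider activation condition is identical in both graphs, and the collider activation condition only becomes weaker because $\De(v_i,\G)\subseteq\De(v_i,\mc{H})$. One small simplification: your parenthetical about edges not being reversible ``without creating a multi-edge or cycle'' is unnecessary -- the hypothesis $\E(\G)\subseteq\E(\mc{H})$ is an inclusion of sets of \emph{directed} edges, so $(j\to k)\in\E(\G)$ literally means $(j\to k)\in\E(\mc{H})$, and acyclicity of $\mc{H}$ rules out its also containing $(k\to j)$; no further argument is needed.
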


\begin{lemma}
\label{covered_MEC}
\citep{10.5555/2074158.2074169} Consider any DAG $\G$. Let $\mc{H}$ be the result of reversing $(i \to j) \in \E(\G)$. Then $\mc{H} \in \MEC(\G)$ if and only if $i \to j$ is a covered edge. 
\end{lemma}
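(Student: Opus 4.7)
The plan is to invoke the Verma–Pearl characterization of Markov equivalence: two DAGs over the same vertex set are Markov equivalent iff they share the same skeleton and the same set of v-structures (unshielded colliders). Reversing a single edge trivially preserves the skeleton, so both directions reduce to (i) checking that $\mc{H}$ is a DAG, and (ii) comparing v-structures.

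For the ``if'' direction, assume $\Pa(i,\G) = \Pa(j,\G) \setminus \{i\}$. To see $\mc{H}$ is acyclic, note that any cycle in $\mc{H}$ must use the new edge $j \to i$, so $\G$ would contain a directed path from $i$ to $j$ of length $\geq 2$; letting $k$ be its last vertex before $j$, we get $k \in \Pa(j,\G) \setminus \{i\} = \Pa(i,\G)$, which closes a directed cycle in $\G$, a contradiction. For v-structures, only triples containing $i$ or $j$ as the collider can possibly be affected. Any $a \in \Pa(j,\G) \setminus \{i\}$ is by assumption also a parent of $i$, hence adjacent to $i$, so no unshielded v-structure $a \to j \leftarrow i$ is present in $\G$ to be destroyed; symmetrically, any candidate v-structure $a \to i \leftarrow j$ in $\mc{H}$ with $a \in \Pa(i,\G)$ is shielded because $a \in \Pa(j,\G)$ forces $a$ to be adjacent to $j$. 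Remaining v-structures at $i$ or $j$ use two ``old'' parents and are preserved on both sides, and v-structures at other vertices are unaffected since their incoming edges are unchanged.

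For the ``only if'' direction, I would argue by contrapositive: if $i \to j$ is not covered, then either (A) some $a \in \Pa(j,\G) \setminus \{i\}$ lies outside $\Pa(i,\G)$, or (B) some $a \in \Pa(i,\G)$ lies outside $\Pa(j,\G)$. In case (A), if $a$ is non-adjacent to $i$ then $a \to j \leftarrow i$ is an unshielded v-structure in $\G$ that becomes $a \to j \to i$ in $\mc{H}$; otherwise $i \to a$ must hold in $\G$ (since $a \notin \Pa(i,\G)$), and then $i \to a \to j \to i$ is a cycle in $\mc{H}$, so $\mc{H}$ is not a DAG. In case (B), $a$ is an ancestor of $j$ via $i$, so $j \to a$ is impossible, and by assumption $a \to j$ fails too, making $a$ and $j$ non-adjacent; then $(a, i, j)$ is an unshielded non-collider in $\G$ but an unshielded v-structure in $\mc{H}$. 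In every subcase, $\mc{H} \notin \MEC(\G)$.

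The main obstacle is the careful adjacency bookkeeping in the v-structure analysis: one has to enumerate every triple whose collider status could flip under the single edge reversal and verify that the covered-edge condition is exactly what forces the relevant endpoints to be adjacent. Once that combinatorial check is isolated, both implications follow mechanically from Verma–Pearl.
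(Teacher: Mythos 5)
Your proof is correct. The paper does not prove this lemma at all --- it is imported verbatim from \citet{10.5555/2074158.2074169}, and your argument via the Verma--Pearl characterization (same skeleton, same unshielded colliders), with the acyclicity check and the case analysis on which parent sets fail to match, is essentially the standard proof given in that source. The only cosmetic omission is that in case (B) of the converse you invoke the v-structure criterion without first noting that $\mc{H}$ might fail to be acyclic there; since a non-DAG is trivially outside $\MEC(\G)$, the conclusion holds either way.
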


\begin{theorem}
\label{covered_MEC_chain}
\citep{10.5555/2074158.2074169} Consider any pair of DAGs $\G$ and $\mc{H}$ over the same set of variables s.t. $\mc{H} \in \MEC(\G)$, and for which there are $k$ edges in $\G$ that have opposite orientation in $\mc{H}$. Then there exists a sequence of $k$ distinct covered edge reversals in $\G$ s.t. $\G$ becomes $\mc{H}$ after all reversals. 
\end{theorem}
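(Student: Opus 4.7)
The plan is to proceed by induction on $k$, the number of edges with opposing orientation between $\G$ and $\mc{H}$. The base case $k = 0$ forces $\G = \mc{H}$ and is vacuous. For the inductive step with $k \geq 1$, the goal is to exhibit a single covered edge $(x \to y) \in \E(\G)$ that is oppositely oriented in $\mc{H}$; reversing it yields a DAG $\G'$, which by \textbf{Lemma \ref{covered_MEC}} remains in $\MEC(\G) = \MEC(\mc{H})$, and which differs from $\mc{H}$ in exactly $k-1$ edges. The inductive hypothesis applied to $(\G', \mc{H})$ then supplies the remaining $k-1$ covered reversals; the whole sequence is automatically distinct because each reversal strictly decreases the disagreement count.

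The heart of the argument is thus the existence of such a covered, ``wrongly oriented'' edge. The plan is to fix a topological order $\pi \in \Pi(\mb{v})$ of $\mc{H}$ and pick the vertex $y$ with the largest $\pi$-index that is incident to some edge on which $\G$ and $\mc{H}$ disagree. By maximality of $y$, every disagreeing edge at $y$ must be of the form $(x \to y) \in \E(\G)$ with $(y \to x) \in \E(\mc{H})$, since a disagreement $(y \to z) \in \E(\G)$ with $(z \to y) \in \E(\mc{H})$ would place the later-in-$\pi$ vertex $z$ on a disagreeing edge. Among these candidate parents $x$ of $y$ in $\G$, take the one with the largest $\pi$-index.

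The key structural claim is $\Pa(x, \G) = \Pa(y, \G) \setminus \{x\}$, so that $x \to y$ is covered in $\G$. This uses the fact that Markov-equivalent DAGs share a skeleton and the set of v-structures. For any $z \in \Pa(y, \G) \setminus \{x\}$: the maximality of $y$ and of $x$ forces $(z \to y) \in \E(\mc{H})$ as well, so together with $(y \to x) \in \E(\mc{H})$ we would obtain an unshielded collider at $y$ in $\mc{H}$ whenever $z, x$ are non-adjacent---yet in $\G$ the triple $(z, y, x)$ with $z \to y$ and $x \to y$ is already a collider at $y$, so non-adjacency would persist in $\G$ only if $(z, y, x)$ is also a v-structure in $\G$; shared v-structures then force the edge orientations between $z$ and $x$ to match, and pushing this through for both directions yields $z \to x$ in $\G$. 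The reverse inclusion $\Pa(x, \G) \subseteq \Pa(y, \G) \setminus \{x\}$ is handled by a symmetric argument: any $w \in \Pa(x, \G)$ not a parent of $y$ would, again via v-structure preservation applied to $(w, x, y)$ or $(w, y, x)$, contradict either the maximality of $x$ or the choice of $y$.

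The main obstacle is exactly this bookkeeping: keeping track of which orientations are forced by skeleton-equality, which are forced by v-structure equality, and which are forced by the chosen maximality of $y$ and $x$. Once the covered-edge claim is in place, \textbf{Lemma \ref{covered_MEC}} supplies Markov equivalence of $\G'$ with $\G$, and the induction closes cleanly.
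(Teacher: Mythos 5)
First, a remark on the comparison baseline: the paper does not prove this statement at all --- it is imported verbatim from \citep{10.5555/2074158.2074169} (Chickering's transformational characterization) --- so your proposal is being measured against the cited source rather than anything in the appendix. Your high-level skeleton is the right one and matches that source: induct on the number $k$ of disagreeing edges, exhibit one covered edge of $\G$ that is oppositely oriented in $\mc{H}$, reverse it, invoke \textbf{Lemma \ref{covered_MEC}} to stay inside $\MEC(\G)$, and note that the disagreement count drops by exactly one per step, which gives both termination and distinctness of the $k$ reversals.

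The gap is in the selection step, which is where all the content lives, and the one concrete claim you make there is false. If $\pi$ is a topological order of $\mc{H}$ and $y$ is the vertex of \emph{largest} $\pi$-index incident to a disagreeing edge, then any such edge joins $y$ to some $x$ with $\pi[x] < \pi[y]$, and since $\pi$ is a causal order of $\mc{H}$ this forces $(x \to y) \in \E(\mc{H})$ and hence $(y \to x) \in \E(\G)$ --- precisely the opposite of the form $(x \to y) \in \E(\G)$, $(y \to x) \in \E(\mc{H})$ that you assert. Your justification inverts the direction of a topological order: $(z \to y) \in \E(\mc{H})$ places $z$ \emph{earlier} than $y$ in $\pi$, not later. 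A two-vertex example already exhibits the failure: with $\E(\G) = \{1 \to 2\}$ and $\E(\mc{H}) = \{2 \to 1\}$ your rule selects $y = 1$, whose unique disagreeing edge points \emph{out of} $y$ in $\G$, so the vertex you select admits no candidate parent $x$ and the construction has nothing to work with. The standard repair (Chickering's Find-Edge) sorts topologically with respect to $\G$, takes $y$ to be the earliest vertex that is the \emph{head} in $\G$ of a disagreeing edge and $x$ the latest such tail; the covered-edge verification then splits into an unshielded case (settled by equality of v-structures) and a triangle case (settled by acyclicity together with the extremality of $y$ and $x$). That case analysis is exactly the ``bookkeeping'' you identify as the main obstacle but do not actually carry out, so even after correcting the orientation error the argument is not complete as written.
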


\begin{lemma}
\label{tuck_lemma}
Given a graphoid $\Prob$ over $\mb{V}$, consider any $\pi \in \Pi(\mb{v})$. Suppose that $(j \to k) \in \E(\G_\pi)$ is a covered edge, and let $\mc{H}$ be the DAG resulted from reversing $(j \to k)$ in $\G_\pi$. If $\tau = \textit{tuck}(\pi, j, k)$, then
\begin{enumerate}
    \item[(a)] $\tau$ is a causal order of $\mc{H}$;
    \item[(b)] $\E(\G_\tau) \subseteq \E(\mc{H})$;
    \item[(c)] $|\E(\G_\tau)| \leq |\E(\G_\pi)|$;
    \item[(d)] $\I(\G_\pi) \subseteq \I(\G_\tau)$. 
\end{enumerate}
\end{lemma}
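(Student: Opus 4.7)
My plan is to prove (a) directly from the definition of $\tau$ together with the covered-edge structure, then obtain (b)--(d) as nearly immediate consequences of (a) combined with Lemmas \ref{SGS_same_DAG} and \ref{covered_MEC} and Theorem \ref{Jiji_thm}. Throughout I write $\pi = \la \bs{\delta}_1, j, \bs{\delta}_2, k, \bs{\delta}_3 \ra$ and $\tau = \la \bs{\delta}_1, \bs{\gamma}, k, j, \bs{\gamma}^c, \bs{\delta}_3 \ra$, where $\bs{\gamma}$ and $\bs{\gamma}^c$ partition $\bs{\delta}_2$ into $\An(k,\G_\pi)$ and its complement. Since $j \to k$ is covered, Lemma \ref{covered_MEC} gives $\mc{H} \in \MEC(\G_\pi)$, hence $\mc{H} \in \CMC(\Prob)$; I will reuse this fact for parts (b) and (d).

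For (a), I would do a case analysis on the directed edges $u \to v$ of $\mc{H}$, which are exactly those of $\G_\pi$ with $j \to k$ replaced by $k \to j$. The reversed edge $k \to j$ is trivial because $k$ precedes $j$ in $\tau$. For the remaining edges, the only nontrivial reorderings are those touching $\bs{\delta}_2$, and two observations handle all of them. First, if $u \to v \in \E(\G_\pi)$ with $u,v \in \bs{\delta}_2$ and $v \in \bs{\gamma}$, then $u \in \bs{\gamma}$ as well, because an ancestor of an ancestor of $k$ is an ancestor of $k$; this rules out the only potentially problematic swap within $\bs{\delta}_2$. Second, if $j \to v \in \E(\G_\pi)$ with $v \in \bs{\delta}_2$, then $v \notin \bs{\gamma}$: a directed path $v \to \cdots \to w \to k$ would have $w \in \Pa(k,\G_\pi) = \Pa(j,\G_\pi) \cup \{j\}$ by coveredness, and either $w = j$ or $w \in \Pa(j,\G_\pi)$ closes a directed cycle through $j \to v$. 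All remaining edges---within $\bs{\delta}_1$, within $\bs{\delta}_3$, from $\bs{\delta}_1$ into any later block, from $\bs{\delta}_2$ into $\bs{\delta}_3$, or incident to $j$ or $k$---are routine: $\pi$ is already a causal order of $\G_\pi$, and any non-$j$ parent of $k$ is forced into $\Pa(j,\G_\pi)$ by coveredness and therefore sits in $\bs{\delta}_1$.

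Given (a), part (b) follows by applying Lemma \ref{SGS_same_DAG} to $\mc{H} \in \CMC(\Prob)$ with $\tau$ as a causal order of $\mc{H}$, yielding $\E(\G_\tau) \subseteq \E(\mc{H})$. Part (c) is then immediate because edge reversal preserves edge count, so $|\E(\G_\tau)| \leq |\E(\mc{H})| = |\E(\G_\pi)|$. For (d), Theorem \ref{Jiji_thm} applied to (b) gives $\I(\mc{H}) \subseteq \I(\G_\tau)$, and Markov equivalence gives $\I(\mc{H}) = \I(\G_\pi)$, so $\I(\G_\pi) \subseteq \I(\G_\tau)$.

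The main obstacle is the case analysis in (a), in particular the two cycle-avoidance observations above that both lean on $\Pa(j,\G_\pi) = \Pa(k,\G_\pi) \setminus \{j\}$. Without coveredness a child of $j$ inside $\bs{\delta}_2$ could lie in $\An(k,\G_\pi)$, which would place an element of $\bs{\gamma}$ before $j$ in $\tau$ while $j$ is its parent in $\mc{H}$, breaking (a) and hence everything downstream. Once those two observations are in hand, the rest is bookkeeping about the six position blocks of $\tau$.
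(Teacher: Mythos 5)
Your proof is correct, and parts (b)--(d) coincide with the paper's argument (covered-edge reversal preserves the MEC, so $\mc{H}\in\CMC(\Prob)$; then \textbf{Lemma \ref{SGS_same_DAG}} gives (b), edge-count preservation gives (c), and \textbf{Theorem \ref{Jiji_thm}} gives (d)). Where you diverge is part (a). The paper first observes that for a \emph{covered} edge the tuck degenerates: $\bs{\gamma}=\varnothing$, so $\tau=\la\bs{\delta}_{<j},k,j,\bs{\delta}_{j\sim k},\bs{\delta}_{>k}\ra$. It then routes through the intermediate permutation $\pi'=\la\bs{\delta}_{<j},j,k,\bs{\delta}_{j\sim k},\bs{\delta}_{>k}\ra$, invoking \textbf{Lemma \ref{covered_AT}} (built on the adjacent-transposition characterization of \textbf{Lemma \ref{AT_iff}}) to get $\G_\pi=\G_{\pi'}$, and reads off (a) from the single swap of the now-adjacent $j$ and $k$. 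You instead keep the general form $\la\bs{\delta}_1,\bs{\gamma},k,j,\bs{\gamma}^c,\bs{\delta}_3\ra$ and verify causal-order compatibility edge by edge; your two cycle-avoidance observations are sound, and in fact your second observation is a special case of the stronger statement that coveredness forces $\bs{\gamma}=\varnothing$ outright (any ancestor of $k$ is $j$, a parent of $j$, or an ancestor thereof, hence lies in $\bs{\delta}_1$ --- noting this would let you skip most of the block bookkeeping). The trade-off: the paper's route is shorter because it reuses the AT machinery already developed for the DAG associahedron, while yours is more self-contained and makes explicit exactly where coveredness is load-bearing, which the paper leaves implicit in the phrase ``because $\textit{tuck}(\pi,j,k)$ is a covered tuck.''
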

\begin{proof}
A similar lemma has been shown in \citep{solus2021consistency}. First, we write $\pi = \la \bs{\delta}_{<j}, j, \bs{\delta}_{j \sim k}, k, \bs{\delta}_{>k} \ra$ as usual. Consider $\pi' = \la \bs{\delta}_{<j}, j, k, \bs{\delta}_{j \sim k}, \bs{\delta}_{>k} \ra$. By \textbf{Lemma \ref{covered_AT}}, we have $\G_\pi = \G_{\pi'}$. Note that $\tau = \textit{tuck}(\pi, j, k) = \la \bs{\delta}_{<j}, k, j, \bs{\delta}_{j \sim k}, \bs{\delta}_{>k}\ra$ because $\textit{tuck}(\pi, j, k)$ is a covered tuck. Thus, $\tau$ is $(j, k)$-different from $\pi'$. Also, since $\pi'$ is a causal order of $\G_\pi$, it follows that $\tau$ is a causal order of $\mc{H}$ and thus (a) is proven.

Next, observe that $\I(\G_\pi) = \I(\mc{H})$ from \textbf{Lemma \ref{covered_MEC}}. From $\G_\pi \in \CMC(\Prob)$ by \textbf{Theorem \ref{RU-theorem}}, we know that $\mc{H} \in \CMC(\Prob)$. Thus, (b) immediately follows from (a) and \textbf{Lemma \ref{SGS_same_DAG}}. Also, (c) is entailed by $|\E(\G_\tau)| \leq |\E(\mc{H})| = |\E(\G_\tau)|$. Finally, by \textbf{Theorem \ref{Jiji_thm}}, we have $\I(\G_\pi) = \I(\mc{H}) \subseteq \I(\G_\tau)$ as desired in (d).
\end{proof}

Before we compare TSP and unbounded GRaSP$_0$, we want to make an assumption related to how the set of covered edges in any particular DAG is ordered. To see the importance of such an assumption, observe that different orderings of $\E^0(\G_\pi)$ (i.e., the set of covered edges in an induced DAG $\G_\pi$) can alter the output of TSP and also GRaSP$_0$. For example, suppose that $(j \to k), (j' \to k') \in \E^0(\G_\pi)$. Say the DFS of GRaSP$_0$ starts with performing $\textit{tuck}(\pi, j, k)$ and leads to some permutation $\tau$. However, choosing to perform $\textit{tuck}(\pi, j', k')$ instead at the beginning of the DFS procedure can lead to some $\tau'$ where $\G_{\tau} \neq \G_{\tau'}$. Hence, we enforce the assumption that the ordering of $\E^0(\G)$ for any DAG $\G$ is fixed arbitrarily. For instance, $(j \to k)$ precedes $(j' \to k')$ in $\E^0(\G)$ if $j < j'$, or $j = j'$ and $k < k'$. Consequently, the issue of order-dependence can be avoided even when comparing a Chickering sequence found by TSP and a ct-sequence found by unbounded GRaSP$_0$. In the following, this assumption will be made implicitly.  

Now we revisit how TSP works. Given a graphoid $\Prob$ over $\mb{V}$ and an initial permutation $\pi \in \Pi(\mb{v})$, TSP begins with setting $\G$ as the induced $\G_\pi$. Starting with the root $\G$, TSP performs DFS to identify a SGS-minimal DAG $\mc{H}$ connected by a Chickering sequence from $\G$ such that $|\E(\G)| > |\E(\mc{H})|$. TSP returns $\G$ if no such $\mc{H}$ is found. Otherwise, it updates $\G$ as $\mc{H}$ and repeat the procedure.

The DFS procedure of TSP aims to traverse from one SGS-minimal DAG to another SGS-minimal DAG by the construction of a Chickering sequence. Though we know that a Chickering sequence is obtained by the reversals of covered edges and deletions of directed edges, \cite{solus2021consistency} did not specify any ordering of these operations. Below we provide a more precise definition of the Chickering sequences considered by TSP. 

\begin{definition}
\label{TSP_Chickering}
Given a graphoid $\Prob$ over $\mb{V}$, a TSP-Chickering sequence $\mf{C} = \la \G^1,..., \G^m\ra$ is a Chickering sequence satisfying the following condition:
\begin{enumerate}
    \item[(a)] $\G^1, \G^m \in \SGS(\Prob)$;
    \item[(b)] $\G^i$ and $\G^{i'}$ are pairwise distinct for $1 \leq i < i' \leq m$; 
    \item[(c)] if $|\E(\G^1)| = |\E(\G^m)|$, then $\G^1,..., \G^m \in \SGS(\Prob)$ where they differ by the reversals of some covered edges;
    \item[(d)] otherwise, there exists a turning index $1 < l < m$ such that (i) $\G^1,..., \G^{l-1} \in \SGS(\Prob)$, (ii) $\G^1,..., \G^l$ differ by the reversals of some covered edges, and (iii) $\G^{i+1}$ is obtained from deleting a directed edge in $\G^{i} \notin \SGS(\Prob)$ for each $l \leq i < m$. 
\end{enumerate}
\end{definition}

Readers are suggested to find the original pseudocode of TSP in \cite{solus2021consistency} to verify that our \textbf{Definition \ref{TSP_Chickering}} is a fair description of the Chickering sequences considered by TSP. Conditions (a) and (b) are straightforward. (c) refers to the case where TSP cannot find a sparser SGS-minimal DAG. So if any $\G^i$ in $\mf{C}$ were non-SGS-minimal, then TSP would have obtained a proper subgraph of $\G^i$ which is SGS-minimal by a series of edge-deletion. (d) refers to the case where TSP manages to find a sparser SGS-minimal DAG. Notice that $\G^2$ must be obtained by a covered edge reversal from $\G^1$ since $\G^1 \in \SGS(\Prob)$. If $\G^2 \notin \SGS(\Prob)$, then TSP can obtain the desired SGS-minimal DAG by a series of edge-deletion from $\G^2$. But if $\G^2 \in \SGS(\Prob)$, the procedure above repeats until finding the turning index $l$ such that $\G^l \notin \SGS(\Prob)$ and then the sparser $\G^m \in \SGS(\Prob)$ can be obtained by a series of edge-deletion from $\G^l$. 

Now we compare TSP and unbounded GRaSP$_0$ by considering their respective sequences traversed in the DFS procedure.

\begin{lemma}
\label{ct_2_Chickering}
Given a graphoid $\Prob$ over $\mb{V}$, consider any $\pi \in \Pi(\mb{v})$ and $\tau = \textit{tuck}(\pi, j, k)$ where $(j \to k) \in \E^0(\G_\pi)$. Given that $\mf{T} = \la \pi, \tau\ra$ is a ct-sequence,
\begin{itemize}
    \item[(a)] if $|\E(\G_\tau)| = |\E(\G_\pi)|$, then $\mf{C} = \la \G_\pi, \G_\tau\ra$ is a TSP-Chickering sequence where $\G_\tau$ is obtained from reversing $(j \to k) \in \E^0(\G_\pi)$;
    \item[(b)] otherwise, there exists a TSP-Chickering sequence $\mf{C} = \la \G_\pi = \G^1,..., \G^m = \G_\tau\ra$ s.t. $\G^2$ is obtained from reversing $(j \to k) \in \E^0(\G_\pi)$, and $\G^{i+1}$ is obtained from deleting a directed edge in $\G^i$ for each $2 \leq i < m$.
\end{itemize}
\end{lemma}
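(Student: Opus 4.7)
The plan is to use Lemma~\ref{tuck_lemma} as the workhorse. Writing $\mc{H}$ for the DAG obtained by reversing the covered edge $(j\to k)$ in $\G_\pi$, part~(b) of that lemma gives $\E(\G_\tau)\subseteq\E(\mc{H})$, and part~(a) gives that $\tau$ is a causal order of $\mc{H}$. Since reversing a covered edge preserves the MEC (Lemma~\ref{covered_MEC}), we also have $|\E(\mc{H})|=|\E(\G_\pi)|$ and $\I(\G_\pi)=\I(\mc{H})$, which will make $\G_\pi\to\mc{H}$ a valid first step of any TSP-Chickering sequence.

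For case~(a), where $|\E(\G_\tau)|=|\E(\G_\pi)|=|\E(\mc{H})|$, combining this with $\E(\G_\tau)\subseteq\E(\mc{H})$ forces $\G_\tau=\mc{H}$, so $\G_\tau$ is literally obtained by reversing the covered edge $j\to k$ in $\G_\pi$. Both endpoints lie in $\SGS(\Prob)$ by Theorem~\ref{RU-theorem}, they are distinct because $\mf{T}$ is a ct-sequence, and the two-term sequence trivially satisfies the Chickering condition $\I(\G_\pi)\subseteq\I(\G_\tau)$; this matches clause~(c) of Definition~\ref{TSP_Chickering}.

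For case~(b), where $|\E(\G_\tau)|<|\E(\G_\pi)|$, the containment $\E(\G_\tau)\subsetneq\E(\mc{H})$ is strict, so let $e_1,\dots,e_{m-2}$ be any enumeration of $\E(\mc{H})\setminus\E(\G_\tau)$ and define
\[
\G^1=\G_\pi,\qquad \G^2=\mc{H},\qquad \G^{i+1}=\G^i\setminus\{e_{i-1}\}\text{ for }2\le i\le m-1,
\]
so that $\G^m=\G_\tau$. The step $\G^1\to\G^2$ is a covered-edge reversal, giving $\I(\G^1)=\I(\G^2)$, and each subsequent step is an edge deletion, so Theorem~\ref{Jiji_thm} immediately yields $\I(\G^i)\subseteq\I(\G^{i+1})$; hence the sequence is Chickering. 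The endpoints $\G^1,\G^m$ lie in $\SGS(\Prob)$ by Theorem~\ref{RU-theorem}, and all $\G^i$ are pairwise distinct because edge counts strictly decrease from $\G^2$ onward while $\G^1\ne\G^2$ (different orientations of a single edge).

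What remains is to certify the turning-index condition~(d)(iii) of Definition~\ref{TSP_Chickering} with $l=2$, i.e.\ that every intermediate $\G^i$ for $2\le i<m$ fails to be SGS-minimal. This is where I expect to spend a sentence of care: for each such $i$ we have $\E(\G_\tau)\subsetneq\E(\G^i)$ strictly, and $\G_\tau\in\CMC(\Prob)$ by Theorem~\ref{RU-theorem}, so $\G^i$ has a strict Markovian subgraph and therefore $\G^i\notin\SGS(\Prob)$ by Definition~\ref{SGS-minimal}. Conditions (i) and (ii) with $l=2$ are immediate, and clauses (a) and (b) of Definition~\ref{TSP_Chickering} have already been verified, completing the construction.
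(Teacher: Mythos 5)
Your proof follows essentially the same route as the paper's: invoke Lemma~\ref{tuck_lemma} to compare $\G_\tau$ with the covered-edge reversal $\mc{H}$, conclude $\G_\tau=\mc{H}$ in the equal-edge-count case, and otherwise delete the edges of $\E(\mc{H})\setminus\E(\G_\tau)$ one at a time. Your explicit verification of the turning-index clause (that each intermediate $\G^i$ is non-SGS-minimal because $\G_\tau\in\CMC(\Prob)$ is a proper Markovian subgraph) is a detail the paper leaves implicit, but the argument is the same.
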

\begin{proof}
First, consider the DAG $\mc{H}$ obtained from reversing $(j \to k) \in \E^0(\G_\pi)$. We start with the case in (a) where $|\E(\G_\tau)| = |\E(\G_\pi)| = |\E(\mc{H})|$. We want to show that $\G_\tau = \mc{H}$. By \textbf{Lemma \ref{tuck_lemma}} (b), we have $\E(\G_\tau) \subseteq \E(\mc{H})$. If $\E(\G_\tau) \subset \E(\mc{H})$ holds, then $|\E(\G_\pi)| = |\E(\mc{H})|$ will be violated. Hence, we have $\G_\tau = \mc{H}$ and thus $\mf{C} = \la \G_\pi, \mc{H} = \G_\tau \ra$ is our desired TSP-Chickering sequence.

For (b), it follows from \textbf{Lemma \ref{tuck_lemma}} (c) that $|\E(\G_\tau)| < |\E(\mc{H})| = |\E(\G_\pi)|$. Let $\G_\pi$ and $\mc{H}$ be $\G^1$ and $\G^2$ respectively. By \textbf{Lemma \ref{tuck_lemma}} (b) again, we have $\E(\G_\tau) \subset \E(\G^2)$ such that we can remove a directed edge from $\G^2$ once at a time until obtaining $\G_\tau$. Therefore, we have the desired TSP-Chickering sequence in (b).
\end{proof}

\begin{lemma}
\label{Chickering_2_ct}
Given a graphoid $\Prob$ over $\mb{V}$, consider any TSP-Chickering sequence $\mf{C} = \la \G^1,...,\G^m\ra$. Let $\pi^1$ be a causal order of $\G^1$. Then
\begin{itemize}
    \item[(a)] if $|\E(\G^1)| = |\E(\G^m)|$, then $\G^{i+1} = \G_{\pi^{i+1}} = \G_{\textit{tuck}(\pi, j, k)}$ where $j \to k$ is the covered edge reversed to obtain $\G^{i+1}$ from $\G^i$ for each $1 \leq i < m$ s.t. $\mf{T} = \la \pi^1,...,\pi^m\ra$ is a ct-sequence;
    \item[(b)] otherwise, then $\G^{i+1} = \G_{\pi^{i+1}} = \G_{\textit{tuck}(\pi, j, k)}$ where $j \to k$ is the covered edge reversed to obtain $\G^{i+1}$ from $\G^i$ for each $1 \leq i < l$ where $l$ is the turning index of $\mf{C}$ and $\G_{\pi^l} = \G^m$ s.t. $\mf{T} = \la \pi^1,...,\pi^l\ra$ is a ct-sequence.
\end{itemize}
\end{lemma}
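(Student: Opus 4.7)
The plan is to induct on $i$ and use \textbf{Lemma \ref{tuck_lemma}} to propagate the identity between $\G_{\pi^{i+1}}$ and the next DAG in the Chickering sequence, treating case (b)'s final step as a genuine special case. For the base case, since $\pi^1$ is a causal order of $\G^1 \in \SGS(\Prob)$, \textbf{Lemma \ref{SGS_same_DAG}} immediately yields $\G_{\pi^1} = \G^1$.

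For case (a), I will iterate $i = 1, \ldots, m - 1$ as follows. Assuming inductively $\G_{\pi^i} = \G^i$, let $j \to k$ be the covered edge in $\G^i$ whose reversal produces $\G^{i+1}$, and set $\pi^{i+1} = \textit{tuck}(\pi^i, j, k)$. Invoking \textbf{Lemma \ref{tuck_lemma}}(a), $\pi^{i+1}$ is a causal order of the DAG obtained by reversing $j \to k$ in $\G_{\pi^i} = \G^i$, which is precisely $\G^{i+1}$. Condition (c) of \textbf{Definition \ref{TSP_Chickering}} places $\G^{i+1}$ in $\SGS(\Prob)$, so \textbf{Lemma \ref{SGS_same_DAG}} forces $\G_{\pi^{i+1}} = \G^{i+1}$. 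After iterating, $\la \pi^1, \ldots, \pi^m \ra$ is a ct-sequence: each step is a covered tuck by construction, and the induced DAGs $\G^1, \ldots, \G^m$ are pairwise distinct by condition (b) of \textbf{Definition \ref{TSP_Chickering}}.

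Case (b) proceeds identically for $i = 1, \ldots, l - 2$ (using $\G^2, \ldots, \G^{l-1} \in \SGS(\Prob)$ from condition (d)(i)), producing $\pi^l = \textit{tuck}(\pi^{l-1}, j, k)$ which, by \textbf{Lemma \ref{tuck_lemma}}(a), is a causal order of $\G^l$. The step I expect to be the main obstacle is showing $\G_{\pi^l} = \G^m$ rather than the naively expected $\G^l$, since $\G^l \notin \SGS(\Prob)$ so \textbf{Lemma \ref{SGS_same_DAG}} cannot be applied to $\G^l$. The resolution is to observe that condition (d)(iii) of \textbf{Definition \ref{TSP_Chickering}} builds $\G^m$ from $\G^l$ by successive edge deletions, so $\E(\G^m) \subseteq \E(\G^l)$ and every ancestor in $\G^m$ is an ancestor in $\G^l$. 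Hence $\pi^l$, already a causal order of $\G^l$, is also a causal order of $\G^m$; since $\G^m \in \SGS(\Prob)$, \textbf{Lemma \ref{SGS_same_DAG}} now delivers $\G_{\pi^l} = \G^m$. Finally, $\la \pi^1, \ldots, \pi^l \ra$ is a ct-sequence because its induced DAGs are $\G^1, \ldots, \G^{l-1}, \G^m$, which are pairwise distinct: the first $l - 1$ by condition (b) of \textbf{Definition \ref{TSP_Chickering}}, and $\G^m$ differs from each earlier $\G^i$ because covered-edge reversals preserve edge count along $\G^1, \ldots, \G^l$ while the deletions from $\G^l$ to $\G^m$ strictly decrease it.
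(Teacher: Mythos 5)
Your proof is correct and follows essentially the same route as the paper's: iterate \textbf{Lemma \ref{tuck_lemma}}(a) with \textbf{Lemma \ref{SGS_same_DAG}} along the covered-edge reversals, and handle the turning index by noting that $\pi^l$, being a causal order of $\G^l$, is also a causal order of $\G^m$ since $\E(\G^m) \subset \E(\G^l)$, whence $\G_{\pi^l} = \G^m$ by SGS-minimality. Your added justification that the induced DAGs are pairwise distinct (so the result is genuinely a ct-sequence) is a detail the paper leaves implicit, but the core argument is identical.
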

\begin{proof}
(a) can be easily shown by \textbf{Lemma \ref{tuck_lemma}}(a) and \textbf{Lemma \ref{SGS_same_DAG}}. For (b), the proof of $\G^i = \G_{\pi^i}$ for each $1 \leq i < l$ is similar to that in (a). So we consider $l$ where $\G^l \notin \SGS(\Prob)$ according to \textbf{Definition \ref{TSP_Chickering}}(d). However, it follows from \textbf{Lemma \ref{tuck_lemma}} (a) that $\pi^l$ is a causal order of $\G^l$. Since $\E(\G^m) \subset \E(\G^l)$, we know that $\pi^l$ is also a causal order of $\G^m$. Lastly, given that $\G^m \in \SGS(\Prob)$, it follows from \textbf{Lemma \ref{SGS_same_DAG}} that $\G_{\pi^l} = \G^m$. 
\end{proof} 

\,\\
\textbf{Lemma \ref{ct-better}} \hspace{0.1cm} \textit{Given a graphoid $\Prob$, for any $\pi \in \Pi(\mb{v})$ and any Chickering sequence from $\G_\pi$ to some $\mc{H} \in \SGS(\Prob)$ considered by TSP, there exists a ct-sequence $\la \pi,...,\tau\ra$ s.t. $\G_\tau = \mc{H}$.}\\

\begin{proof}
Given that a Chickering sequence considered by TSP is simply a TSP-Chickering sequence defined in \textbf{Definition \ref{TSP_Chickering}}, the lemma follows immediately from \textbf{Lemma \ref{Chickering_2_ct}}.
\end{proof}

\,\\
\textbf{Theorem \ref{TSP=GRaSP0}} \hspace{0.1cm} \textit{Given a graphoid $\Prob$ over $\mb{V}$ and any initial permutation $\pi \in \Pi(\mb{v})$, the DAG induced by the output of unbounded GRaSP$_0$ is equivalent to the DAG returned by TSP.}\\

\begin{proof}
Immediate from \textbf{Lemma \ref{ct_2_Chickering}} and \textbf{Lemma \ref{Chickering_2_ct}}.
\end{proof}

Now we turn to the discussion on the correctness of GRaSP$_0$ under faithfulness.

\begin{lemma}
\label{ct_lemma}
Given a graphoid $\Prob$ over $\mb{V}$ and any $\pi \in \Pi(\mb{v})$, if $\G_\pi \notin \Pm(\Prob)$, then there exists a ct-sequence $\mf{T} = \la \pi,..., \tau\ra$ s.t. $\I(\G_\pi) \subset \I(\G_\tau)$.
\end{lemma}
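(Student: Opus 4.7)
The plan is to convert a Chickering-style witness of $\G_\pi \notin \Pm(\Prob)$ into a ct-sequence by inductively mirroring each covered edge reversal with a tuck, halting as soon as a tuck produces a strictly sparser induced DAG. Since $\G_\pi \notin \Pm(\Prob)$, pick any $\G'' \in \CMC(\Prob)$ with $\I(\G_\pi) \subsetneq \I(\G'')$; if $\G''$ is not already SGS-minimal, replace it by a subgraph $\G' \in \SGS(\Prob)$, which by \textbf{Theorem \ref{Jiji_thm}} still satisfies $\I(\G') \supseteq \I(\G'') \supsetneq \I(\G_\pi)$. Applying \textbf{Theorem \ref{Chickering_seq}} (noting that $\I(\G_\pi) \subseteq \I(\G')$) yields a Chickering sequence $\la \G^1 = \G_\pi, \ldots, \G^m = \G' \ra$ whose steps are either covered edge reversals or directed edge deletions, with $\I(\G^i) \subseteq \I(\G^{i+1})$; every $\G^i$ is Markovian since $\I(\G^i) \subseteq \I(\G^m) = \I(\G') \subseteq \I(\Prob)$.

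Next I induct on $i$, maintaining the invariant that $\la \pi^1 = \pi, \ldots, \pi^i \ra$ is a ct-sequence with $\G_{\pi^j} = \G^j$ and $\G^j \in \SGS(\Prob) \cap \MEC(\G_\pi)$ for each $j \leq i$. To extend to $i+1$, inspect the Chickering step $\G^i \to \G^{i+1}$. A deletion is impossible under the invariant: $\G^i \in \SGS(\Prob)$ admits no Markovian proper subgraph, yet the target of a deletion would be exactly such a subgraph. Hence the step is a reversal of some covered edge $(j \to k) \in \E^0(\G^i) = \E^0(\G_{\pi^i})$, and I set $\pi^{i+1} := \textit{tuck}(\pi^i, j, k)$. \textbf{Lemma \ref{tuck_lemma}}(b) gives $\E(\G_{\pi^{i+1}}) \subseteq \E(\G^{i+1})$, and \textbf{Theorem \ref{RU-theorem}} gives $\G_{\pi^{i+1}} \in \SGS(\Prob)$. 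Two cases arise: either $\G_{\pi^{i+1}} = \G^{i+1}$, in which case $\G^{i+1} \in \MEC(\G^i) = \MEC(\G_\pi)$ by \textbf{Lemma \ref{covered_MEC}} and I continue the induction, or $\E(\G_{\pi^{i+1}}) \subsetneq \E(\G^{i+1})$, in which case I halt with $\tau := \pi^{i+1}$.

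At the halting step, $\G_\tau$ is a Markovian strict subgraph of $\G^{i+1}$, and $\G^{i+1} \in \MEC(\G_\pi)$ because it was reached from $\G^i \in \MEC(\G_\pi)$ by a covered edge reversal. \textbf{Theorem \ref{Jiji_thm}} then yields $\I(\G_\tau) \supseteq \I(\G^{i+1}) = \I(\G_\pi)$, and the inclusion is strict because $\G_\tau$ and $\G^{i+1}$ have different skeletons and so lie in different MECs. Pairwise distinctness of $\G_{\pi^1}, \ldots, \G_{\pi^{i+1}}$, required by \textbf{Definition \ref{ct-seq}}, holds because $\G_{\pi^j} = \G^j$ for $j \leq i$ are pairwise distinct Chickering entries while $\G_{\pi^{i+1}}$ has strictly more CI relations than any of them. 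Finally, the induction must halt before exhausting the Chickering sequence: otherwise one would reach $i = m$ with $\G^m = \G' \in \MEC(\G_\pi)$, contradicting $\I(\G_\pi) \subsetneq \I(\G')$.

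The main technical subtlety is the dichotomy that governs each inductive step: in the first case the tuck perfectly tracks the Chickering reversal and keeps the induced DAG inside $\SGS(\Prob) \cap \MEC(\G_\pi)$, while in the second case the tuck jumps past $\G^{i+1}$ to a strictly sparser SGS-minimal I-map, which is exactly the mechanism that certifies the strict independence gain demanded by the lemma. The rest of the argument just books the bookkeeping needed to verify the ct-sequence hypotheses.
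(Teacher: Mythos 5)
Your proof is correct, and it reaches the same destination by the same underlying mechanism as the paper --- mirroring covered edge reversals of a Chickering sequence by covered tucks, and letting the SGS-minimality of every induced DAG (\textbf{Theorem \ref{RU-theorem}}) do the work of absorbing the deletion steps --- but the organization is genuinely different. The paper first normalizes the witness Chickering sequence into a ``TSP-Chickering sequence'' (all covered reversals among SGS-minimal DAGs first, then a block of deletions down to an SGS-minimal $\G^m$) and then invokes \textbf{Lemma \ref{Chickering_2_ct}}(b), so the strict gain in CI relations is realized by the single tuck at the turning index, whose induced DAG jumps past all the deletions at once. You instead run a direct induction with an explicit halting criterion: deletions are ruled out \emph{a priori} while the invariant $\G_{\pi^i}=\G^i\in\SGS(\Prob)\cap\MEC(\G_\pi)$ holds (a deletion would produce a Markovian proper subgraph of an SGS-minimal DAG), and the strict sparsification is detected exactly when a tuck fails to reproduce the reversal, i.e.\ when $\E(\G_{\pi^{i+1}})\subsetneq\E(\G^{i+1})$. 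Your version is more self-contained (it does not need \textbf{Definition \ref{TSP_Chickering}} or \textbf{Lemma \ref{Chickering_2_ct}}) and makes the halting mechanism explicit, at the cost of re-proving locally what those lemmas package. One small gap to patch: you assert that the entries $\G^1,\dots,\G^i$ of the Chickering sequence are pairwise distinct, but \textbf{Theorem \ref{Chickering_seq}} does not guarantee this; take a shortest Chickering sequence from $\G_\pi$ to $\G'$ (any repetition could be excised), after which distinctness of the tracked prefix, and hence the pairwise-distinctness clause of \textbf{Definition \ref{ct-seq}}, follows as you argue.
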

\begin{proof}
Suppose that $\G_\pi \notin \Pm(\Prob)$. By \textbf{Definition \ref{P-minimal}}, it follows that there exists $\mc{H} \in \CMC(\Prob)$ s.t. $\I(\G_\pi) \subset \I(\mc{H}) \subseteq \I(\Prob)$. By \textbf{Theorem \ref{Chickering_seq}}, we know that there exists a Chickering sequence $\mf{C}_0 = \la \G_\pi = \G^1, ..., \G^l = \mc{H}\ra$. Without loss of generality, suppose that $\mf{C}_0$ is the shortest Chickering sequence where each $\G^{i+1}$ differs from $\G^i \in \SGS(\Prob)$ by the reversal of a covered edge in $\E^0(\G^i)$ for each $1 \leq i < l-1$, and $\G^l$ is obtained from deleting a directed edge in $\G^{l-1}$. Notice that $|\E(\G^l)| < |\E(\G^1)|$ due to the edge deletion. If $\G^l \in \SGS(\Prob)$, then $\mf{C}_0$ is a TSP-Chickering sequence. Otherwise, we can easily construct a TSP-Chickering sequence $\mf{C} = \la \G^1,..., \G^m\ra$ with $l-1$ as the turning index and $\G^m \in \SGS(\Prob)$ obtained by repeated edge-deletion from $\G^l$ such that $\I(\G^1) \subset \I(\G^l) \subset \I(\G^m)$. By \textbf{Lemma \ref{Chickering_2_ct}} (b), we have the desired ct-sequence.
\end{proof}

\,\\
\textbf{Theorem \ref{ct-theorem}} \hspace{0.1cm} \textit{Given a graphoid $\Prob$ over $\mb{V}$ and any $\pi \in \Pi(\mb{v})$, if $\G_\pi \notin \Pm(\Prob)$, then there exists a ct-sequence $\mf{T} = \la \pi,..., \tau\ra$ s.t. $\G_\tau \in \Pm(\Prob)$.}\\

\begin{proof}
Immediate from \textbf{Lemma \ref{ct_lemma}}.
\end{proof}

\begin{theorem}
\label{GRaSP0_correct_consistent}
Unbounded GRaSP$_0$ is correct and pointwise consistent under faithfulness.
\end{theorem}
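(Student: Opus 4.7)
The plan is to establish correctness first (for the oracle version with a graphoid $\Prob$), then lift to pointwise consistency (for the sample version with data $\D$) using standard large-sample behavior of BIC. The theorem is essentially a specialization of \textbf{Corollary \ref{correct_consistent}}, but I will argue it directly to highlight the role of \textbf{Theorem \ref{ct-theorem}} and \textbf{Theorem \ref{razors}}.

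First I would prove correctness by analyzing what \textbf{Algorithm \ref{alg:grasp}} can and cannot do at termination. The outer loop halts only when a call to \textit{dfs} fails to improve the score, so at termination $\hat\tau$ induces a DAG $\G_{\hat\tau}$ for which no covered-tuck ct-sequence ending at a strictly sparser induced DAG exists. Suppose for contradiction that $\G_{\hat\tau} \notin \Pm(\Prob)$. Then by \textbf{Theorem \ref{ct-theorem}} there is a ct-sequence $\la \hat\tau,\dots,\tau'\ra$ with $\G_{\tau'}\in \Pm(\Prob)$. Under faithfulness, \textbf{Theorem \ref{razors}} together with the fact (cited from Forster et al.\ just after \textbf{Theorem \ref{ct-theorem}}) that $\CFC(\Prob)=\Fr(\Prob)$ gives $\Pm(\Prob)=\Fr(\Prob)=\MEC(\G^*)$, so every element of $\Pm(\Prob)$ has edge count $|\E(\G^*)|$, while $\G_{\hat\tau}\notin\Fr(\Prob)$ has strictly more edges. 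Combined with the monotonicity $|\E(\G_{\pi^{i+1}})|\le|\E(\G_{\pi^i})|$ along a ct-sequence from \textbf{Lemma \ref{tuck_lemma}}(c), this forces a strict decrease at some step of the ct-sequence. That step corresponds to a covered tuck producing a score improvement, which the unbounded (depth $|\mb v|!$) DFS in \textbf{Algorithm \ref{alg:dfs}} is guaranteed to reach by recursing along the equal-score prefix and returning on the first strict improvement. This contradicts termination, so $\G_{\hat\tau}\in\Pm(\Prob)=\MEC(\G^*)$, i.e.\ GRaSP$_0$ is correct.

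The main obstacle is precisely this last point: showing that the recursive structure of \textbf{Algorithm \ref{alg:dfs}} actually follows an entire ct-sequence until a strict improvement appears, rather than getting stuck. Two subtleties need care. (i) The DFS enumerates $\E^0(\G_\pi)$ in a fixed order (the same order-dependence convention invoked just before \textbf{Lemma \ref{ct_2_Chickering}}), so reaching a particular ct-sequence requires backtracking over covered edges whose tucks leave the score unchanged; since no induced DAG is revisited, the recursion is finite and exhausts all score-preserving covered tucks within depth $|\mb v|!$. (ii) The existence of the score-improving step inside the ct-sequence is what I argued above from $\Pm(\Prob)=\Fr(\Prob)$, so the DFS hits the ``return $\tau$'' branch before backtracking exhausts the search.

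For pointwise consistency I would replace $\Prob$ by the i.i.d.\ sample $\D$ and the oracle score $-|\E(\G_\pi)|$ by the BIC score of $\G_\pi$ estimated via the Grow--Shrink procedure (see Appendix \ref{app:gs}). Local consistency of BIC (Haughton, Chickering) guarantees that for each finite collection of DAG-pair comparisons used inside any single run of the algorithm, the sample BIC ordering agrees with the oracle ordering with probability tending to one as $n\to\infty$. Since the algorithm makes only finitely many such comparisons, a union bound yields that, with probability tending to one, the sample run of GRaSP$_0$ produces the same sequence of tucks as the oracle run. Combined with the correctness result above, this gives $\G_{\hat\tau}\in\MEC(\G^*)$ almost surely in the limit, i.e.\ pointwise consistency.
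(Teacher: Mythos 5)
Your proposal is correct and follows essentially the same route as the paper's proof: \textbf{Theorem \ref{ct-theorem}} guarantees the output of the greedy ct-sequence search is P-minimal, \textbf{Theorem \ref{razors}} then places it in $\MEC(\G^*)$ under faithfulness, and local consistency of BIC lifts this to pointwise consistency. You supply more detail than the paper on why the DFS necessarily realizes a strictly score-improving step (via $\Fr(\Prob)=\Pm(\Prob)$ under faithfulness), but this only fills in a gap the paper leaves implicit rather than changing the argument.
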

\begin{proof}
We review the argument for the correctness of unbounded GRaSP$_0$ under faithfulness given in the main paper. Given a graphoid $\Prob$ over $\mb{V}$, consider any initial permutation $\pi \in \Pi(\mb{v})$. Given that unbounded GRaSP$_0$ greedily search for a ct-sequence from $\pi$, it is guaranteed by \textbf{Theorem \ref{ct-theorem}} that $\tau$ returned by unbounded GRaSP$_0$ in \textbf{Algorithm \ref{alg:grasp}} induces a P-minimal DAG. Under faithfulness, we have $\G_\tau \in \MEC(\G^*)$ due to \textbf{Theorem \ref{razors}} and hence unbounded GRaSP$_0$ is correct.

Alternatively, the correctness and pointwise consistency of unbounded GRaSP$_0$ can also be proven directly from \textbf{Theorem \ref{TSP=GRaSP0}} and the corresponding results of TSP in \citep{solus2021consistency}.
\end{proof}

\,\\
\textbf{Corollary \ref{correct_consistent}} \hspace{0.1cm} \textit{Unbounded GRaSP$_0$, GRaSP$_1$, and GRaSP$_2$ are correct and pointwise consistent under faithfulness.}\\

In the following, we want to prove that faithfulness is not only sufficient, but also \textit{necessary} for the correctness of TSP and unbounded GRaSP$_0$. We first want to prove an interesting and novel equivalence between two causal razors: faithfulness and u-P-minimality.

\begin{lemma}
\label{CMC_single_CI}
Given a joint probability distribution $\Prob$ over $\mb{V}$, for any $\la X_i, X_j\,|\,\mb{X}_{\mb{k}} \ra \in \I(\Prob)$, there exists $\G \in \mt{DAG}(\mb{V})$ s.t. $\I(\G) = \{\la X_i, X_j\,|\,\mb{X}_{\mb{k}} \ra\}$. 
\end{lemma}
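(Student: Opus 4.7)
The plan is to construct $\G$ explicitly so that its skeleton is complete except for the single missing edge $\{X_i, X_j\}$, and to orient the remaining edges by a carefully chosen total order. Write $\mb{r} = \mb{v} \setminus (\mb{k} \cup \{i, j\})$ and fix a permutation $\sigma$ of $\mb{v}$ that lists the indices of $\mb{k}$ first (in any order), then $i$, then $j$, then the indices of $\mb{r}$ (in any order). Let $\G$ be the DAG in which every pair of distinct vertices other than $\{i, j\}$ is joined by a directed edge oriented from the $\sigma$-earlier endpoint to the $\sigma$-later one. Then $\G$ is acyclic since all edges agree with $\sigma$, and $\{X_i, X_j\}$ is its only non-adjacent pair, so every nontrivial d-separation statement between singletons in $\I(\G)$ must be of the form $\la X_i, X_j\,|\,\mb{Z}\ra$.

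It therefore suffices to show that $i \perp_\G j \,|\, \mb{Z}$ iff $\mb{Z} = \mb{X}_{\mb{k}}$. For the ``only if'' direction, suppose $\mb{Z} \neq \mb{X}_{\mb{k}}$: either there is some $X_l \in \mb{X}_{\mb{k}} \setminus \mb{Z}$, and $X_i \leftarrow X_l \to X_j$ is an open trek with non-collider $X_l \notin \mb{Z}$; or $\mb{X}_{\mb{k}} \subseteq \mb{Z}$ but $\mb{Z} \neq \mb{X}_{\mb{k}}$, in which case some $X_l \in \mb{Z} \cap \mb{X}_{\mb{r}}$ exists and $X_i \to X_l \leftarrow X_j$ is a path whose lone collider $X_l$ lies in $\mb{Z}$. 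Both give d-connection. For the ``if'' direction, I would argue that $\mb{X}_{\mb{k}}$ blocks every path $X_i = V_0, V_1, \ldots, V_{n+1} = X_j$ in $\G$. If not, every non-collider on such a path would have to lie outside $\mb{X}_{\mb{k}}$ (i.e., in $\mb{X}_{\mb{r}}$) and every collider in $\mb{X}_{\mb{k}}$, using the structural fact that descendants of any vertex in $\mb{X}_{\mb{r}}$ remain in $\mb{X}_{\mb{r}}$ by $\sigma$. A case split on whether the first edge $V_0 V_1$ points out of or into $X_i$ then propagates via $\sigma$: in the out-of-$X_i$ case all subsequent $V_t$ must lie in $\mb{X}_{\mb{r}}$, forcing the terminal edge $V_n \to X_j$ with $V_n \in \mb{X}_{\mb{r}}$, contradicting $X_j$ preceding $\mb{X}_{\mb{r}}$ in $\sigma$; in the into-$X_i$ case all $V_t$ lie in $\mb{X}_{\mb{k}}$ and the terminal edge becomes $X_j \to V_n$, contradicting $\mb{X}_{\mb{k}}$ preceding $X_j$.

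The main obstacle is the ``if'' direction, and specifically verifying that the constraint ``non-colliders outside $\mb{X}_{\mb{k}}$, colliders inside $\mb{X}_{\mb{k}}$'' propagates consistently along an arbitrary-length path. The ordering $\sigma$ does the heavy lifting: it guarantees the inductive step preserves which side of $\mb{X}_{\mb{k}}\cup\mb{X}_{\mb{r}}$ the next vertex lies in, and the placement of $\mb{X}_{\mb{r}}$ at the tail of $\sigma$ is exactly what ensures that descendants of colliders in $\mb{X}_{\mb{r}}$ never reach $\mb{X}_{\mb{k}}$, so such colliders are safely blocked by $\mb{Z} = \mb{X}_{\mb{k}}$.
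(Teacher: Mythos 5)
Your construction is correct and is essentially the paper's own: a complete skeleton minus the single $\{i,j\}$ adjacency, oriented acyclically by a total order that places the extraneous vertices $\mb{r}$ last so that colliders in $\mb{r}$ have no descendants in $\mb{k}$ (the paper puts $i$ before $\mb{k}$ rather than after, which changes nothing in the argument). One small simplification: in your into-$X_i$ case the path is already blocked at $V_1$, which is a non-collider lying in $\mb{X}_{\mb{k}}$, so no propagation along the path is needed there.
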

\begin{proof}
Consider $\mb{V} = \{X_1,...,X_m\}$. An empty DAG suffices when $m = 2$. So assume that $m \geq 3$. Without loss of generality, consider $\la X_1, X_{k+2} \,|\,\mb{X}_{\mb{k}}\ra \in \I(\Prob)$ where $\mb{k} = \la 2,...,k+1\ra$, and the remaining vertices are $\la k+3,..., m \ra$. We propose a procedure which guarantees the existence of the desired DAG $\G$.

\begin{algorithm}
\DontPrintSemicolon
$\G \ot \text{a complete undirected graph over } \mb{v}$\;
remove the adjacency $1$ \textemdash $\,k+2$ in $\G$ \;
\ForEach{$(j, k)$ that are adjacent in $\G$}{
    \If{$j < k$}{
        orient $j \to k$ in $\G$
    }
}
return $\G$    
\end{algorithm}

Line 3 to 5 guarantee that $\mathcal{G}$ is a DAG since all edges are directed and pointing from lower indices to higher indices such that no directed cycle can occur. Finally, $1 \perp_\G k+2\,|\,\mb{k}$ holds because all directed paths from $1$ to $k+2$ either contain a non-collider $i \in \mb{k}$ or contain a collider $i \notin \mb{k}$. Therefore, $\I(\G) = \{\la X_1, X_{k+2}\,|\,\mb{X}_\mb{k})\}$ because no other d-separation relations hold in $\mathcal{G}$.
\end{proof}

\begin{theorem}
\label{CFC-uPm}
For any joint probability distribution $\Prob$, $\CFC(\Prob) = \uPm(\Prob)$.
\end{theorem}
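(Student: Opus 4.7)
The plan is to prove set equality by two inclusions, exploiting the fact that faithfulness forces $\I(\G) = \I(\Prob)$ exactly, while $\uPm$ imposes a maximality-plus-uniqueness condition that, combined with \textbf{Lemma \ref{CMC_single_CI}}, will turn out to be equally strong.

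For the easy inclusion $\CFC(\Prob) \subseteq \uPm(\Prob)$, I will take any $\G \in \CFC(\Prob)$, so $\I(\G) = \I(\Prob)$. Any strict extension $\G' \in \CMC(\Prob)$ with $\I(\G) \subsetneq \I(\G')$ would force $\I(\G') \not\subseteq \I(\Prob)$, contradicting Markovianity; hence $\G \in \Pm(\Prob)$. For uniqueness, any other $\G' \in \Pm(\Prob)$ satisfies $\I(\G') \subseteq \I(\Prob) = \I(\G)$ by Markovianity, and the inclusion cannot be strict by P-minimality of $\G'$, so $\I(\G') = \I(\G)$ and thus $\G' \in \MEC(\G)$. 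This shows $\G \in \uPm(\Prob)$.

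The substantive direction is $\uPm(\Prob) \subseteq \CFC(\Prob)$. Take $\G \in \uPm(\Prob)$; since $\G \in \CMC(\Prob)$ already gives $\I(\G) \subseteq \I(\Prob)$, the task reduces to showing the reverse inclusion. I will argue by contradiction: suppose there exists $\la X_i, X_j \mid \mb{X}_\mb{k}\ra \in \I(\Prob) \setminus \I(\G)$. By \textbf{Lemma \ref{CMC_single_CI}}, there is a DAG $\mc{H}$ whose independence model is exactly $\{\la X_i, X_j \mid \mb{X}_\mb{k}\ra\}$, so $\I(\mc{H}) \subseteq \I(\Prob)$ and thus $\mc{H} \in \CMC(\Prob)$. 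Then I will push $\mc{H}$ up to a P-minimal element: since $\CMC(\Prob)$ is finite and ordered by $\I$-inclusion, a maximal-chain argument yields some $\mc{H}^* \in \Pm(\Prob)$ with $\I(\mc{H}) \subseteq \I(\mc{H}^*)$. In particular $\la X_i, X_j \mid \mb{X}_\mb{k}\ra \in \I(\mc{H}^*)$ but not in $\I(\G)$, so $\mc{H}^* \notin \MEC(\G)$, contradicting $\G \in \uPm(\Prob)$.

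The main obstacle is really just making sure the extension step is clean: one must argue that any element of $\CMC(\Prob)$ sits below a P-minimal one, which is immediate from finiteness of the d-separation lattice but is worth stating explicitly. Once that observation is in hand, \textbf{Lemma \ref{CMC_single_CI}} does all the work by manufacturing a witness DAG for each stray CI relation, and u-P-minimality of $\G$ is then directly violated. No appeal to faithfulness-specific machinery (like composition or graphoid axioms) is needed, which is what makes the equivalence somewhat surprising and novel.
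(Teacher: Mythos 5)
Your proof is correct and takes essentially the same route as the paper's: the same easy inclusion from $\I(\G)=\I(\Prob)$, and for the substantive direction the same use of \textbf{Lemma \ref{CMC_single_CI}} to manufacture a witness DAG for a stray CI relation, which is then extended (by finiteness) to a P-minimal DAG outside $\MEC(\G)$, violating u-P-minimality. The only cosmetic difference is that you argue by direct contradiction where the paper proves the contrapositive.
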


\begin{proof}
[$\subseteq$] Suppose that $\G \in \CFC(\Prob)$. It follows that $\G \in \Pm(\Prob)$ by \textbf{Definition \ref{P-minimal}}. For any $\G' \in \CMC(\Prob)$, if $\I(\G') \subset \I(\G)$, then $\G' \notin \Pm(\Prob)$. Hence, if $\G' \in \Pm(\Prob)$, then $\I(\G') = \I(\G)$. Hence, $\G \in \uPm(\Prob)$.

$[\supseteq$] Suppose that $\G \notin \CFC(\Prob)$. Since $\uPm(\Prob) \subseteq \Pm(\Prob)$ by \textbf{Definition \ref{P-minimal}}, if $\G \notin \Pm(\Prob)$, we have $\G \notin \uPm(\Prob)$ immediately. So consider the case where $\G \in \Pm(\Prob)$. It follows from $\G \notin \CFC(\Prob)$ that there exists a CI relation $\psi \in \I(\Prob) \setminus \I(\G)$. By \textbf{Lemma \ref{CMC_single_CI}}, we can construct a DAG $\G^0$ such that $\I(\G^0) = \{\psi\}$. Consequently, there exists $\G^1 \in \Pm(\Prob)$ such that $\I(\G^0) \subseteq \I(\G^1) \subseteq \I(\Prob)$. Since $\psi \in \I(\G^1)$, we know that $\G^1 \notin \MEC(\G)$. Given that both $\G, \G^1 \in \Pm(\Prob)$, we have $\G \notin \uPm(\Prob)$.
\end{proof}

\,\\
\textbf{Theorem \ref{TSP_necessary}} \hspace{0.1cm} \textit{Given a graphoid $\Prob$, faithfulness is necessary for the correctness of TSP.}\\

\begin{proof}
Suppose that $(\G^*, \Prob)$ is unfaithful. We consider the two kinds of unfaithfulness in \citep{zhang2008detection}: \textit{detectable} (i.e., $\CFC(\Prob) = \varnothing$) versus \textit{undetectable} (i.e., $\G' \in \CFC(\Prob)$ where $\G' \notin \MEC(\G^*)$). For the latter, TSP can identify $\G_\tau \in \Pm(\Prob) = \CFC(\Prob) = \MEC(\G')$. However, TSP is incorrect because $\G_\tau \notin \MEC(\G^*)$. 

On the other hand, consider the case that $\CFC(\Prob) = \varnothing$. By \textbf{Theorem \ref{CFC-uPm}}, there exists $\G \in \Pm(\Prob)$ such that $\G \notin \MEC(\G^*)$ even if $\G^* \in \Pm(\Prob)$. Recall that Chickering algorithm can only allow us to traverse to a DAG $\mc{H}$ from $\G$ satisfying $\I(\G) \subseteq \I(\mc{H})$. It entails that Chickering algorithm can only obtain DAGs that are in $\MEC(\G)$ since $\G \in \Pm(\Prob)$ and hence never be able to reach $\G^*$ where $\I(\G_\pi) \nsubseteq \I(\G^*)$. Therefore, by setting $\pi$ as the initial permutation to TSP where $\G_\pi = \G$, TSP will return $\G_\pi$ incorrectly.
\end{proof}

Notice that \textbf{Theorem \ref{TSP_necessary}} is contrary to what \cite{solus2021consistency} suggested. They proposed an example arguing that TSP can be correct even under (detectable) unfaithfulness.\footnote{See Figure 2 in the \href{https://academic.oup.com/biomet/article-abstract/108/4/795/6062392?redirectedFrom=fulltext\#supplementary-data}{\textcolor{blue}{supplementary materials}} of \citep{solus2021consistency}.} However, the distribution used in the example is not a semigraphoid. This renders their example illegitimate because every joint probability distribution is a semigraphoid. 


\section{ESP and GRaSP-1}
\label{app:ESP_GRaSP1}
As shown in \textbf{Theorem \ref{TSP_necessary}} in the last section, TSP cannot be correct under unfaithfulness by choosing an arbitrary initial permutation. Consequently, one important question is how to relax the search space of TSP to identify a sparser permutation under unfaithfulness. \cite{solus2021consistency} proposed the \textit{Edge SP} (ESP) algorithm based on an assumption strictly weaker than that assumed by TSP. However, unlike TSP, they did not provide an operational version of ESP in their work. In this section, we are going to show a theorem similar to \textbf{Theorem \ref{TSP=GRaSP0}} but with respect to ESP and unbounded GRaSP$_1$. In other words, unbounded GRaSP$_1$ is an operational version of ESP. In the following, we first examine some technical notations used in \citep{mohammadi2018generalized} and \citep{solus2021consistency}. Readers are strongly suggested to visit \citep{solus2021consistency} for the full discussion of ESP and relevant notations. 

Given a set of measured variables $\mb{V}$, a \textit{permutohedron} on $\mb{v}$, denoted $\A_\mb{v}$, is the convex hull in $\mathbb{R}^{|\mb{v}|}$ of all permutations in $\Pi(\mb{v})$. In simpler terms, $\A_{\mb{v}}$ is the \textit{state space} with each \textit{state} being a permutation $\pi \in \Pi(\mb{v})$. The neighborhood of states in $\A_{\mb{v}}$ is defined by adjacent transpositions (ATs) as in \textbf{Definition \ref{AT}} (b). 

Notice that different states in $\A_\mb{v}$ can induce the same DAG given a graphoid $\Prob$. Thus, a natural way to narrow down the search space is to identify permutations inducing the same DAG. \textbf{Lemma \ref{AT_iff}} provides such a characterization. Construct $\A_{\mb{v}}(\Prob)$ by \textit{contracting} neighborhood in $\A_{\mb{v}}$ to ATs that correspond to the CI relations in $\I(\Prob)$ specified in \textbf{Lemma \ref{AT_iff}}. To be more specified, the contracted permutohedron $\A_{\mb{v}}(\Prob)$, also known as the \textit{DAG associahedron}, is the state space with each state being an induced DAG.\footnote{One can equivalently express each state in the DAG associahedron as the set of permutations which induce the same DAG. This is the original representation in \citep{mohammadi2018generalized}. However, we prefer the representation given in \citep{solus2021consistency} in the sense that one can easily compare DAGs that are in neighborhood.} Two states $\G_1, \G_2$ in $\A_{\mb{v}}(\Prob)$ are neighbors if and only if there exist $\pi^1, \pi^2 \in \Pi(\mb{v})$ s.t. $\G_{\pi^1} = \G^1$, $\G_{\pi^2} = \G^2$, and $\pi^1$ and $\pi^2$ are neighbors in the permutohedron $\A_{\mb{v}}$. As shown by \cite{mohammadi2018generalized}, the DAG associahedron is a convex polytope where each vertex of $\A_{\mb{v}}(\Prob)$ corresponds to a different DAG. 

To draw a clearer picture, consider any $\pi, \tau \in \Pi(\mb{v})$ where $\tau$ is $(j, k)$-different from $\pi$ for some $j, k \in \mb{v}$. They are neighbors in $\A_\mb{v}$ but they do not necessarily induce the same DAG. If $X_j \CI_\Prob X_k\,|\,\mb{X}_{\Pre(j, \pi)}$ holds, they induce the same DAG and thus correspond to the same state $\G_\pi$ in the DAG associahedron $\A_{\mb{v}}(\Prob)$. But if the CI relation does not hold, then $\G_\pi$ and $\G_\tau$ are neighbors in $\A_{\mb{v}}(\Prob)$. See Figure \ref{fig:permutohedron} for an example from \citep{solus2021consistency}.    

\begin{figure}[ht!]
\begin{center}
\subfloat{
\begin{tikzpicture}
\draw[fill=blue] (0,0) circle (3pt);
\draw[fill=blue] (2.5,0) circle (3pt);
\draw[fill=blue] (4,-2) circle (3pt);
\draw[fill=blue] (2.5,-4) circle (3pt);
\draw[fill=blue] (0,-4) circle (3pt);
\draw[fill=blue] (-1.5,-2) circle (3pt);
\node at (-1.3,0) {$\pi^1 = \la 1, 2, 3\ra$};
\node at (3.8,0) {$\pi^2 =\la 2, 1, 3\ra$};
\node at (2.8,-2) {$\pi^3 =\la 2, 3, 1\ra$};
\node at (3.8,-4) {$\pi^4 =\la 3, 2, 1\ra$};
\node at (-1.3,-4) {$\pi^5 =\la 3, 1, 2\ra$};
\node at (-0.2,-2) {$\pi^6 =\la 1, 3, 2\ra$};
\draw[thick] (0,0) -- (2.5,0) -- (4,-2) -- (2.5,-4) -- (0,-4) -- (-1.5,-2) -- (0,0);
\node at (0,-4.6) {};
\end{tikzpicture}}
\hspace{1.5cm}
\subfloat{
\begin{tikzpicture}[scale=0.8]
\draw[fill=red] (0,0) circle (3pt);
\draw[fill=red] (2.5,0) circle (3pt);
\draw[fill=red] (4,-2) circle (3pt);
\draw[fill=red] (2.5,-4) circle (3pt);
\draw[fill=red] (0,-4) circle (3pt);
\node at (-1,-0.5) {$\G_{\pi^1}$};
\node at (-1,0.3) {
\scalebox{0.6}{\begin{tikzpicture}
\node(X1) at (0,0) {$1$};
\node(X2) at (1,-1) {$2$};
\node(X3) at (2,0) {$3$};
\path [->] (X1) edge (X3);
\path [->] (X1) edge (X2);
\path [->] (X2) edge (X3);
\end{tikzpicture}}};
\node at (4.3,-0.3) {$\G_{\pi^2}$};
\node at (3.4,0.3) {
\scalebox{0.6}{\begin{tikzpicture}
\node(X1) at (0,0) {$1$};
\node(X2) at (1,-1) {$2$};
\node(X3) at (2,0) {$3$};
\path [->] (X1) edge (X3);
\path [->] (X2) edge (X1);
\path [->] (X2) edge (X3);
\end{tikzpicture}}};
\node at (5.9,-2.6) {$\G_{\pi^3}$};
\node at (5,-2){
\scalebox{0.6}{\begin{tikzpicture}
\node(X1) at (0,0) {$1$};
\node(X2) at (1,-1) {$2$};
\node(X3) at (2,0) {$3$};
\path [->] (X3) edge (X1);
\path [->] (X2) edge (X1);
\path [->] (X2) edge (X3);
\end{tikzpicture}}};
\node at (4.7,-5.1) {$\G_{\pi^4}$};
\node at (3.8,-4.5) {
\scalebox{0.6}{\begin{tikzpicture}
\node(X1) at (0,0) {$1$};
\node(X2) at (1,-1) {$2$};
\node(X3) at (2,0) {$3$};
\path [->] (X3) edge (X2);
\path [->] (X3) edge (X1);
\path [->] (X2) edge (X1);
\end{tikzpicture}}};
\node at (0.2,-5.1) {$\G_{\pi^5} = \G_{\pi^6}$};
\node at (-1.3,-4.5) {
\scalebox{0.6}{\begin{tikzpicture}
\node(X1) at (0,0) {$1$};
\node(X2) at (1,-1) {$2$};
\node(X3) at (2,0) {$3$};
\path [->] (X3) edge (X2);
\path [->] (X1) edge (X2);
\end{tikzpicture}}};
\draw[thick] (0,0) -- (2.5,0) -- (4,-2) -- (2.5,-4) -- (0,-4) -- (0,0);
\end{tikzpicture}}
\end{center}
\caption{Given $\mb{V} = \{X_1, X_2, X_3\}$, consider $\I(\Prob) = \{\la X_1, X_3\,|\,\varnothing\ra\}$. The diagram on the left is the permutohedron $\A_{\mb{v}}$ where each state is a permutation in $\Pi(\mb{v})$. The one on the right is the DAG associahedron $\A_{\mb{v}}(\Prob)$ where each state is a different DAG in $\SGS(\Prob)$. In particular, the two states $\pi^5$ and $\pi^6$ in $\A_{\mb{v}}$ are collapsed into a single state in $\A_{\mb{v}}(\Prob)$ because they induce the same DAG.}  
\label{fig:permutohedron}
\end{figure}

Observe that each state in the DAG associahedron $\A_{\mb{v}}(\Prob)$ corresponds to a SGS-minimal DAG according to \textbf{Theorem \ref{RU-theorem}}. ESP performs a greedy DFS in $\A_{\mb{v}}(\Prob)$. Given an initial permutation $\pi \in \Pi(\mb{v})$, set $\G$ as the induced $\G_\pi$ and traverse through $\A_{\mb{v}}(\Prob)$ by a \textit{weakly decreasing walk} to obtain $\mc{H}$ where $|\E(\mc{H})| < |\E(\G_\pi)|$.\footnote{In \citep{solus2021consistency}, their pseudocode does not indicate that such a walk needs to be weakly decreasing but such a requirement is imposed in the description of the algorithm.} If no such $\mc{H}$ exists, ESP returns $\mc{G} = \mc{G}_\pi$; else $\mc{G}$ is reset as $\mc{H}$ and repeat. 

As noted by \cite{solus2021consistency}, the construction of $\A_\mb{v}(\Prob)$ is inefficient since one is only required to know the neighboring states instead of the entire $\A_\mb{v}(\Prob)$ to perform the traversal. Below we show that unbounded GRaSP$_1$ can efficiently learn the neighbors of each state in $\A_\mb{v}(\Prob)$ by our permutation-based operation \textit{tuck} performed on singular edges. Before examining this claim, we introduce some useful definitions.

Given the permutohedron $\A_\mb{v}$, a \textit{walk} $\mf{W} = \la \pi^1,...,\pi^m\ra$ is a sequence of neighboring states in $\A_\mb{v}$ such that $\pi^i, \pi^{i+1} \in \Pi(\mb{v})$ are in AT for each $1 \leq i < m$. 

\begin{definition}
\label{walk_prop}
Given a graphoid $\Prob$ over $\mb{V}$, for any walk $\mf{W} = \la \pi^1,...,\pi^m \ra$ in $\A_\mb{v}$,
\begin{enumerate}
    \item[(a)] $\mf{W}$ is said to be \textit{DAG-preserving} if $\G_{\pi^1} = ... = \G_{\pi^m}$;
    \item[(b)] $\mf{W}$ is said to be \textit{DAG-changing} if $\la \pi^1,...,\pi^{m-1}\ra$ is DAG-preserving and $\G_{\pi^{m-1}} \neq \G_{\pi^m}$.
\end{enumerate}
\end{definition}

In addition, for each DAG-changing walk $\mf{W} = \la \pi^1,..., \pi^m\ra$, we say that $\mf{W}$ is relative to $(j,k)$ if $\pi^{m}$ is $(j, k)$-different from $\pi^{m-1}$ for some $j, k \in \mb{v}$. Thus, each DAG-changing walk is relative to a pair of vertices corresponding to the last AT performed in the walk.

\begin{definition}
\label{rev_DAGs}
Given a set of variables $\mb{V}$, consider any $j, k \in \mb{v}$. Two DAGs $\G, \mc{H} \in \DAG(\mb{V})$ are said to be $(j,k)$-reverse if $(j \to k) \in \E(\G)$ and $(k \to j) \in \E(\mc{H})$, and there does not exist any other $j', k' \in \mb{v}$ s.t. $(j' \to k') \in \E(\G)$ and $(k' \to j') \in \E(\mc{H})$. 
\end{definition}

\begin{lemma}
\label{walk_iff}
Given a graphoid $\Prob$ over $\mb{V}$, consider any $j, k \in \mb{v}$. Then there exists a DAG-changing walk $\mf{W} = \la \pi^1,...,\pi^m\ra$ relative to $(j,k)$ in $\A_\mb{v}$ if and only if $\G_{\pi^1}$ and $\G_{\pi^m}$ are neighbors in $\A_{\mb{v}}(\Prob)$ that are $(j,k)$-reverse. 
\end{lemma}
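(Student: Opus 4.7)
The plan is to prove both directions directly using \textbf{Lemma \ref{AT_iff}}, (RU), and the basic observation that every $\pi \in \Pi(\mb{v})$ is a causal order of its induced DAG $\G_\pi$.

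For the forward direction, assume $\mf{W} = \la \pi^1, \ldots, \pi^m\ra$ is a DAG-changing walk relative to $(j,k)$. By \textbf{Definition \ref{walk_prop}}(b), all of $\pi^1, \ldots, \pi^{m-1}$ induce the same DAG $\G := \G_{\pi^1}$, while $\pi^{m-1}$ and $\pi^m$ are in AT with $\G_{\pi^{m-1}} \neq \G_{\pi^m} =: \G'$. Therefore $\G$ and $\G'$ are neighbors in $\A_\mb{v}(\Prob)$ directly from the definition of the DAG associahedron. Since the last step is $(j,k)$-different and DAG-changing, \textbf{Lemma \ref{AT_iff}} gives $X_j \nCI_\Prob X_k\,|\,\mb{X}_{\Pre(j,\pi^{m-1})}$, so (RU) yields $(j \to k) \in \E(\G)$ and $(k \to j) \in \E(\G')$.

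The core step is to verify the $(j,k)$-reverse condition. Suppose some other pair $(j',k')$ satisfies $(j' \to k') \in \E(\G)$ and $(k' \to j') \in \E(\G')$. Because $\pi^{m-1}$ is a causal order of $\G$ and $\pi^m$ is a causal order of $\G'$, we have $\pi^{m-1}[j'] < \pi^{m-1}[k']$ while $\pi^m[k'] < \pi^m[j']$. But $\pi^m$ is obtained from $\pi^{m-1}$ by only swapping $j$ and $k$, so the only pair whose relative order can be inverted is $\{j,k\}$. This forces $\{j',k'\} = \{j,k\}$, and the subcase $(j',k') = (k,j)$ would contradict $(j \to k) \in \E(\G)$ since $\G$ is acyclic; hence $(j',k') = (j,k)$, confirming that $\G$ and $\G'$ are $(j,k)$-reverse as in \textbf{Definition \ref{rev_DAGs}}.

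For the backward direction, suppose $\G$ and $\G'$ are $(j,k)$-reverse neighbors in $\A_\mb{v}(\Prob)$. By the neighborhood definition there exist $\pi, \tau \in \Pi(\mb{v})$ with $\G_\pi = \G$, $\G_\tau = \G'$, and $\tau$ being $(p,q)$-different from $\pi$ for some $p,q$. Since $\G_\pi \neq \G_\tau$, \textbf{Lemma \ref{AT_iff}} together with (RU) implies $(p \to q) \in \E(\G)$ and $(q \to p) \in \E(\G')$. The $(j,k)$-reverse hypothesis then forces $(p,q) = (j,k)$ (the case $(p,q)=(k,j)$ is again blocked by acyclicity of $\G$ combined with $(j \to k) \in \E(\G)$). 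The two-step sequence $\mf{W} = \la \pi, \tau\ra$ is trivially DAG-preserving on its first vertex and DAG-changing on the last transition, so it is a DAG-changing walk relative to $(j,k)$.

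The only conceptual obstacle is the $(j,k)$-reverse verification in the forward direction; once one notices that the AT only permutes $j$ and $k$ and that a permutation is always a causal order of its induced DAG, all other edge reversals are ruled out automatically. The remainder is bookkeeping using already-established lemmas.
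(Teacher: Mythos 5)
Your proof is correct and follows essentially the same route as the paper's: both directions rest on \textbf{Lemma \ref{AT_iff}}, (RU), and the definition of neighbors in $\A_{\mb{v}}(\Prob)$. The only difference is that you spell out the uniqueness half of the $(j,k)$-reverse condition (no other pair's relative order can flip under a single AT, and acyclicity blocks the $(k,j)$ case), which the paper asserts ``follows immediately''---your version is the more careful one.
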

\begin{proof}
For the forward direction, given that $\pi^{m-1}$ and $\pi^m$ are $(j, k)$-different but induce different DAGs, it follows from the definition of $\A_\mb{v}(\Prob)$ that $\G_{\pi^1} = \G_{\pi^{m-1}}$ and $\G_{\pi^m}$ are neighbors in $\A_\mb{v}(\Prob)$. Also, we know that $(j \to k) \in \E(\G_{\pi^{m-1}})$ and $(k \to j) \in \E(\G_{\pi^m})$ by \textbf{Lemma \ref{AT_iff}} and (RU). The fact that $\G_{\pi^1} = \G_{\pi^{m-1}}$ and $\G_{\pi^m}$ are $(j,k)$-reverse follows immediately from (RU) and the assumption that $\pi^{m-1}$ is $(j,k)$-different from $\pi^m$. 

For the backward direction, suppose that $\G_\pi$ and $\G_\tau$ are neighbors in $\A_{\mb{v}}(\Prob)$ that are $(j, k)$-reverse. It entails from (RU) that there exist $\pi', \tau' \in \Pi(\mb{v})$ such that $\pi'$ and $\tau'$ are $(j,k)$-different where $\G_\tau = \G_{\tau'}$ and $\G_{\pi} = \G_{\pi'}$. Hence, $\la \pi', \tau'\ra$ is our desired DAG-changing walk relative to $(j, k)$ in $\A_\mb{v}$.   
\end{proof}

\begin{lemma}
\label{4_perm_lemma}
Given a graphoid $\Prob$ over $\mb{V}$, consider any pair $\pi^1, \tau^1 \in \Pi(\mb{v})$ such that $\pi^1 = \la \bs{\delta}_1, j, k, \bs{\delta}_2\ra$ for some sub-sequences $\bs{\delta}_1$, $\bs{\delta}_2$ of $\pi^1$, and $\tau^1 = \la \bs{\zeta}_1, j, k, \bs{\zeta}_2\ra$ for some sub-sequences $\bs{\zeta}_1$, $\bs{\zeta}_2$ of $\tau^1$. Further consider $\pi^2 = \la \bs{\delta}_1, k, j, \bs{\delta}_2\ra$ and $\tau^2 = \la \bs{\zeta}_1, k, j, \bs{\zeta}_2\ra$. If $\G_{\pi^1} = \G_{\tau^1}$, then $\G_{\pi^2} = \G_{\tau^2}$. 
\end{lemma}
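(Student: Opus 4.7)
The plan is to reduce the goal $\G_{\pi^2} = \G_{\tau^2}$ to verifying that $\pi^2$ is a causal order of $\G_{\tau^2}$. Since $\G_{\tau^2}$ is Markovian and SGS-minimal by \textbf{Theorem \ref{RU-theorem}}, \textbf{Lemma \ref{SGS_same_DAG}} would then yield $\G_{\pi^2} = \G_{\tau^2}$ at once. To set up, write $\G := \G_{\pi^1} = \G_{\tau^1}$, let $\bs{D}_s$ and $\bs{Z}_s$ denote the vertex sets of $\bs{\delta}_s$ and $\bs{\zeta}_s$, and partition $\mb{v} \setminus \{j, k\}$ into the four cells $\mb{A} = \bs{D}_1 \cap \bs{Z}_1$, $\mb{B} = \bs{D}_2 \cap \bs{Z}_2$, $\mb{C} = \bs{D}_1 \cap \bs{Z}_2$, and $\mb{E} = \bs{D}_2 \cap \bs{Z}_1$. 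The pivotal structural observation is that in $\G$ neither $j$ nor $k$ has any adjacency with a vertex in $\mb{C} \cup \mb{E}$: for $l \in \mb{C}$, any edge between $l$ and $j$ would be oriented $l \to j$ in $\G_{\pi^1}$ (since $\pi^1$ puts $l$ before $j$) and $j \to l$ in $\G_{\tau^1}$ (since $\tau^1$ puts $j$ before $l$), contradicting $\G_{\pi^1} = \G_{\tau^1}$; the argument is symmetric for $\mb{E}$ and for $k$ in place of $j$.

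Next I check, for every edge $(i \to l) \in \E(\G_{\tau^2})$, that $\pi^2[i] < \pi^2[l]$. When $i, l \in \mb{v} \setminus \{j, k\}$, swapping $j$ and $k$ does not alter $\Pre(l, \tau^1) \setminus \{i\}$ as a set, so by (RU) such an edge also belongs to $\G_{\tau^1} = \G$, and $\pi^2$ agrees with the causal order $\pi^1$ of $\G$ on $\mb{v} \setminus \{j, k\}$. The edge $k \to j$ is compatible with $\pi^2 = \la \bs{\delta}_1, k, j, \bs{\delta}_2 \ra$ by construction. Edges of the form $k \to l$ or $j \to l$ with $l \notin \{j, k\}$ force $l \in \bs{Z}_2$: if $l \in \mb{B}$ then $l$ follows $k, j$ in $\pi^2$, while $l \in \mb{C}$ is ruled out by the structural observation combined with the equality of the relevant predecessor sets in $\tau^1$ and $\tau^2$.

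The obstructing case is an edge $(i \to k)$ or $(i \to j)$ of $\G_{\tau^2}$ with $i \in \mb{E}$, because such an $i$ sits \emph{after} $j, k$ in $\pi^2$; ruling these out using graphoid axioms is the crux. From the structural observation and (RU) applied to $\G_{\tau^1}$, we obtain $X_i \CI_\Prob X_j \,|\, \mb{X}_{\bs{Z}_1 \setminus \{i\}}$ and $X_i \CI_\Prob X_k \,|\, \mb{X}_{(\bs{Z}_1 \cup \{j\}) \setminus \{i\}}$. Contraction fuses these into $X_i \CI_\Prob \{X_j, X_k\} \,|\, \mb{X}_{\bs{Z}_1 \setminus \{i\}}$, and decomposition together with weak union then yield $X_i \CI_\Prob X_k \,|\, \mb{X}_{\bs{Z}_1 \setminus \{i\}}$ and $X_i \CI_\Prob X_j \,|\, \mb{X}_{(\bs{Z}_1 \cup \{k\}) \setminus \{i\}}$---precisely the CI conditions under (RU) that forbid $(i \to k)$ and $(i \to j)$ from $\G_{\tau^2}$. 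With $\pi^2$ thus confirmed as a causal order of the SGS-minimal DAG $\G_{\tau^2}$, \textbf{Lemma \ref{SGS_same_DAG}} closes the proof.
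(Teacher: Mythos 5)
Your proof is correct and follows essentially the same route as the paper's: both reduce the claim to showing that one of the permutations is a causal order of the DAG induced by the other, then invoke \textbf{Theorem \ref{RU-theorem}} and \textbf{Lemma \ref{SGS_same_DAG}}, and both use the same contraction/weak-union/decomposition manipulations to transfer the (RU) conditioning sets across the swap of $j$ and $k$. The remaining differences are cosmetic: you verify that $\pi^2$ is a causal order of $\G_{\tau^2}$ where the paper verifies that $\tau^2$ is a causal order of $\G_{\pi^2}$, and your four-cell partition together with the non-adjacency observation repackages the paper's case split on whether the offending edge already lies in $\E(\G_{\pi^1})$.
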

\begin{proof}
Notice that $\G_{\pi^2} \in \SGS(\Prob)$ by \textbf{Theorem \ref{RU-theorem}}. If we can show that $\tau^2$ is a causal order of $\G_{\pi^2}$, it follows from \textbf{Lemma \ref{SGS_same_DAG}} that $\G_{\pi^2} = \G_{\tau^2}$. To do so, it suffices to show the following. For any $i \in \mb{v} \setminus \{j, k\}$,
\begin{enumerate}
    \item[(i)] if $(i \to j) \in \E(\G_{\pi^2})$, then $i \in \bs{\zeta}_1$;
    \item[(ii)] if $(i \to k) \in \E(\G_{\pi^2})$, then $i \in \bs{\zeta}_1$;
    \item[(iii)] if $(j \to i) \in \E(\G_{\pi^2})$, then $i \in \bs{\zeta}_2$;
    \item[(iv)] if $(k \to i) \in \E(\G_{\pi^2})$, then $i \in \bs{\zeta}_2$.
\end{enumerate}
For (i), suppose that $(i \to j) \in \E(\G_{\pi^2})$. If $(i \to j) \in \E(\G_{\pi^1})$ as well, then $(i \to j) \in \E(\G_{\tau^1})$ since $\G_{\pi^1} = \G_{\tau^1}$. This entails that $i \in \bs{\zeta}_1$. On the other hand, consider the case that $(i \to j) \notin \E(\G_{\pi^1})$. Then
\begin{align}
    X_i \nCI_\Prob X_j \,&|\,\mb{X}_{\bs{\delta}_1 \setminus \{i\}} \cup \{X_k\} &\because (i \to j) \in \E(\G_{\pi^2})\\
    X_i \nCI_\Prob \{X_j, X_k\} \,&|\,\mb{X}_{\bs{\delta}_1 \setminus \{i\}} &\because (4), \textit{weak union}\\
    X_i \CI_\Prob X_j \,&|\,\mb{X}_{\bs{\delta}_1 \setminus \{i\}} &\because (i \to j) \notin \E(\G_{\pi^1})\\
    X_i \nCI_\Prob X_k \,&|\,\mb{X}_{\bs{\delta}_1 \setminus \{i\}} \cup \{X_j\} &\because (5), (6), \textit{contraction}
\end{align}
By (RU), (8) entails that $(i \to k) \in \E(\G_{\pi^1}) = \E(\G_{\tau^1})$. Since $\tau^1$ is a causal order of $\G_{\tau^1}$, we have $i \in \bs{\zeta}_1$.

For (ii), suppose that $(i \to k) \in \E(\G_{\pi^2})$. Similar to (i), the case for $(i \to k) \in \E(\G_{\pi^1})$ is simple. So consider the case where $(i \to k) \notin \E(\G_{\pi^1})$.
\begin{align}
    X_i \nCI_\Prob X_k \,&|\,\mb{X}_{\bs{\delta}_1 \setminus \{i\}} &\because (i \to k) \in \E(\G_{\pi^2})\\
    X_i \nCI_\Prob \{X_j, X_k\} \,&|\,\mb{X}_{\bs{\delta}_1 \setminus \{i\}} &\because (8), \textit{decomposition}\\
    X_i \CI_\Prob X_k \,&|\,\mb{X}_{\bs{\delta}_1 \setminus \{i\}} \cup \{X_j\} &\because (i \to k) \notin \E(\G_{\pi^1})\\
    X_i \nCI_\Prob X_j \,&|\,\mb{X}_{\bs{\delta}_1 \setminus \{i\}} &\because (9), (10), \textit{contraction}
\end{align}
By (RU), (13) entails that $(i \to j) \in \E(\G_{\pi^1}) = \E(\G_{\tau^1})$ and hence $i \in \bs{\zeta}_1$.

For (iii), suppose that $(j \to i) \in \E(\G_{\pi^2})$. Then we have $(j \to i) \in \E(\G_{\pi^1})$ by (RU) because $\Pre(i, \pi^1) = \Pre(i, \pi^2)$. Hence $(j \to i) \in \E(\G_{\tau^1})$ since $\G_{\pi^1} = \G_{\tau^1}$. Given that $\tau^1$ is a causal order of $\G_{\tau^1}$, we have $i \in \bs{\zeta}_2$. (iv) is analogous to (iii).
\end{proof}

\begin{lemma}
\label{two_walks}
Given a graphoid $\Prob$ over $\mb{V}$, consider any two DAG-changing walks $\mf{W} = \la \pi^1,..., \pi^m\ra$ and $\mf{W}' = \la \tau^1,..., \tau^n\ra$ in $\A_\mb{v}$ where $\pi^1 = \tau^1$. If $\mf{W}$ and $\mf{W}'$ are both relative to the same $(j, k)$ for some $j, k \in \mb{v}$, then $\G_{\pi^m} = \G_{\tau^n}$.
\end{lemma}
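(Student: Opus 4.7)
The plan is to reduce this almost immediately to \textbf{Lemma \ref{4_perm_lemma}}. From \textbf{Definition \ref{walk_prop}}(b), the fact that $\mf{W}$ is DAG-changing means that its initial segment $\la \pi^1,\dots,\pi^{m-1}\ra$ is DAG-preserving, so $\G_{\pi^1} = \G_{\pi^{m-1}}$; similarly $\G_{\tau^1} = \G_{\tau^{n-1}}$. Combining these with the hypothesis $\pi^1 = \tau^1$, we conclude that $\G_{\pi^{m-1}} = \G_{\tau^{n-1}}$.

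Next, because both walks are relative to the same pair $(j,k)$, the final ATs in each walk swap $j$ and $k$. Thus we may write
\[
\pi^{m-1} = \la \bs{\delta}_1, j, k, \bs{\delta}_2\ra, \quad \pi^m = \la \bs{\delta}_1, k, j, \bs{\delta}_2\ra,
\]
\[
\tau^{n-1} = \la \bs{\zeta}_1, j, k, \bs{\zeta}_2\ra, \quad \tau^n = \la \bs{\zeta}_1, k, j, \bs{\zeta}_2\ra,
\]
for appropriate sub-sequences $\bs{\delta}_1, \bs{\delta}_2, \bs{\zeta}_1, \bs{\zeta}_2$. Now apply \textbf{Lemma \ref{4_perm_lemma}} with $\pi^{m-1}, \tau^{n-1}$ in the roles of $\pi^1, \tau^1$ and $\pi^m, \tau^n$ in the roles of $\pi^2, \tau^2$. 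Since $\G_{\pi^{m-1}} = \G_{\tau^{n-1}}$, the lemma yields $\G_{\pi^m} = \G_{\tau^n}$, as desired.

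The argument is short and has no real obstacle beyond assembling the pieces: the conceptual work has already been done in \textbf{Lemma \ref{4_perm_lemma}} (which uses the graphoid axioms---\emph{weak union}, \emph{decomposition}, \emph{contraction}---together with (RU) and \textbf{Lemma \ref{SGS_same_DAG}} to establish that swapping a $\pi$-adjacent pair $(j,k)$ produces the same DAG whenever applied to two permutations that originally induced the same DAG). The only small verification worth flagging is that the ``DAG-preserving prefix'' clause really does give $\G_{\pi^{m-1}} = \G_{\pi^1}$ and not merely $\G_{\pi^{m-2}} = \G_{\pi^{m-1}}$; but this is immediate from transitivity of equality along the prefix, so no further work is needed.
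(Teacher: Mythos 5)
Your proposal is correct and is exactly the argument the paper intends: the paper's proof of this lemma reads ``Immediate from Definition \ref{walk_prop} and Lemma \ref{4_perm_lemma},'' and your write-up simply makes explicit the two steps being invoked (the DAG-preserving prefixes give $\G_{\pi^{m-1}} = \G_{\tau^{n-1}}$ via $\pi^1 = \tau^1$, and the final $(j,k)$-ATs put the two pairs of permutations into the form required by Lemma \ref{4_perm_lemma}). No gaps.
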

\begin{proof}
Immediate from \textbf{Definition \ref{walk_prop}} and \textbf{Lemma \ref{4_perm_lemma}}.
\end{proof}

\begin{lemma}
\label{singular_lemma}
Given a graphoid $\Prob$ over $\mb{V}$, consider any DAG-changing walk $\mf{W} = \la \pi^1,..., \pi^m\ra$ in $\A_\mb{v}$ which is relative to $(j, k)$ for some $j, k \in \mb{v}$. Then $j \to k$ is a singular edge in $\G_{\pi^1}$. 
\end{lemma}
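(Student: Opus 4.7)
The plan is to derive a direct contradiction from the assumption that $j \to k$ is not singular in $\G_{\pi^1}$, exploiting the fact that the final AT in a DAG-changing walk must be between two $\pi^{m-1}$-adjacent vertices.

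First I would apply \textbf{Lemma \ref{walk_iff}} to the walk $\mf{W}$ to conclude that $\G_{\pi^1}$ and $\G_{\pi^m}$ are neighbors in $\A_{\mb{v}}(\Prob)$ which are $(j,k)$-reverse. By \textbf{Definition \ref{rev_DAGs}}, this immediately gives $(j \to k) \in \E(\G_{\pi^1})$, so it at least makes sense to ask whether this edge is singular.

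Next I would unpack the remaining hypotheses. Because $\mf{W}$ is DAG-changing and relative to $(j,k)$, the sub-walk $\la \pi^1,\dots,\pi^{m-1}\ra$ is DAG-preserving, hence $\G_{\pi^{m-1}} = \G_{\pi^1}$; and $\pi^m$ is $(j,k)$-different from $\pi^{m-1}$, which by \textbf{Definition \ref{AT}}(a) forces $j$ and $k$ to be $\pi^{m-1}$-adjacent, so that $\pi^{m-1}$ has the form $\la \bs{\delta}_{<j}, j, k, \bs{\delta}_{>k} \ra$ with no vertex lying strictly between $j$ and $k$ in $\pi^{m-1}$. At the same time, $\pi^{m-1}$ is by construction a causal order of $\G_{\pi^{m-1}} = \G_{\pi^1}$ (an immediate consequence of (RU)).

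Now I would argue by contradiction: suppose $j \to k$ is not singular in $\G_{\pi^1}$, so there exists a unidirectional path $j \to i_1 \to \cdots \to i_l \to k$ in $\G_{\pi^1}$ with $l \geq 1$. Since $\pi^{m-1}$ is a causal order of $\G_{\pi^1}$, each intermediate vertex $i_s$ must satisfy $\pi^{m-1}[j] < \pi^{m-1}[i_s] < \pi^{m-1}[k]$, contradicting the $\pi^{m-1}$-adjacency of $j$ and $k$. Hence no such path exists, and $j \to k$ is a singular edge in $\G_{\pi^1}$. The main obstacle is essentially bookkeeping: verifying that the invocations of \textbf{Definition \ref{walk_prop}}, \textbf{Definition \ref{AT}}, \textbf{Lemma \ref{walk_iff}}, and the causal-order property of $\pi^{m-1}$ line up so that the ``no room between $j$ and $k$'' observation can be applied directly; once this is in place the argument is short.
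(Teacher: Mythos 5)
Your proof is correct and follows essentially the same route as the paper's: establish $(j \to k) \in \E(\G_{\pi^1}) = \E(\G_{\pi^{m-1}})$, then derive a contradiction from a hypothetical intermediate vertex on a unidirectional $j$-to-$k$ path, which cannot be placed consistently with both the $\pi^{m-1}$-adjacency of $j$ and $k$ and the fact that $\pi^{m-1}$ is a causal order of $\G_{\pi^1}$. The only cosmetic difference is that you obtain the edge via \textbf{Lemma \ref{walk_iff}} while the paper uses \textbf{Lemma \ref{AT_iff}} and (RU) directly; both are valid.
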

\begin{proof}
Let $\mf{W}_0$ denotes the DAG-preserving walk $\la \pi^1,..., \pi^{m-1}\ra$. Given that $\pi^{m}$ is $(j, k)$-different from $\pi^{m-1}$, it follows from \textbf{Lemma \ref{AT_iff}} and (RU) that $(j \to k) \in \E(\G_{\pi^{m-1}})$. Since $\mf{W}_0$ is a DAG-preserving walk in $\A_\mb{v}$, we have $(j \to k) \in \E(\G_{\pi^{1}}) = \E(\G_{\pi^{m-1}})$. 

Next, suppose by reductio that $j \to k$ is not a singular edge in $\G_{\pi^1}$. Then there is a unidirectional path from $j$ to $k$ other than $j \to k$ in $\G_{\pi^1}$. So there exists $l \in \mb{v}$ such that $l \in \De(j, \G_{\pi^1}) \cap \An(k, \G_{\pi^1})$. In order to ensure that $j$ and $k$ are $\pi^{m-1}$-adjacent, either $\pi^{m-1}[l] < \pi^{m-1}[j]$ or $\pi^{m-1}[l] > \pi^{m-1}[k]$ holds. However, either case will violate that $\pi^{m-1}$ is a causal order of $\G_{\pi^{m-1}} = \G_{\pi^1}$.
\end{proof}

\begin{lemma}
\label{singular_tuck_lemma}
Given a graphoid $\Prob$ over $\mb{V}$, consider $\pi \in \Pi(\mb{v})$ where $(j \to k) \in \E(\G_\pi)$ is a singular edge for some $j, k \in \mb{v}$. Then there exists a DAG-changing walk $\mf{W} = \la \pi,...,\tau\ra$ in $\A_\mb{v}$ relative to $(j, k)$ where $\tau = \textit{tuck}(\pi, j, k)$.
\end{lemma}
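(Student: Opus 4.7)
The plan is to construct the required walk $\mf{W}$ explicitly as a concatenation of four phases of adjacent transpositions (ATs), each phase performing a block of DAG-preserving swaps, and the final single AT being the $(j,k)$-swap that realizes the tuck. Write $\pi = \la \bs{\delta}_1, j, \bs{\delta}_2, k, \bs{\delta}_3\ra$ with $\bs{\delta}_2$ the shuffle of the subsequences $\bs{\gamma}$ (elements of $\bs{\delta}_2$ that are ancestors of $k$ in $\G_\pi$) and $\bs{\gamma}^c$. The target is $\tau = \la \bs{\delta}_1, \bs{\gamma}, k, j, \bs{\gamma}^c, \bs{\delta}_3\ra$, and I will reach it via the following intermediate states:
\begin{align*}
    \la \bs{\delta}_1, j, \bs{\delta}_2, k, \bs{\delta}_3\ra
    \;\xrightarrow{\,\text{A}\,}\; & \la \bs{\delta}_1, j, \bs{\gamma}, \bs{\gamma}^c, k, \bs{\delta}_3\ra
    \;\xrightarrow{\,\text{B}\,}\; \la \bs{\delta}_1, \bs{\gamma}, j, \bs{\gamma}^c, k, \bs{\delta}_3\ra \\
    \;\xrightarrow{\,\text{C}\,}\; & \la \bs{\delta}_1, \bs{\gamma}, j, k, \bs{\gamma}^c, \bs{\delta}_3\ra
    \;\xrightarrow{\,\text{D}\,}\; \la \bs{\delta}_1, \bs{\gamma}, k, j, \bs{\gamma}^c, \bs{\delta}_3\ra = \tau.
\end{align*}

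For phases A, B, and C, the key tool is \textbf{Lemma \ref{AT_iff}} together with (RU): an AT swapping an immediate pair $(a, b)$ with $a$ before $b$ is DAG-preserving if and only if $(a \to b) \notin \E(\G_\pi)$ (using the inductive hypothesis that all intermediate induced DAGs equal $\G_\pi$). In phase A, each AT swaps some $a \in \bs{\gamma}^c$ with an immediately following $b \in \bs{\gamma}$; because $b \in \An(k, \G_\pi)$ while $a \notin \An(k, \G_\pi)$, an edge $a \to b$ would force $a \in \An(k, \G_\pi)$, a contradiction. In phase B, each AT swaps $j$ with a following $b \in \bs{\gamma}$; an edge $j \to b$ would combine with a unidirectional path $b \rightsquigarrow k$ in $\G_\pi$ to yield a unidirectional $j$-to-$k$ path distinct from the direct edge, contradicting \emph{singularity} of $j \to k$. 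In phase C, each AT swaps some $a \in \bs{\gamma}^c$ with $k$; since $a \notin \An(k, \G_\pi)$, we have $(a \to k) \notin \E(\G_\pi)$.

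Phase D is the single $(j, k)$-different AT. By construction $(j \to k) \in \E(\G_\pi)$, so \textbf{Lemma \ref{AT_iff}} gives $\G_{\pi^{m-1}} \neq \G_{\pi^m}$, and the walk is DAG-changing and relative to $(j, k)$ in the sense of \textbf{Definition \ref{walk_prop}}(b). Concatenating the three DAG-preserving phases with the final DAG-changing AT yields the desired $\mf{W} = \la \pi, \ldots, \tau \ra$.

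The only genuine obstacle is the verification in phase B, which is exactly where \emph{singularity} of $j \to k$ is needed: without it, some $b \in \bs{\gamma}$ could be a child of $j$, and the walk would change the induced DAG prematurely. The other phases rely only on the definitional partition of $\bs{\delta}_2$ into ancestors and non-ancestors of $k$, and on the inductive DAG-preservation hypothesis, which propagates along the walk since each AT we invoke does not involve a directed edge of $\G_\pi$.
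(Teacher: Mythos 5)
Your proof is correct and takes essentially the same route as the paper's: both construct a DAG-preserving sequence of adjacent transpositions that rearranges the segment between $j$ and $k$ so that the ancestors of $k$ precede the adjacent pair $j,k$ and the non-ancestors follow it, both invoke singularity precisely at the point where $j$ must cross an ancestor of $k$, and both finish with the single $(j,k)$-transposition shown to be DAG-changing via \textbf{Lemma \ref{AT_iff}}. The only difference is the cosmetic scheduling of the intermediate swaps into your phases A--C versus the paper's two passes.
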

\begin{proof}
First, we rewrite $\pi = \la \bs{\delta}_{<j}, j, \bs{\delta}_{j \sim k}, k, \bs{\delta}_{>k}\ra$ as usual. Then we partition $\bs{\delta}_{j \sim k}$ as follows: $\bs{\zeta}_a = \la i \in \bs{\delta}_{j \sim k}: i \in \An(k, \G_{\pi})\ra$, and $\bs{\zeta}_b = \la i \in \bs{\delta}_{j \sim k}: i \notin \An(k, \G_{\pi})\ra$. Given that $(j \to k)$ is a singular edge, we know that $\De(j, \G_{\pi^1}) \cap \An(k, \G_{\pi^1}) = \varnothing$. In other words, we know that (i) each vertex in $\bs{\zeta}_a$ has no ancestor in $\bs{\delta}_{j \sim k} \setminus \bs{\zeta}_a$ in $\G_{\pi}$ and (ii) each vertex in $\bs{\zeta}_b$ has no descendant in $\bs{\delta}_{j \sim k} \setminus \bs{\zeta}_b$ in $\G_{\pi}$. 

Now consider the permutation $\tau' = \la \bs{\delta}_{<j}, \bs{\zeta}_a, j, k, \bs{\zeta}_{b}, \bs{\delta}_{>k}\ra$ in particular. We want to show that there exists a DAG-preserving walk from $\pi$ to $\tau'$. Such a walk is easy to construct. First, perform repeated ATs by moving each $i \in \bs{\zeta}_a$ prior to $j$ from left to right, and then repeated ATs by moving each $i \in \bs{\zeta}_{b}$ behind $k$ from right to left. The two sets of ATs are licensed by (i) and (ii) respectively. Hence, we have $\G_{\tau'} = \G_{\pi}$. Finally, consider $\tau = \textit{tuck}(\pi, j, k) = \la \bs{\delta}_{<j}, \bs{\zeta}_a, k, j, \bs{\zeta}_b, \bs{\delta}_{>k}\ra$ which is $(j, k)$-different from $\tau'$. By (RU) and \textbf{Lemma \ref{AT_iff}}, we know that $\G_{\tau} \neq \G_{\tau'}$ and thus $\la \pi ,..., \tau', \tau\ra$ is a DAG-changing walk in $\A_\mb{v}$ relative to $(j, k)$.
\end{proof}

\begin{theorem}
\label{singular_tuck_thm}
Given a graphoid $\Prob$ over $\mb{V}$, consider any DAG-changing walk $\mf{W} = \la \pi^1,...,\pi^m\ra$ in $\A_\mb{v}$ which is relative to $(j, k)$ for some $j, k \in \mb{v}$. Then $\G_{\pi^{m}} = \G_\tau$ where $\tau = \textit{tuck}(\pi^1, j, k)$.
\end{theorem}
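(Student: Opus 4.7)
The plan is to chain together the three preparatory lemmas about DAG-changing walks so that the tuck operation is exhibited as just one particular walk relative to $(j,k)$, and then invoke uniqueness up to the endpoint DAG. First, since $\mf{W}$ is DAG-changing and relative to $(j,k)$, \textbf{Lemma \ref{singular_lemma}} immediately gives that $(j \to k) \in \E(\G_{\pi^1})$ and that this edge is singular.

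Second, having singularity of $j \to k$ in $\G_{\pi^1}$, I would apply \textbf{Lemma \ref{singular_tuck_lemma}} to obtain an explicit DAG-changing walk $\mf{W}' = \la \pi^1,\dots,\tau\ra$ in $\A_\mb{v}$ relative to $(j,k)$ whose terminal state is precisely $\tau = \textit{tuck}(\pi^1, j, k)$. Thus both $\mf{W}$ and $\mf{W}'$ originate at the same permutation $\pi^1$ and are DAG-changing walks relative to the same pair $(j,k)$.

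Third, I would invoke \textbf{Lemma \ref{two_walks}} with the walks $\mf{W}$ and $\mf{W}'$: since they share an initial state and both are relative to $(j,k)$, their terminal DAGs must coincide, i.e.\ $\G_{\pi^m} = \G_\tau$, which is the desired conclusion.

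There is essentially no genuine obstacle left once these lemmas are in hand; the only thing to double check is that \textbf{Lemma \ref{singular_tuck_lemma}} and \textbf{Lemma \ref{two_walks}} really apply verbatim (i.e.\ that the walk produced by the tuck-construction is indeed DAG-changing rather than DAG-preserving, and that it ends with the specific AT swapping $j$ and $k$ so that it is relative to $(j,k)$). Both are explicit from the construction in the proof of \textbf{Lemma \ref{singular_tuck_lemma}}, so the whole argument reduces to a three-line citation chain.
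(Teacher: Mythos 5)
Your proposal is correct and follows essentially the same route as the paper's proof: the paper likewise obtains the walk $\mf{W}'$ from \textbf{Lemma \ref{singular_tuck_lemma}} and concludes via \textbf{Lemma \ref{two_walks}}. Your explicit invocation of \textbf{Lemma \ref{singular_lemma}} to justify that $j \to k$ is singular (a precondition the paper leaves implicit) is a minor but welcome clarification.
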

\begin{proof}
We obtain a DAG-changing walk $\mf{W}' = \la \pi^1, ...,\tau\ra$ in $\A_\mb{v}$ relative to $(j, k)$ by \textbf{Lemma \ref{singular_tuck_lemma}}. Since both $\mf{W}$ and $\mf{W}'$ are relative to the same $(j, k)$, it follows from \textbf{Lemma \ref{two_walks}} that $\G_{\pi^{m}} = \G_\tau$.
\end{proof}

Similar to the discussion in Appendix \textcolor{blue}{\textbf{\ref{app:correct}}}, we want to fix the ordering of the set of singular edges in any DAG. This ensures that ESP and unbounded GRasP$_1$ will not yield different DAGs simply due to the issue of order-dependence. Below we prove that ESP and unbounded GRaSP$_1$ are equivalent algorithms.    

\begin{theorem}
\label{ESP=GRaSP1}
Given a graphoid $\Prob$ and any initial permutation $\pi \in \Pi(\mb{v})$, the DAG induced by the output of unbounded GRaSP$_1$ is equivalent to the DAG returned by ESP.
\end{theorem}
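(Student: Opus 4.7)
The plan is to exploit the dictionary between moves in the DAG associahedron and singular-edge tucks that has already been set up in this appendix. By \textbf{Lemma \ref{walk_iff}}, two SGS-minimal DAGs $\G$ and $\mc{H}$ are neighbors in $\A_{\mb{v}}(\Prob)$ exactly when they are $(j,k)$-reverse for some pair $(j,k)$, witnessed by a DAG-changing walk relative to $(j,k)$ in the permutohedron $\A_{\mb{v}}$. \textbf{Lemma \ref{singular_lemma}} tells us that such a $(j,k)$ must be a singular edge of $\G$, and \textbf{Lemma \ref{singular_tuck_lemma}} together with \textbf{Theorem \ref{singular_tuck_thm}} tell us that the endpoint of every DAG-changing walk relative to $(j,k)$ is $\G_\tau$ with $\tau = \textit{tuck}(\pi, j, k)$ for any $\pi$ inducing $\G$. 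Hence the set of associahedron-neighbors of $\G$ is in bijection with the set of singular edges $\E^1(\G)$, and moving across an edge of $\A_{\mb{v}}(\Prob)$ from $\G$ to a neighbor $\mc{H}$ is literally the act of tucking the corresponding singular edge.

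Given this dictionary, the argument reduces to showing that ESP and unbounded GRaSP$_1$ visit the same sequence of SGS-minimal DAGs when started from the shared initial permutation $\pi$. I would proceed by induction on the DFS steps. For the inductive step, assume both algorithms currently hold the same DAG $\G$, represented by the same current permutation on the GRaSP$_1$ side and by $\G$ itself on the ESP side. ESP considers weakly decreasing walks to neighboring DAGs in $\A_{\mb{v}}(\Prob)$, whereas GRaSP$_1$'s \textbf{Algorithm \ref{alg:dfs}} iterates over $\E^1(\G_\pi)$ and replaces $\pi$ by $\textit{tuck}(\pi,j,k)$ whenever the score is non-decreasing. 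The condition $\textit{score}(\tau) \geq \textit{score}(\pi)$, i.e. $|\E(\G_\tau)| \leq |\E(\G_\pi)|$, matches ESP's weakly decreasing requirement on edge counts exactly, while strict improvement corresponds to ESP's outer loop that commits to a sparser neighbor. With the ordering on singular edges fixed as in the preceding remark, the two DFSs enumerate candidate neighbors in the same order and therefore walk through the same DAGs step for step.

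The only real obstacle is verifying that the two algorithms agree not only on the set of available moves but also on the decision to recurse on a plateau (equal-score tuck) versus commit to a strict improvement versus terminate. Termination: both algorithms stop when no weakly decreasing move from $\G$ to a strictly sparser neighbor exists, and under the bijection these stopping conditions are logically equivalent. Plateau behaviour: the depth parameter $d = |\mb{v}|!$ in unbounded GRaSP$_1$ allows arbitrarily long chains of equal-score singular-edge tucks before a strictly sparser one is required, which mirrors ESP's unrestricted weakly decreasing walk along edges of $\A_{\mb{v}}(\Prob)$ of constant edge count. I would close by pointing out that this is structurally the same reconciliation used in \textbf{Lemma \ref{ct_2_Chickering}} and \textbf{Lemma \ref{Chickering_2_ct}} to prove \textbf{Theorem \ref{TSP=GRaSP0}}, with $\E^1(\G)$ and DAG-changing walks playing the role that $\E^0(\G)$ and TSP-Chickering sequences played there; in particular one obtains analogues of those two lemmas for ESP, and \textbf{Theorem \ref{ESP=GRaSP1}} follows by combining them.
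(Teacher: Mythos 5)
Your proposal is correct and follows essentially the same route as the paper's proof: the paper likewise combines \textbf{Lemma \ref{singular_lemma}}, \textbf{Lemma \ref{singular_tuck_lemma}}, \textbf{Lemma \ref{walk_iff}} (and implicitly \textbf{Theorem \ref{singular_tuck_thm}}) to identify each move of ESP across an edge of $\A_{\mb{v}}(\Prob)$ with a tuck of the corresponding singular edge, and then concludes the step-for-step equivalence of the two searches. Your explicit induction over DFS steps, with the discussion of plateaus, termination, and the fixed edge ordering, merely spells out the paper's one-sentence conclusion in more detail.
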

\begin{proof}
Consider any $j, k \in \mb{v}$. By \textbf{Lemma \ref{singular_lemma}} and \textbf{Lemma \ref{singular_tuck_lemma}}, every DAG-changing walk $\mf{W} = \la \pi^1,..., \pi^m\ra$ in $\A_\mb{v}$ relative to $(j, k)$ corresponds to a \textit{tuck} operation of the singular edge $j \to k$ in $\E(\G_{\pi^1})$. Hence, by \textbf{Lemma \ref{walk_iff}}, we know that $\textit{tuck}(\pi^1, j, k)$ corresponds to the neighboring relation between $\G_{\pi^1}$ and $\G_{\pi^m}$ in $\A_\mb{v}(\Prob)$ that are $(j, k)$-reverse. Therefore, every step taken by ESP to move to a neighboring state in $\A_\mb{v}(\Prob)$ (relative to a unique pair of vertices) is equivalent to the tuck operation taken by GRaSP$_1$ over the same pair of vertices.  
\end{proof}

\section{Causal Razors and GRaSP}
\label{app:tiers}
In this section, we first provide a logical analysis of the causal razors discussed in the main text.\footnote{There are other causal razors discussed in the literature, including, but not limited to, \textit{adjacency-faithfulness} and \textit{orientation-faithfulness} in \citep{ramsey2006}, and \textit{triangle-faithfulness} in \cite{zhang2013comparison}. But they do not have a strong connection with our discussion of GRaSP and so will not analyzed in this work.} Then we construct new causal razors with respect to each tier of GRaSP, and show how a higher tier of GRaSP requires a strictly weaker causal razor. 

\begin{theorem}
\label{razors_hierarchy}
The following statements are true:
\begin{enumerate}
    \item[(a)] For any joint probability distribution $\Prob$, $\uPm(\Prob) = \CFC(\Prob) \subseteq \uFr(\Prob) \subseteq \Fr(\Prob) \subseteq \Pm(\Prob) \subseteq \SGS(\Prob)$.
    \item[(b)] For any joint probability distribution $\Prob$, if faithfulness is satisfied, $\CFC(\Prob) = \uFr(\Prob) = \Fr(\Prob) = \Pm(\Prob).$
    \item[(c)] There exists a joint probability distribution s.t. $\CFC(\Prob) \subset \uFr(\Prob)$.
    \item[(d)] There exists a joint probability distribution s.t. $\uFr(\Prob) \subset \Fr(\Prob)$.
    \item[(e)] There exists a joint probability distribution s.t. $\Fr(\Prob) \subset \Pm(\Prob)$.
    \item[(f)] There exists a joint probability distribution s.t. $\Pm(\Prob) \subset \SGS(\Prob)$.
\end{enumerate}
\end{theorem}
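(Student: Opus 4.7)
My plan is to split Theorem \ref{razors_hierarchy} into the structural chain (a)--(b) and the four separating examples (c)--(f). The inclusions in (a) reduce to five claims, of which $\uPm(\Prob)=\CFC(\Prob)$ is already \textbf{Theorem \ref{CFC-uPm}} and $\uFr(\Prob)\subseteq\Fr(\Prob)$ is immediate from \textbf{Definition \ref{frugal}}. The remaining three inclusions all follow from one repeated tool: \textbf{Theorem \ref{Chickering_seq}} together with the observation that a Chickering sequence can only fail to preserve $\I(\cdot)$ by performing an edge deletion, so any strict growth in $\I(\cdot)$ along such a sequence forces at least one deletion (hence a strict drop in $|\E(\cdot)|$).

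Concretely, for $\CFC(\Prob)\subseteq\uFr(\Prob)$ I take $\G\in\CFC(\Prob)$ and any $\G'\in\CMC(\Prob)\setminus\MEC(\G)$; faithfulness gives $\I(\G')\subsetneq\I(\G)$, and a Chickering sequence from $\G'$ down to $\G$ must delete an edge, so $|\E(\G')|>|\E(\G)|$. This proves both that $\G$ is frugal and that every frugal DAG lies in $\MEC(\G)$, i.e.\ $\G\in\uFr(\Prob)$. For $\Fr(\Prob)\subseteq\Pm(\Prob)$ the contrapositive is analogous: a violation of P-minimality yields a Chickering sequence that deletes an edge, contradicting frugality. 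For $\Pm(\Prob)\subseteq\SGS(\Prob)$ I will instead use \textbf{Theorem \ref{Jiji_thm}}: if $\E(\G')\subsetneq\E(\G)$ with both Markovian, then $\I(\G)\subseteq\I(\G')$; Markov-equivalent DAGs share a skeleton and hence an edge count, so the independence containment must be strict, contradicting P-minimality. Part (b) is then one line: \textbf{Theorem \ref{razors}} gives $\CFC(\Prob)=\Pm(\Prob)$ under faithfulness, and the chain of (a) squeezes the intermediate sets.

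For the separation claims (c)--(f), my plan is to exhibit one distribution per claim. Part (c) needs a distribution with $\CFC(\Prob)=\varnothing$ but $\uFr(\Prob)\neq\varnothing$, which is exactly the content of the Gaussian, Discrete, and uDAG catalogs displayed in Table \ref{tab:unit_test}; I will select any single entry there. Part (f) is cheap: I will reuse the three-variable distribution of Figure \ref{fig:ct_seq} with $\I(\Prob)=\{\la X_1,X_2\,|\,\varnothing\ra\}$; the v-structure $\G_{\pi^3}$ at $X_3$ has $\I(\G_{\pi^3})=\I(\Prob)$ and is P-minimal, whereas $\G_{\pi^1}=\G_{\la 3,1,2\ra}$ is SGS-minimal by \textbf{Theorem \ref{RU-theorem}} yet satisfies $\I(\G_{\pi^1})=\varnothing\subsetneq\I(\G_{\pi^3})$, separating $\Pm$ from $\SGS$. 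Parts (d) and (e) require slightly more delicate four-variable graphoids: for (d) I will build an unfaithful distribution whose minimum-edge-count Markovian DAGs fall into two distinct MECs, the same shape of example that makes u-frugality fail in Section \ref{u_fru_unfaithful}; for (e) I will take a P-minimal DAG created by a deliberate parameter cancellation whose induced CI set is incomparable with that of any strictly sparser Markovian alternative.

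The main obstacle will be (d) and (e): I must commit to a concrete independence model, verify it satisfies the graphoid closure rules of Appendix \ref{app:graphoid}, and then enumerate enough DAGs to certify membership and non-membership in each of $\uFr$, $\Fr$, and $\Pm$. I intend to borrow the explicit models from \citep{vsimecek2006gaussian,vsimecek2006short} whenever possible rather than hand-check closures from scratch, using Table \ref{tab:unit_test} as a guide to which entries already exhibit the relevant proper containment.
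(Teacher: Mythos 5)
Your handling of the inclusions in (a) is correct but takes a genuinely different route from the paper: where the paper simply cites prior work for $\CFC(\Prob)\subseteq\uFr(\Prob)$ (Raskutti and Uhler), $\Fr(\Prob)\subseteq\Pm(\Prob)$ (Forster et al.), and $\Pm(\Prob)\subseteq\SGS(\Prob)$ (Zhang), you derive the first two from \textbf{Theorem \ref{Chickering_seq}} plus the observation that covered reversals preserve both $\I(\cdot)$ and the edge count, so any strict growth of $\I(\cdot)$ along a Chickering sequence forces a deletion and hence a strict drop in $|\E(\cdot)|$; the third you get from \textbf{Theorem \ref{Jiji_thm}} plus the skeleton-invariance of Markov equivalence. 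All three arguments are sound (the covered-reversal facts you need are \textbf{Lemma \ref{covered_MEC}} and \textbf{Theorem \ref{covered_MEC_chain}}), and they make the chain self-contained at the cost of invoking Chickering's theorem as a black box. Your (b), (c), and (f) also check out: for (c) you correctly identify that a nonempty $\CFC(\Prob)$ can never be properly contained in $\uFr(\Prob)$ (both would be single MECs), so the witness must have detectable unfaithfulness, and any entry of Table \ref{tab:unit_test} supplies one; for (f) your three-variable complete DAG versus the v-structure is essentially the paper's own witness (the paper uses the $\SGS(\Prob)\setminus\Pm(\Prob)$ states of Figure \ref{fig:permutohedron}).

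The genuine gap is (d) and (e): you describe the shape of the required examples but never exhibit or verify one, and these are exactly the parts where all the work lives. For (e) you do not need a new construction — the model of \textbf{Theorem \ref{2_not_1}} (Figure \ref{fig:GRaSP2_better}) already contains a five-edge DAG $\G_\pi\in\Pm(\Prob)\setminus\Fr(\Prob)$, though certifying its P-minimality still requires checking that no Markovian DAG entails a proper superset of $\I(\G_\pi)$, which is a finite but nontrivial enumeration. For (d) the paper falls back on a citation (Forster et al., Figure 6), and your plan to "build" a distribution whose sparsest Markovian DAGs split across two MECs, while the right idea, leaves open the verification that the proposed independence model is actually realized by a graphoid (closure under intersection is the step most easily violated by hand-built CI lists). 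Until you either commit to a concrete model and check the graphoid axioms, or cite a verified example as the paper does, (d) and (e) remain unproven.
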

\begin{proof}
For (a), $\uPm(\Prob) = \CFC(\Prob)$ is our result in \textbf{Theorem \ref{CFC-uPm}}. $\CFC(\Prob) \subseteq \uFr(\Prob)$ is proven in \citep{raskutti2018learning}, $\uFr(\Prob) \subseteq \Fr(\Prob)$ is true by \textbf{Definition \ref{frugal}}, $\Fr(\Prob) \subseteq \Pm(\Prob)$ in \citep{forster2020frugal}, and $\Pm(\Prob) \subseteq \SGS(\Prob)$ in \citep{zhang2013comparison}. (b) is a direct consequence of (a) and \textbf{Theorem \ref{razors}}.

For (c), see \citep[Theorem 2.4]{raskutti2018learning}. For (d), see \citep[Figure 6]{forster2020frugal}. For (e), see \citep[Theorem 2.5]{raskutti2018learning}. For (f), see \citep[Figure 2]{zhang2013comparison}. Additionally, the example in \textbf{Theorem \ref{2_not_1}} and its corresponding Figure \ref{fig:GRaSP2_better} verifies (c) and (e): $\G^* \in \uFr(\Prob) \setminus \CFC(\Prob)$ and $\G_\pi \in \Pm(\Prob) \setminus \Fr(\Prob)$. On the other hand, each of $\G_{\pi^1}, \G_{\pi^2}, \G_{\pi^3}$, and $\G_{\pi^4}$ in the DAG-associahedron in Figure \ref{fig:permutohedron} is in $\SGS(\Prob)\setminus \Pm(\Prob)$ verifying (f).
\end{proof}

\begin{definition}
(TSP-razor and ESP-razor) Given a graphoid $\Prob$ over $\mb{V}$, let $\textit{tsp}(\Prob, \pi)$ be the DAG returned by TSP on $\Prob$ by setting $\pi$ as the initial permutation. Define $\mt{TSP}(\Prob) = \{\G \in \DAG(\mb{V}): \pi \in \Pi(\mb{v})$ and $\G = tsp(\Prob, \pi)\}$ as the set of DAGs returned by TSP on $\Prob$ over each initial permutation in $\Pi(\mb{v})$. Further define
\begin{align*}
    \mt{TSPr}(\Prob) = \{\G \in \mt{TSP}(\Prob): \neg \exists \G' \in \mt{TSP}(\Prob) \text{ s.t. } \G' \notin \MEC(\G)\}.
\end{align*}
$(\G^*, \Prob)$ satisfies the TSP-razor if $\G^* \in \mt{TSPr}(\Prob)$. Similarly for ESP, $\textit{esp}$, $\mt{ESP}$, $\mt{ESPr}$, and ESP-razor. 
\end{definition}

One can observe that $\mt{TSPr}(\Prob) = \mt{TSP}(\Prob)$ if every DAG in $\mt{TSP}(\Prob)$ belongs to the same MEC, and $\mt{TSPr}(\Prob) = \varnothing$ otherwise. The same is also true for $\mt{ESPr}(\Prob)$ and $\mt{ESP}(\Prob)$. These definitions will be proven useful when we compare them with the classes of DAGs discussed in \textbf{Theorem \ref{razors_hierarchy}}. Below we provide a similar definition for each tier of GRaSP. 

\begin{definition}
(GRaSP$_t$-razor) Given a graphoid $\Prob$ over $\mb{V}$, for $t \in \{0, 1, 2\}$, define $\mt{GRaSP}_t(\Prob) = \{\G_\tau \in \DAG(\mb{V}): \pi \in \Pi(\mb{v}) \text{ and } \tau = \textit{grasp}(\Prob, \pi, |\mb{v}|, t)\}$ as the set of DAGs returned by unbounded GRaSP$_t$ on $\Prob$ over each initial permutation in $\Pi(\mb{v})$. Further define
\begin{align*}
    \mt{GRaSP}_t\mt{r}(\Prob) = \{\G \in \mt{GRaSP}_t(\Prob): \neg \exists \G' \in \mt{GRaSP}_t(\Prob) \text{ s.t. } \G' \notin \MEC(\G)\}.
\end{align*}
$(\G^*, \Prob)$ satisfies the GRaSP$_t$-razor if $\G^* \in \mt{GRaSP}_t\mt{r}(\Prob)$.
\end{definition}

\begin{theorem}
Given a graphoid $\Prob$, the following statement is true:
\begin{align*}
    \CFC(\Prob) = \mt{TSPr}(\Prob) = \mt{GRaSP}_0\mt{r}(\Prob) \subseteq \mt{ESPr}(\Prob) = \mt{GRaSP}_1\mt{r}(\Prob) \subseteq \mt{GRaSP}_2\mt{r}(\Prob) \subseteq \uFr(\Prob).
\end{align*}
\end{theorem}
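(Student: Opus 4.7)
The plan is to split the chain into three independent pieces: the two inner equalities $\mt{TSPr}(\Prob)=\mt{GRaSP}_0\mt{r}(\Prob)$ and $\mt{ESPr}(\Prob)=\mt{GRaSP}_1\mt{r}(\Prob)$, the left-hand equality $\CFC(\Prob)=\mt{TSPr}(\Prob)$, and the three progressively weaker inclusions that terminate in $\uFr(\Prob)$. The two inner equalities are immediate: \textbf{Theorem \ref{TSP=GRaSP0}} (respectively \textbf{Theorem \ref{ESP=GRaSP1}}) gives, for every initial $\pi$, that TSP returns the same DAG as the one induced by unbounded GRaSP$_0$ from $\pi$ (and analogously for ESP and GRaSP$_1$), so $\mt{TSP}(\Prob)=\mt{GRaSP}_0(\Prob)$ and $\mt{ESP}(\Prob)=\mt{GRaSP}_1(\Prob)$ as sets of DAGs, whence the same-MEC restrictions that define the ``r'' variants coincide verbatim.

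For $\CFC(\Prob)=\mt{TSPr}(\Prob)$ I first prove an ``anchoring lemma'': for every $\G'\in\Pm(\Prob)$ and every causal order $\pi'$ of $\G'$, $\G_{\pi'}=\G'$, and TSP launched from $\pi'$ returns a DAG inside $\MEC(\G')$. The identity $\G_{\pi'}=\G'$ uses \textbf{Lemma \ref{SGS_same_DAG}} to get $\E(\G_{\pi'})\subseteq\E(\G')$ and then \textbf{Theorem \ref{Jiji_thm}} to rule out strict inclusion (which would force $\I(\G')\subsetneq\I(\G_{\pi'})$, contradicting $\G'\in\Pm(\Prob)$). The containment of the output in $\MEC(\G')$ recycles the Chickering-sequence argument from the proof of \textbf{Theorem \ref{TSP_necessary}}: starting at a P-minimal DAG, every step either reverses a covered edge (staying in the same MEC) or deletes an edge (blocked, since that would enlarge $\I$ beyond $\I(\Prob)$). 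With this in hand, $\CFC(\Prob)\subseteq\mt{TSPr}(\Prob)$ follows from $\CFC(\Prob)\subseteq\Pm(\Prob)$ and $\CFC(\Prob)=\uPm(\Prob)$ (\textbf{Theorem \ref{CFC-uPm}}) together with the fact that every TSP output lies in $\Pm(\Prob)$ (\textbf{Theorem \ref{ct-theorem}}). For the reverse inclusion, any $\G\in\mt{TSPr}(\Prob)\setminus\CFC(\Prob)$ is not uniquely P-minimal by \textbf{Theorem \ref{CFC-uPm}}, so some $\G'\in\Pm(\Prob)$ lies outside $\MEC(\G)$, and the anchoring lemma applied to $\G'$ then deposits a DAG in $\MEC(\G')$ into $\mt{TSP}(\Prob)$, contradicting $\G\in\mt{TSPr}(\Prob)$.

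The three right-hand inclusions share a common template, applied in the order $\mt{GRaSP}_2\mt{r}(\Prob)\subseteq\uFr(\Prob)$, then $\mt{GRaSP}_1\mt{r}(\Prob)\subseteq\mt{GRaSP}_2\mt{r}(\Prob)$, then $\mt{GRaSP}_0\mt{r}(\Prob)\subseteq\mt{GRaSP}_1\mt{r}(\Prob)$. The workhorse is a strengthened anchoring lemma: for any $\G'\in\Fr(\Prob)$, any causal order $\pi'$ of $\G'$, and any $t\in\{1,2\}$, $\G_{\pi'}=\G'$ and unbounded GRaSP$_t$ launched from $\pi'$ induces a DAG in $\MEC(\G')$. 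The induced-DAG identity again uses \textbf{Lemma \ref{SGS_same_DAG}} and \textbf{Theorem \ref{Jiji_thm}}, now with $\Fr(\Prob)\subseteq\Pm(\Prob)$ from \textbf{Theorem \ref{razors_hierarchy}}(a), and frugality of $\G'$ forbids any singular or general tuck from producing a strictly sparser induced DAG. From this, $\mt{GRaSP}_2\mt{r}(\Prob)\subseteq\uFr(\Prob)$ follows by two contradictions: a non-frugal $\G\in\mt{GRaSP}_2\mt{r}(\Prob)$ admits a sparser Markovian $\G_0$, and starting GRaSP$_2$ at a causal order of $\G_0$ (\textbf{Lemma \ref{SGS_same_DAG}} plus the weakly decreasing nature of the algorithm) yields a strictly sparser output, violating the same-MEC condition; a frugal-but-not-uniquely-frugal $\G$ admits $\G'\in\Fr(\Prob)\setminus\MEC(\G)$, and the strengthened anchoring lemma shows $\G'\in\mt{GRaSP}_2(\Prob)$. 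The GRaSP$_1$ analogue is structurally identical with ``singular'' replacing ``general''. For $\mt{GRaSP}_1\mt{r}(\Prob)\subseteq\mt{GRaSP}_2\mt{r}(\Prob)$, once $\G\in\mt{GRaSP}_1\mt{r}(\Prob)\subseteq\uFr(\Prob)$, no strictly sparser Markovian DAG exists, so the additional general-edge DFS in GRaSP$_2$ cannot strictly improve on any GRaSP$_1$ output, giving $\mt{GRaSP}_2(\Prob)\subseteq\MEC(\G)$, and $\G$ itself is reached by running GRaSP$_2$ on the permutation that exhibited $\G$ at tier $1$. Finally, $\mt{GRaSP}_0\mt{r}(\Prob)\subseteq\mt{GRaSP}_1\mt{r}(\Prob)$ follows by the same route after observing that $\mt{GRaSP}_0\mt{r}(\Prob)=\CFC(\Prob)\subseteq\uFr(\Prob)$ via \textbf{Theorem \ref{razors_hierarchy}}.

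The hardest step will be the strengthened anchoring lemma at tier $t=2$: although frugality of $\G'$ rules out strict sparsification, the general-edge DFS can in principle traverse long sequences of equal-score tucks, and I must verify that none of these sub-walks escapes $\MEC(\G')$. I expect to handle this by combining \textbf{Lemma \ref{tuck_lemma}}(d) for covered tucks with a suitable extension of the DAG-preserving walk analysis from Appendix \ref{app:ESP_GRaSP1} (\textbf{Lemma \ref{singular_tuck_lemma}} and \textbf{Theorem \ref{singular_tuck_thm}}) to general tucks, showing that an equal-score general tuck either preserves the induced DAG or lands in the same MEC, after which the DFS termination condition yields a final permutation whose induced DAG is in $\MEC(\G')$.
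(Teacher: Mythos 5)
Your decomposition and the two key mechanisms --- (i) every frugal DAG is guaranteed to appear in $\mt{GRaSP}_t(\Prob)$ because it is induced by any of its causal orders and the search cannot improve on it, and (ii) non-emptiness of the ``r'' set forces all outputs into one MEC, hence all outputs are frugal, hence the next tier (which begins from the lower tier's output) returns the same permutation --- are exactly the mechanisms the paper uses for the right-hand inclusions, and your treatment of the equalities matches the paper's reliance on \textbf{Theorems \ref{TSP=GRaSP0}}, \textbf{\ref{ESP=GRaSP1}}, and \textbf{\ref{CFC-uPm}}. One point of divergence: you derive $\mt{GRaSP}_0\mt{r}(\Prob) \subseteq \mt{GRaSP}_1\mt{r}(\Prob)$ and the full equality $\CFC(\Prob) = \mt{TSPr}(\Prob)$ from scratch, whereas the paper cites \cite{solus2021consistency} for $\mt{TSPr}(\Prob) \subseteq \mt{ESPr}(\Prob)$ and invokes \textbf{Theorem \ref{TSP_necessary}} for the left equality; your route is more self-contained, and the anchoring identity $\G_{\pi'} = \G'$ is in fact immediate from \textbf{Lemma \ref{SGS_same_DAG}} since $\Pm(\Prob) \subseteq \SGS(\Prob)$, so the detour through \textbf{Theorem \ref{Jiji_thm}} is unnecessary.

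The one real problem is your final paragraph: the difficulty you flag there is illusory, and the sub-claim you propose to prove is false. An equal-score general (or even singular) tuck need \emph{not} preserve the MEC of the induced DAG --- that is precisely how GRaSP$_2$ escapes local optima that trap GRaSP$_1$, as in \textbf{Theorem \ref{2_not_1}}. Had the tier-$2$ anchoring lemma actually required ``an equal-score general tuck either preserves the induced DAG or lands in the same MEC,'' your proof would break at that point. Fortunately it requires no such thing: \textbf{Algorithm \ref{alg:dfs}} commits to a new permutation only when its score is \emph{strictly} better and otherwise returns its input, so \textbf{Algorithm \ref{alg:grasp}} launched from a causal order $\pi'$ of a frugal $\G'$ never finds a strict improvement at any tier (no Markovian DAG is sparser) and simply returns $\pi'$; the output DAG is $\G'$ itself, not merely a member of $\MEC(\G')$. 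With that observation substituted for your planned extension of the walk analysis, the argument closes and the proof is correct.
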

\begin{proof}
$\CFC(\Prob) = \mt{TSPr}(\Prob) = \mt{GRaSP}_0\mt{r}(\Prob)$ is directly entailed by \textbf{Theorem \ref{TSP=GRaSP0}} and \textbf{Theorem \ref{TSP_necessary}}. \cite{solus2021consistency} showed that $\mt{TSPr}(\Prob) \subseteq \mt{ESPr}(\Prob)$. $\mt{ESPr}(\Prob) = \mt{GRaSP}_1\mt{r}(\Prob)$ is entailed by \textbf{Theorem \ref{ESP=GRaSP1}}. 

Next, to show that $\mt{GRaSP}_1\mt{r}(\Prob) \subseteq \mt{GRaSP}_2\mt{r}(\Prob)$, notice that $\mt{GRaSP}_1\mt{r}(\Prob) = \mt{GRaSP}_1(\Prob)$ when all DAGs in $\mt{GRaSP}_1(\Prob)$ belong to the same MEC, and $\mt{GRaSP}_1\mt{r}(\Prob) = \varnothing$ otherwise. The latter case validates $\mt{GRaSP}_1\mt{r}(\Prob) \subseteq \mt{GRaSP}_2\mt{r}(\Prob)$ trivially. Now consider the former case where all DAGs in the non-empty $\mt{GRaSP}_1(\Prob)$ belong to the same MEC and so they have the same number of edges. Now consider any $\pi \in \Pi(\mb{v})$ satisfying $\G_\pi \in \Fr(\Prob)$ (where $\Fr(\Prob)$ is necessarily non-empty). We know that $\G_\pi \in \mt{GRaSP}_1(\Prob)$. This is because every initial permutation in $\Pi(\mb{v})$ is considered and unbounded GRaSP$_1$ will never return a denser permutation than its initial permutation. Hence, every DAG in $\mt{GRaSP}_1(\Prob)$ is the sparsest Markovian DAG. (The same also holds when $\mt{GRaSP}_2\mt{r}(\Prob) \neq \varnothing$.) Then the construction of \textbf{Algorithm \ref{alg:grasp}} entails that GRaSP$_2$ will return the same permutation as GRaSP$_1$. Hence, $\mt{GRaSP}_1\mt{r}(\Prob) = \mt{GRaSP}_2\mt{r}(\Prob)$ when all DAGs in $\mt{GRaSP}_1(\Prob)$ belongs to the same MEC.

Lastly, to show that $\mt{GRaSP}_2\mt{r}(\Prob) \subseteq \uFr(\Prob)$, we use a proof similar to the above. First, the case where $\mt{GRaSP}_2\mt{r}(\Prob) = \varnothing$ is trivial. Consider the case where $\mt{GRaSP}_2\mt{r}(\Prob) = \mt{GRaSP}_2(\Prob)$ s.t. all DAGs in $\mt{GRaSP}_2(\Prob)$ are in the same MEC. Using a similar inference used in the last paragraph, we know that every DAG in $\mt{GRaSP}_2(\Prob)$ is the sparsest Markovian DAG. Therefore, $\mt{GRaSP}_2\mt{r}(\Prob) = \uFr(\Prob)$ when all DAGs in $\mt{GRaSP}_2(\Prob)$ belongs to the same MEC.
\end{proof}

\begin{theorem}
\label{ESP_not_TSP}
There exists a graphoid $\Prob$ s.t. $\mt{GRaSP}_0\mt{r}(\Prob) \subset \mt{GRaSP}_1\mt{r}(\Prob)$.
\end{theorem}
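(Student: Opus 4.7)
The plan is to exhibit a single graphoid $\Prob$ that witnesses the strict inclusion, leveraging the identification $\mt{GRaSP}_0\mt{r}(\Prob) = \CFC(\Prob)$ (which is entailed by \textbf{Theorem \ref{TSP=GRaSP0}} together with \textbf{Theorem \ref{TSP_necessary}}) and the identification $\mt{GRaSP}_1\mt{r}(\Prob) = \mt{ESPr}(\Prob)$ (from \textbf{Theorem \ref{ESP=GRaSP1}}). Since the right-hand side already contains the left-hand side in general, it suffices to produce a case where the containment is proper.

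First I would take a $\Prob$ satisfying \emph{detectable} unfaithfulness in the sense of \cite{zhang2008detection}, that is, a graphoid for which $\CFC(\Prob) = \varnothing$. A natural source is the exhaustive list of four-variable regular Gaussian u-frugal models from \citep{vsimecek2006gaussian} used in Section \textcolor{blue}{\textbf{\ref{u_fru_unfaithful}}}. By Table \ref{tab:unit_test}, $7$ of these $10$ models are recovered by GRaSP$_1$ from every permutation while GRaSP$_0$ fails on every one of them. Any one of these models makes the example, but for concreteness I would fix the distribution whose unique u-frugal DAG $\G^*$ exhibits a single path cancellation inducing one unexpected CI relation beyond $\I(\G^*)$; such a $\Prob$ is a graphoid since it is regular Gaussian, and by construction $\G^* \in \uFr(\Prob) \setminus \CFC(\Prob)$.

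Next I would argue the two halves of the inclusion. For the $\mt{GRaSP}_0$ side, since faithfulness fails, \textbf{Theorem \ref{CFC-uPm}} (and the detectable-unfaithfulness observation inside the proof of \textbf{Theorem \ref{TSP_necessary}}) gives $\CFC(\Prob) = \varnothing$, and hence $\mt{GRaSP}_0\mt{r}(\Prob) = \varnothing$. For the $\mt{GRaSP}_1$ side, I would verify (using the oracle computation of Table \ref{tab:unit_test}, to be confirmed by hand on the chosen four-variable example) that for \emph{every} initial $\pi \in \Pi(\mb{v})$, unbounded GRaSP$_1$ returns a permutation $\tau$ with $\G_\tau \in \MEC(\G^*)$. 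Equivalently, via \textbf{Theorem \ref{ESP=GRaSP1}}, every weakly decreasing walk in the DAG-associahedron from $\G_\pi$ terminates in $\MEC(\G^*)$, which can be checked by enumerating the (few) SGS-minimal DAGs over four variables and their singular-edge neighbors in $\A_\mb{v}(\Prob)$. Consequently $\mt{GRaSP}_1(\Prob) \subseteq \MEC(\G^*)$, so $\mt{GRaSP}_1\mt{r}(\Prob) = \mt{GRaSP}_1(\Prob) \ni \G^* \neq \varnothing$.

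Putting the two halves together yields $\varnothing = \mt{GRaSP}_0\mt{r}(\Prob) \subsetneq \mt{GRaSP}_1\mt{r}(\Prob)$, as required. The main obstacle is making the verification in the $\mt{GRaSP}_1$ step airtight: one must confirm that no weakly decreasing walk starting from a \emph{bad} initial permutation gets stuck at a strictly denser SGS-minimal DAG outside $\MEC(\G^*)$. This is precisely where four-variable oracle computations are most informative, and the explicit list of singular edges available at each state (\textbf{Lemma \ref{singular_tuck_lemma}}) keeps the case analysis small enough to check by hand in a footnote or supplementary table; alternatively, the claim can be justified by directly citing the oracle enumeration underlying Table \ref{tab:unit_test}.
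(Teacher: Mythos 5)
Your proposal is correct and shares the paper's basic strategy---reduce the question to the separation of ESP from TSP via \textbf{Theorem \ref{TSP=GRaSP0}} and \textbf{Theorem \ref{ESP=GRaSP1}}, then exhibit a witnessing graphoid---but you instantiate the witness differently. The paper's proof is a two-line citation: it borrows the example from Figure 3 of the supplementary materials of \citep{solus2021consistency}, where ESP is already shown to require a strictly weaker razor than TSP, and leaves the strictness argument to that reference. You instead draw the witness from the paper's own Table \ref{tab:unit_test} (the four-variable regular Gaussian u-frugal unfaithful models of \citealp{vsimecek2006gaussian}) and make the strictness explicit as an emptiness/non-emptiness argument: $\mt{GRaSP}_0\mt{r}(\Prob) = \CFC(\Prob) = \varnothing$ versus $\G^* \in \mt{GRaSP}_1\mt{r}(\Prob)$. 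One step to tighten: the failure of faithfulness for $(\G^*,\Prob)$ does not by itself yield $\CFC(\Prob) = \varnothing$ (unfaithfulness can be undetectable); you need that $\G^* \in \uFr(\Prob)$, since then any $\G' \in \CFC(\Prob) \subseteq \uFr(\Prob)$ (\textbf{Theorem \ref{razors_hierarchy}}(a)) would lie in $\MEC(\G^*)$ and force $\G^*$ to be faithful after all. Your stipulation that the model is u-frugal and unfaithful supplies exactly this, so the gap is cosmetic. The trade-off is that the paper's route is shorter but outsources the example to another paper, while yours is self-contained modulo the oracle enumeration behind Table \ref{tab:unit_test}---which the paper itself treats as a computational proof---so both are acceptable.
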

\begin{proof}
Given the equivalence between TSP and unbounded GRaSP$_0$ shown in \textbf{Theorem \ref{TSP=GRaSP0}}, and that between ESP and unbounded GRaSP$_1$ in \textbf{Theorem \ref{ESP=GRaSP1}}, we can borrow the example from \citep{solus2021consistency} on how ESP requires a strictly weaker causal razor than TSP. We refer the readers to Figure 3 in the \href{https://academic.oup.com/biomet/article-abstract/108/4/795/6062392?redirectedFrom=fulltext#supplementary-data}{\textcolor{blue}{supplementary materials}} of \citep{solus2021consistency}.
\end{proof}

In the remainder of this section, we discuss two examples: how unbounded GRaSP$_2$ requires a strictly weaker causal razor than unbounded GRaSP$_1$, and how unbounded GRaSP$_2$ requires a strictly stronger causal razor than u-frugality. The joint distribution of each example below is a compositional graphoid. For the sake of simplicity, we only include CI relations that hold between two \textit{singleton} sets of variables such that all other CI relations entailed by each of the graphoid axioms discussed in Appendix \textcolor{blue}{\textbf{\ref{app:graphoid}}} are understood.

\begin{theorem}
\label{2_not_1}
There exists a graphoid $\Prob$ s.t. $\mt{GRaSP}_1\mt{r}(\Prob) \subset \mt{GRaSP}_2\mt{r}(\Prob)$.
\end{theorem}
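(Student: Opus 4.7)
The plan is to prove this existence claim by exhibiting an explicit compositional graphoid $\Prob$ (for concreteness, a regular Gaussian distribution over a small vertex set, so that closure under symmetry, decomposition, weak union, contraction, intersection, and composition is automatic) together with a true DAG $\G^* \in \uFr(\Prob) \setminus \CFC(\Prob)$ whose extra marginal CI relation is produced by a path cancellation among the linear coefficients. I would first specify $\I(\Prob)$ by listing its generators and note that compositionality comes for free from the regular Gaussian assumption, so I only need to verify Markovianity of $\G^*$ and the existence of one additional ``unfaithful'' independence between a pair of variables whose directed connection in $\G^*$ cancels along two or more paths.

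Next I would locate a competing DAG $\mc{H} \in \Pm(\Prob)$ with $\mc{H} \notin \MEC(\G^*)$ and $|\E(\mc{H})| > |\E(\G^*)|$, and identify an initial permutation $\pi$ with $\G_\pi = \mc{H}$. The core step is to argue that from $\pi$, every ct-sequence (covered-edge tucks) and every sequence of singular-edge tucks lands only in $\MEC(\mc{H})$ or at DAGs no sparser than $\mc{H}$. Using Theorem~\ref{ESP=GRaSP1}, this amounts to checking that $\mc{H}$ is a local minimum of the weakly decreasing walk on the DAG associahedron $\A_{\mb{v}}(\Prob)$, i.e.\ no singular edge of $\mc{H}$ can be tucked without either leaving $\MEC(\mc{H})$ for a non-sparser neighbour or returning a DAG Markov equivalent to $\mc{H}$. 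That gives $\G^* \notin \mt{GRaSP}_1\mt{r}(\Prob)$.

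For the positive side, I would exhibit an edge $(j \to k) \in \E(\mc{H})$ that is \emph{not} singular — there is a second unidirectional $j$-to-$k$ path through some intermediate ancestor of $k$ — and compute $\tau = \textit{tuck}(\pi, j, k)$ directly from Definition~\ref{tuck}. Using (RU) and the listed CI relations, I would verify $|\E(\G_\tau)| < |\E(\mc{H})|$ and in fact $\G_\tau \in \MEC(\G^*)$. Because GRaSP$_2$ can tuck this non-singular edge (while GRaSP$_1$ cannot), the greedy descent in Algorithm~\ref{alg:grasp} with $t = 2$ escapes $\mc{H}$ and converges into $\MEC(\G^*)$. Running the same check from every other initial permutation (most of which are already handled by covered tucks via Theorem~\ref{ct-theorem} and the hierarchical call in Algorithm~\ref{alg:grasp}) shows $\mt{GRaSP}_2(\Prob) \subseteq \MEC(\G^*)$, giving $\G^* \in \mt{GRaSP}_2\mt{r}(\Prob)$; combined with Theorem~\ref{ESP_not_TSP}-style reasoning for strictness, this yields the proper inclusion.

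The principal obstacle is choosing a model small enough to enumerate the relevant walks by hand yet structured enough that the target DAG $\mc{H}$ genuinely traps GRaSP$_1$: the non-singular edge that rescues GRaSP$_2$ must correspond to a \emph{pair} of directed paths in $\G^*$ whose associated coefficients cancel, and the intervening vertices must be arranged so that (i) $\mc{H}$ remains Markovian and P-minimal, and (ii) no singular edge in $\mc{H}$ admits a tuck to a strictly sparser Markovian DAG. I would expect a four- or five-variable example (referenced above via Figure~\ref{fig:GRaSP2_better}) to suffice, and I would validate the example by computationally checking, for each permutation in $\Pi(\mb{v})$, the induced DAG and the set $\E^1(\cdot) \setminus \E^0(\cdot)$ of available singular tucks — precisely the unit-test methodology already used in Section~\ref{u_fru_unfaithful}.
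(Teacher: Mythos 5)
Your proposal matches the paper's own proof in essentials: both exhibit a small u-frugal but unfaithful model (the paper uses the 4-variable DAG of Figure~\ref{fig:GRaSP2_better} with the path-cancelling independence $X_2 \CI_\Prob X_4\,|\,\varnothing$), show that unbounded GRaSP$_1$ is trapped at a denser 5-edge local optimum from the initial permutation $\pi = \la 2,4,1,3\ra$ so that $\mt{GRaSP}_1\mt{r}(\Prob) = \varnothing$, and verify computationally over all $|\mb{v}|!$ initial permutations that GRaSP$_2$ reaches $\MEC(\G^*)$, so that $\mt{GRaSP}_2\mt{r}(\Prob) \neq \varnothing$. Your additional framing via local minimality on the DAG associahedron and the rescuing non-singular tuck is consistent with, and a reasonable elaboration of, the paper's argument.
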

\begin{proof}
Given $\mb{V} = \{X_1,...,X_4\}$, consider the unfaithful model $(\G^*, \Prob)$ where the true DAG $\G^*$ is shown on the left in Figure \ref{fig:GRaSP2_better}, and $\I(\Prob) = \Phi \cup \Psi$ where $\Phi$ is the set of faithful CI relations and $\Psi$ is the set of unfaithful CI relations as listed below:
\begin{align*}
    \Phi = \{&\phi_1 = \la X_1, X_3\,|\,\{X_2\}\ra, \phi_2 = \la X_2, X_4\,|\,\{X_1, X_3\}\ra\};\\
    \Psi = \{&\psi_1 = \la X_2, X_4\,|\,\varnothing\ra\}.
\end{align*}
For every $\G \in \CMC(\Prob)$ where $\psi_1 \in \I(\G)$, we have $5 = |\E(\G)| > |\E(\G^*)| = 4$. Also, all 4-edge Markovian DAGs are in the same MEC. Hence, u-frugality is satisfied. Consider feeding the initial permutation $\pi = \la 2, 4, 1, 3\ra$ to unbounded GRaSP$_1$. It will return the same $\pi$ after the DFS procedure and the induced $\G_\pi$, as shown on the right in Figure \ref{fig:GRaSP2_better}, contains 5 edges. Therefore, unbounded GRaSP$_1$ fails to return the sparsest permutation under some initial permutation and $\mt{GRaSP}_1\mt{r}(\Prob) = \varnothing$. 

On the contrary, $|\mb{v}|! = 24$ initial permutations have been tested on unbounded GRaSP$_2$ and it returns $\hat{\tau}$ where $\G_{\hat{\tau}} \in \MEC(\G^*)$ for each initial permutation. Hence, $\mt{GRaSP}_2\mt{r}(\Prob) \neq \varnothing$.
\end{proof}

\begin{center}
\begin{figure}[H]
    \centering
    \subfloat{
\begin{tikzpicture}[scale=1.5, roundnode/.style={circle, draw=black!60, very thick, minimum size=5mm}]
\node (X1) at (0,1) {$1$};
\node [fill=lightgray, rounded corners](X2) at (0,0) {$2$};
\node (X3) at (-1,0) {$3$};
\node [fill=lightgray, rounded corners](X4) at (-1,1) {$4$};
\node (label) at (-0.5, -0.5) {$\G^*$};
\path [->,line width=0.5mm] (X1) edge (X2);
\path [->,line width=0.5mm] (X2) edge (X3);
\path [->,line width=0.5mm] (X3) edge (X4);
\path [->,line width=0.5mm] (X1) edge (X4);
\end{tikzpicture}}
\hspace{3cm}
\subfloat{
\begin{tikzpicture}[scale=1.5, roundnode/.style={circle, draw=black!60, very thick, minimum size=5mm}]
\node (X1) at (0,1) {$1$};
\node (X2) at (0,0) {$2$};
\node (X3) at (-1,0) {$3$};
\node (X4) at (-1,1) {$4$};
\node (label) at (-0.5, -0.5) {$\G_\pi$};
\path [->,line width=0.5mm] (X4) edge (X1);
\path [->,line width=0.5mm] (X4) edge (X3);
\path [->,line width=0.5mm] (X2) edge (X1);
\path [->,line width=0.5mm] (X2) edge (X3);
\path [->,line width=0.5mm] (X1) edge (X3);
\end{tikzpicture}}
    \caption{An unfaithful model satisfying u-frugality. The true DAG $\G^*$ is shown on the left where the two shaded vertices indicate the unfaithful marginal independence $X_2 \CI_\Prob X_4\,|\,\varnothing$. Unbounded GRaSP$_1$ returns its initial permutation $\pi = \la 2, 4, 1, 3\ra$. The induced DAG $\G_\pi$ is shown on the right with 5 edges. However, unbounded GRaSP$_2$ manages to return one of the sparsest permutations under every initial permutation.}
    \label{fig:GRaSP2_better}
\end{figure}
\end{center}


\begin{theorem}
\label{uFr_not_2}
There exists a graphoid $\Prob$ s.t. $\mt{GRaSP}_2\mt{r}(\Prob) \subset \uFr(\Prob)$.
\end{theorem}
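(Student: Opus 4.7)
The plan is to exhibit a concrete graphoid $\Prob$ lying in $\uFr(\Prob)$ but for which unbounded GRaSP$_2$ fails from at least one initial permutation, in direct analogy with the construction used for Theorem \ref{2_not_1}. The essential difference is that the counterexample must now defeat a strictly richer set of tuck operations, since GRaSP$_2$ is permitted to tuck any directed edge in $\E(\G_\pi)$, not only singular ones. I would therefore expect to need more structure than the four-vertex, one-cancellation model of Figure \ref{fig:GRaSP2_better}, and would start searching over small DAGs (say five to seven vertices) equipped with one or two unfaithful marginal or conditional independences.

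First, I would fix a candidate $\G^*$ together with a set $\I(\Prob) = \Phi \cup \Psi$, where $\Phi$ collects the d-separation facts of $\G^*$ and $\Psi$ the chosen path-cancellation CIs. The plan is then to verify by routine checks that $\I(\Prob)$ is closed under decomposition, weak union, contraction, intersection, and composition, so that $\Prob$ is realized by some compositional graphoid (paralleling the discussion just before Theorem \ref{2_not_1}). Next, I would verify u-frugality: any DAG in $\CMC(\Prob)$ with strictly fewer edges than $\G^*$ must violate some element of $\I(\Prob)$, and all Markovian DAGs achieving $|\E(\G^*)|$ edges are Markov-equivalent to $\G^*$. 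These are finite checks that can be carried out by enumerating candidate parent sets.

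Third, and this is the crux, I would display a specific initial permutation $\pi$ such that $\G_\pi$ is strictly denser than $\G^*$ yet is a local minimum of the scoring function $-|\E(\cdot)|$ under all tucks considered by GRaSP$_2$. Concretely, for every $(j \to k)\in \E(\G_\pi)$, $\textit{tuck}(\pi,j,k)$ must either (i) strictly increase the edge count, or (ii) preserve it while leading, after any depth of recursion along ct-sequences, only to induced DAGs with the same edge count as $\G_\pi$. By Lemma \ref{tuck_lemma}, covered tucks are weakly decreasing, so the threat is that some escape route uses a general tuck; the certificate is an exhaustive trace showing that every such tuck either raises the edge count or re-enters an already visited DAG in the DFS of Algorithm \ref{alg:dfs}. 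Finally, I would iterate \textbf{Algorithm \ref{alg:grasp}} at $t = 2$ starting from $\pi$ and confirm that the output $\hat{\tau}$ satisfies $\G_{\hat{\tau}} \notin \MEC(\G^*)$, so that $\G^* \notin \mt{GRaSP}_2\mt{r}(\Prob)$ while $\G^* \in \uFr(\Prob)$, giving the strict inclusion.

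The main obstacle is the third step: guaranteeing that no sequence of tucks of any edges, to any depth, escapes the basin around $\G_\pi$. Because GRaSP$_2$'s DFS recurses on score-preserving moves and commits only on a strict improvement, the verification amounts to checking all tuck trajectories, which grows quickly with $|\mb{v}|$. In practice I would delegate this enumeration to the same oracle-based test harness used to produce Table \ref{tab:unit_test}, and present the resulting model as the promised counterexample, in a figure mirroring Figure \ref{fig:GRaSP2_better}.
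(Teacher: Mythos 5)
Your strategy is precisely the one the paper follows: exhibit a concrete u-frugal model that violates faithfulness, together with an initial permutation from which unbounded GRaSP$_2$ terminates at a strictly denser local optimum, with the local-optimality check carried out by the oracle harness behind Table \ref{tab:unit_test}. The difficulty is that, as written, your argument never actually produces the witness. The theorem is a pure existence claim, so the witness \emph{is} the proof; everything from ``I would fix a candidate $\G^*$'' onward is a search plan whose successful termination is exactly what is in question. It is not a priori obvious that GRaSP$_2$ still admits such a trap---the entire point of the tier hierarchy is that each relaxation escapes more of them (cf.\ Theorem \ref{2_not_1}, where the analogous four-variable trap for GRaSP$_1$ is dissolved by GRaSP$_2$)---so without the concrete model the statement remains a conjecture.

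For the record, the paper's witness (Figure \ref{fig:uFruNotGRaSP2}) is a five-variable model: $\G^*$ has the seven edges $1\to 3$, $1\to 4$, $2\to 4$, $3\to 4$, $1\to 5$, $3\to 5$, $4\to 5$; the faithful CI relations are $\la X_1,X_2\,|\,\varnothing\ra$, $\la X_1,X_2\,|\,\{X_3\}\ra$, $\la X_2,X_3\,|\,\varnothing\ra$, $\la X_2,X_3\,|\,\{X_1\}\ra$, and $\la X_2,X_5\,|\,\{X_1,X_3,X_4\}\ra$; and the single unfaithful relation is the marginal independence $X_1\CI_\Prob X_5\,|\,\varnothing$ arising from path cancellation. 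Every Markovian DAG entailing this extra independence has more than seven edges, and all seven-edge Markovian DAGs lie in one MEC, so u-frugality holds and $\uFr(\Prob)\neq\varnothing$; yet from the initial permutation $\pi=\la 5,1,3,4,2\ra$ unbounded GRaSP$_2$ returns $\pi$ itself, whose induced DAG has eight edges, whence $\mt{GRaSP}_2\mt{r}(\Prob)=\varnothing\subset\uFr(\Prob)$. Your outline would be completed by supplying exactly this (or any comparable) model and its exhaustive tuck-trajectory check.
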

\begin{proof}
The example below is one of the uDAGs studied in Table \ref{tab:unit_test} in Section \textcolor{blue}{\textbf{\ref{u_fru_unfaithful}}} where GRaSP$_2$ fails to return one of the sparsest permutations under u-frugality. Given $\mb{V} = \{X_1,...,X_5\}$, consider the unfaithful model $(\G^*, \Prob)$ where the true DAG $\G^*$ is shown on the left in Figure \ref{fig:uFruNotGRaSP2}, and $\I(\Prob) = \Phi \cup \Psi$ where $\Phi$ is the set of faithful CI relations and $\Psi$ is the set of unfaithful CI relations as listed below:
\begin{align*}
    \Phi = \{&\phi_1 = \la X_1, X_2\,|\,\varnothing\ra, \phi_2 = \la X_1, X_2\,|\,\{X_3\}\ra,\\
    &\phi_3 = \la X_2, X_3\,|\,\varnothing\ra, \phi_4 = \la X_2, X_3\,|\,\{X_1\}\ra,\\
    &\phi_5= \la X_2, X_5\,|\,\{X_1, X_3, X_4\}\ra\};\\
    \Psi = \{&\psi_1 = \la X_1, X_5\,|\,\varnothing\ra\}.
\end{align*}
For every $\G \in \CMC(\Prob)$ where $\psi_1 \in \I(\G)$, we have $|\E(\G)| > |\E(\G^*)| = 7$. Also, all 7-edge Markovian DAGs are in the same MEC and there exists no sparser Markovian DAG. Hence, u-frugality is satisfied s.t. $\uFr(\Prob) \neq \varnothing$. 

Next, consider feeding the initial permutation $\pi = \la 5, 1, 3, 4, 2\ra$ to unbounded GRaSP$_2$. It will return the same $\pi$ after the DFS procedure and the induced $\G_\pi$, as shown on the right in Figure \ref{fig:uFruNotGRaSP2}, contains 8 edges. Therefore, unbounded GRaSP$_2$ fails to return one of the sparsest permutations under some initial permutation and $\mt{GRaSP}_2\mt{r}(\Prob) = \varnothing$. 
\end{proof}


\begin{center}
\begin{figure}[H]
    \centering
    \subfloat{
\begin{tikzpicture}[scale=1.2, roundnode/.style={circle, draw=black!60, very thick, minimum size=5mm}]
\node [fill=lightgray, rounded corners](X1) at (0.0,1.0) {$1$};
\node (X2) at (0.9510565162951535,0.30901699437494745) {$2$};
\node (X3) at (0.5877852522924732,-0.8090169943749473) {$3$};
\node (X4) at (-0.587785252292473,-0.8090169943749476) {$4$};
\node [fill=lightgray, rounded corners](X5) at (-0.9510565162951536,0.30901699437494723) {$5$};
\node (label) at (0, -1.2) {$\G^*$};
\path [->,line width=0.5mm] (X1) edge (X3);
\path [->,line width=0.5mm] (X1) edge (X4);
\path [->,line width=0.5mm] (X2) edge (X4);
\path [->,line width=0.5mm] (X3) edge (X4);
\path [->,line width=0.5mm] (X1) edge (X5);
\path [->,line width=0.5mm] (X3) edge (X5);
\path [->,line width=0.5mm] (X4) edge (X5);
\end{tikzpicture}}
\hspace{3cm}
\subfloat{
\begin{tikzpicture}[scale=1.2, roundnode/.style={circle, draw=black!60, very thick, minimum size=5mm}]
\node (X1) at (0.0,1.0) {$1$};
\node (X2) at (0.9510565162951535,0.30901699437494745) {$2$};
\node (X3) at (0.5877852522924732,-0.8090169943749473) {$3$};
\node (X4) at (-0.587785252292473,-0.8090169943749476) {$4$};
\node (X5) at (-0.9510565162951536,0.30901699437494723) {$5$};
\node (label) at (0, -1.2) {$\G_\pi$};
\path [->,line width=0.5mm] (X1) edge (X2);
\path [->,line width=0.5mm] (X1) edge (X3);
\path [->,line width=0.5mm] (X1) edge (X4);
\path [->,line width=0.5mm] (X3) edge (X2);
\path [->,line width=0.5mm] (X3) edge (X4);
\path [->,line width=0.5mm] (X4) edge (X2);
\path [->,line width=0.5mm] (X5) edge (X3);
\path [->,line width=0.5mm] (X5) edge (X4);
\end{tikzpicture}}
    \caption{An unfaithful model satisfying u-frugality. The true DAG $\G^*$ is shown on the left where the two shaded vertices indicate the unfaithful marginal independence $X_1 \CI_\Prob X_5\,|\,\varnothing$. Unbounded GRaSP$_2$ returns its initial permutation $\pi = \la 5, 1, 3, 4, 2\ra$. The induced DAG $\G_\pi$ is shown on the right with 8 edges. Hence, GRaSP$_2$ is not correct under u-frugality alone.}
    \label{fig:uFruNotGRaSP2}
\end{figure}
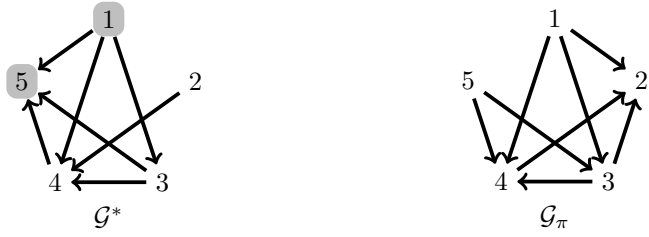
\end{center}


\textbf{Corollary \ref{coro_GRaSP_hierarchy}} \hspace{0.1cm} \textit{Given a graphoid $\Prob$, unbounded GRaSP$_2$ is correct under a strictly weaker causal razor than unbounded GRaSP$_1$, which is correct under a strictly weaker causal razor than unbounded GRaSP$_0$.}


\section{Grow-Shrink Algorithm and its properties}
\label{app:gs}
\begin{definition}
\label{BIC}
Given an observational dataset $\D$ with $n$ i.i.d. observations from a joint probability distribution $\Prob$ over $\mb{V}$ that belongs to a curved exponential family\footnote{See \citep{Kass:1437490} for an in-depth analysis of curved exponential families.}, for every $X \in \mb V$ and every $\mb M \subseteq \mb V \setminus X$,
\[
    \BIC_\D(X, \mb{M}) = \ell_{X \mid \mb{M}}(\hat{\theta}_\mt{mle} \mid \D) + c \, \frac{|\hat{\theta}_\mt{mle}|}{2} \log(n)
\]
where $\ell_{X \mid \mb M}$ is the conditional log likelihood function, $|\hat{\theta}_\mt{mle}|$ is the absolute value of the maximum likelihood estimate, and $c$ is a multiplier for the parameter penalty.
\end{definition}

BIC score is a \textit{decomposable} scoring function in the sense that the BIC score of any DAG $\G$ (over the same set of variables $\mb{V}$ as the observational dataset $\D$), denoted as $\BIC_\D(\G)$, satisfies the following:
\[
    \BIC_\D(\G) = \sum_{i \in \mb{v}} \BIC_\D(X_i, \mb{X}_{\Pa(i, \G)}).
\]

In addition, since we will be using BIC throughout this appendix, we assume that every joint probability distribution $\Prob$ belongs to a curved exponential family in this section.

\begin{algorithm}[H]
\DontPrintSemicolon
\caption{\textsc{Grow: }$\textit{grow}(\mc{D}, X, \mb{Z})$}
\label{alg:grow}
\KwIn{(a) $\D$: an observational dataset over $\mb{V}$; (b) $X \in \mb{V}$; (c) $\mb{Z} \subseteq \mb{V} \setminus \{X\}$}
\KwOut{$\mb{M}_{gr}\subseteq \mb{Z}$}
$s \ot \BIC_\D(X, \varnothing)$\;
$s' \ot s$\;
$\mb{M}_{gr} \ot \varnothing$\;
\Do{$s' > s$}{
    $s \ot s'$\;
    $s' \ot \max_{Y \in \mathbf{Z}\setminus \mathbf{M}_{gr}} \mathtt{BIC}_\mathcal{D}(X, \mathbf{M}_{gr} \cup \{Y\})$\;
    $Y' \ot \text{argmax}_{Y \in \mathbf{Z}\setminus \mathbf{M}_{gr}} \mathtt{BIC}_\mathcal{D}(X, \mathbf{M}_{gr} \cup \{Y\})$\;
    \If{$s' > s$}{
    $\mb{M}_{gr} \ot \mb{M}_{gr} \cup \{Y'\}$}
}
return $\mb{M}_{gr}$
\end{algorithm}

\begin{algorithm}[H]
\DontPrintSemicolon
\caption{\textsc{Shrink: }$\textit{shrink}(\mc{D}, X, \mb{Z})$}
\label{alg:shrink}
\KwIn{(a) $\D$: an observational dataset over $\mb{V}$; (b) $X \in \mb{V}$; (c) $\mb{Z} \subseteq \mb{V} \setminus \{X\}$}
\KwOut{(i) $\mb{M}_{sh} \subseteq \mb{Z}$; (ii) $s = \BIC_\D(X, \mb{M}_{sh})$}
$s \ot \BIC_\D(X, \mb{Z})$\;
$s' \ot s$\;
$\mb{M}_{sh} \ot \mb{Z}$\;
\Do{$s' > s$}{
    $s \ot s'$\;
    $s' \ot \max_{Y \in \mathbf{M}_{sh}} \mathtt{BIC}_\mathcal{D}(X, \mathbf{M}_{sh} \setminus \{Y\})$\;
    $Y' \ot \text{argmax}_{Y \in \mathbf{M}_{sh}} \mathtt{BIC}_\mathcal{D}(X, \mathbf{M}_{gs} \setminus \{Y\})$\;
    \If{$s' > s$}{
    $\mb{M}_{sh} \ot \mb{M}_{sh} \setminus \{Y'\}$}
}
return $\mb{M}_{sh}, s$
\end{algorithm}

\begin{theorem}
\label{BIC_local_DAG}
\citep{chickering2002optimal} Given an observational dataset $\D$ with $n$ i.i.d. observations from a joint probability distribution $\Prob$ over $\mb{V}$, consider $\G, \G' \in \DAG(\mb{V})$ where $\G'$ is resulted from adding the edge $j \to k$ in $\G$. In the large sample limit of $n$,
\begin{enumerate}
    \item[(a)] if $X_j \nCI_\Prob X_k\,|\,\mb{X}_{\Pa(k, \G)}$, then $\BIC_\D(\G') > \BIC_\D(\G)$;
    \item[(b)] if $X_j \CI_\Prob X_k\,|\,\mb{X}_{\Pa(k, \G)}$, then $\BIC_\D(\G') < \BIC_\D(\G)$.
\end{enumerate}
\end{theorem}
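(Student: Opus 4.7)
The plan is to exploit the decomposability of the BIC score, reducing the theorem to a local statement about a single conditional log-likelihood at vertex $k$. Since $\G'$ differs from $\G$ only in that $j$ has been added to $\Pa(k,\G)$, every summand in the decomposition of $\BIC_\D$ over vertices cancels except those corresponding to $X_k$. Writing $\mb{Z} = \mb{X}_{\Pa(k,\G)}$ and letting $d > 0$ denote the increment in the number of free parameters between the conditional models for $X_k$ given $\mb{Z}$ and $X_k$ given $\mb{Z} \cup \{X_j\}$, the difference $\BIC_\D(\G') - \BIC_\D(\G)$ reduces to the conditional log-likelihood gain at $X_k$ minus the penalty term $c \cdot (d/2) \log n$.

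Next, I would analyze the asymptotics of the log-likelihood ratio under the curved exponential family assumption inherited from \textbf{Definition \ref{BIC}}. By standard maximum-likelihood theory, $\tfrac{1}{n}$ times the conditional log-likelihood difference converges in probability to the conditional mutual information $I_\Prob(X_k; X_j \mid \mb{Z})$, a nonnegative quantity that vanishes if and only if $X_j \CI_\Prob X_k \mid \mb{Z}$. For part (a), the hypothesis $X_j \nCI_\Prob X_k \mid \mb{Z}$ gives $I_\Prob(X_k; X_j \mid \mb{Z}) > 0$, so the log-likelihood gain grows linearly in $n$ while the parameter penalty grows only logarithmically; hence the BIC difference diverges to $+\infty$ in the large sample limit. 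For part (b), $I_\Prob(X_k; X_j \mid \mb{Z}) = 0$, and Wilks' theorem implies that the log-likelihood ratio is $O_p(1)$, asymptotically a scaled $\chi^2$ with $d$ degrees of freedom; the $\log n$ penalty then dominates, so the BIC difference diverges to $-\infty$.

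The main obstacle is justifying these asymptotic expansions rigorously for general curved exponential families. One must appeal to the standard regularity conditions (see the reference to \citep{Kass:1437490} cited alongside \textbf{Definition \ref{BIC}}) under which the MLE is consistent and asymptotically normal, and under which the likelihood ratio admits a $\chi^2$ limit whose degrees of freedom exactly match the parameter count difference used in the BIC penalty. Once those technical preliminaries are in place, the dichotomy between linear and logarithmic growth rates immediately yields the two cases. Since the result is already in \citep{chickering2002optimal}, I would either cite that source directly or, if a self-contained argument is preferred, combine the decomposability observation above with a single careful invocation of Wilks' theorem to close both cases simultaneously.
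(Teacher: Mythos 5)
The paper offers no proof of this statement: it is cited wholesale from \citep{chickering2002optimal} as the ``local consistency'' of BIC, and the surrounding text simply builds on it, so there is no in-paper argument to compare against. Your sketch reconstructs the standard argument behind that citation and is correct in outline: decomposability cancels every vertex term except the one at $X_k$, and each case follows from comparing the growth rate of the local log-likelihood gain against the $\Theta(\log n)$ penalty --- linear in $n$ (converging, after normalization by $n$, to a positive quantity) under dependence, versus $O_p(1)$ by Wilks' theorem under independence. Two caveats if you were to write this out in full. First, the identification of the normalized log-likelihood gain with the conditional mutual information $I_{\Prob}(X_k;X_j\mid\mb{X}_{\Pa(k,\G)})$ presupposes that the true conditionals lie in the fitted family; in general the limit is a difference of Kullback--Leibler projections onto the two nested conditional models, and it is precisely the curved-exponential-family hypothesis of \textbf{Definition \ref{BIC}} (with the regularity conditions of \citep{Kass:1437490} and of Haughton's consistency theorem) that makes this difference strictly positive exactly when $X_j \nCI_\Prob X_k\mid \mb{X}_{\Pa(k,\G)}$ and that licenses Wilks' theorem with degrees of freedom equal to the parameter increment $d$ appearing in the penalty. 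Second, \textbf{Definition \ref{BIC}} as printed adds the penalty term rather than subtracting it (and glosses $|\hat{\theta}_\mt{mle}|$ as the absolute value of the MLE rather than the dimension of the parameter); your argument silently uses the intended subtracted penalty and parameter count, which is the convention under which the theorem is actually true.
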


The theorem above is known as the \textit{local consistency} of BIC score over DAGs. We can easily derive a lemma which concerns the BIC score of a variable (relative to a set of variables). 

\begin{lemma}
\label{BIC_local_var}
Given an observational dataset $\D$ with $n$ i.i.d. observations from a joint probability distribution $\Prob$ over $\mb{V}$, consider any distinct $j, k \in \mb{v}$ and $\mb{i} \subseteq \mb{v} \setminus \{j, k\}$. In the large sample limit of $n$,
\begin{enumerate}
    \item[(a)] if $X_j \nCI_\Prob X_k\,|\,\mb{X}_\mb{i}$, then $\BIC_\D(X_k, \mb{X}_\mb{i} \cup \{X_j\}) > \BIC_\D(X_k, \mb{X}_\mb{i})$;
    \item[(b)] if $X_j \CI_\Prob X_k\,|\,\mb{X}_\mb{i}$, then $\BIC_\D(X_k, \mb{X}_\mb{i} \cup \{X_j\}) < \BIC_\D(X_k, \mb{X}_\mb{i})$.
\end{enumerate}
\end{lemma}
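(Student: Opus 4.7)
The plan is to reduce this variable-level consistency statement to the DAG-level consistency statement in \textbf{Theorem \ref{BIC_local_DAG}} via the decomposability of the BIC score. The key idea is to build two DAGs $\G$ and $\G'$ that differ only in whether the edge $X_j \to X_k$ is present, so that their BIC difference will coincide (after all other local scores cancel) with the difference of the two parent-set scores for $X_k$ that appears in the statement.

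First, I would construct $\G$ as follows: let $\G$ be the DAG over $\mb{V}$ whose edge set consists \emph{only} of the directed edges $\{l \to k : l \in \mb{i}\}$. Since $j \notin \mb{i}$ and $k \notin \mb{i}$, these edges involve distinct vertices and $\G$ is trivially acyclic; moreover $\Pa(k, \G) = \mb{i}$ and $X_k$ is a sink, so $j \notin \De(k, \G)$. Let $\G'$ be the DAG obtained from $\G$ by adding the single edge $j \to k$; $\G'$ is acyclic because $X_k$ remains a sink, and $\Pa(k, \G') = \mb{i} \cup \{j\}$, while the parent sets of all other vertices are identical in $\G$ and $\G'$.

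Next, by the decomposability of BIC, every local score except the one at $X_k$ cancels in the difference $\BIC_\D(\G') - \BIC_\D(\G)$, yielding
\[
    \BIC_\D(\G') - \BIC_\D(\G) \;=\; \BIC_\D(X_k, \mb{X}_\mb{i} \cup \{X_j\}) \;-\; \BIC_\D(X_k, \mb{X}_\mb{i}).
\]
Now apply \textbf{Theorem \ref{BIC_local_DAG}} to the pair $(\G, \G')$: in the large-sample limit, the sign of $\BIC_\D(\G') - \BIC_\D(\G)$ is positive exactly when $X_j \nCI_\Prob X_k \,|\, \mb{X}_{\Pa(k, \G)} = \mb{X}_\mb{i}$, and negative exactly when $X_j \CI_\Prob X_k \,|\, \mb{X}_\mb{i}$. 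Combining this with the identity above immediately yields both (a) and (b).

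The only subtle point in this proof is ensuring that $\G$ and $\G'$ are both valid DAGs with the required parent sets at $X_k$, which is why I make $X_k$ a sink in $\G$; once that construction is in place, there is no real obstacle, as the argument is just decomposability plus a direct appeal to the cited DAG-level local consistency theorem.
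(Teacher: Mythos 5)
Your proposal is correct and is essentially the paper's own proof: the paper also constructs $\G$ with exactly the edges $\{l \to k : l \in \mb{i}\}$, forms $\G'$ by adding $j \to k$, and concludes via decomposability of BIC together with \textbf{Theorem \ref{BIC_local_DAG}}. You have merely spelled out the cancellation of the other local scores and the acyclicity check, which the paper leaves implicit.
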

\begin{proof}
Construct a DAG $\G \in \DAG(\mb{V})$ by drawing all and only directed edges from each vertex in $\mb{i}$ to $k$, and another DAG $\G' \in \DAG(\mb{V})$ by adding $j \to k$ in $\G$. Then the lemma immediately follows from \textbf{Theorem \ref{BIC_local_DAG}} and the decomposable feature of BIC scores.  
\end{proof}

\begin{lemma}
\label{grow_lemma}
Consider an observational dataset $\D$ with $n$ i.i.d. observations from a compositional graphoid $\Prob$ over $\mb{V}$. In the large sample limit of $n$, for any $X \in \mb{V}$ and any $\mb{Z} \subseteq \mb{V} \setminus \{X\}$, $\MB(X, \mb{Z}) \subseteq \mb{M}_{gr}$ where $\mb{M}_{gr} = \textit{grow}(\D, X, \mb{Z}) \subseteq \mb{Z}$.
\end{lemma}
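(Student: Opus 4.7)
The plan is to show that when the Grow algorithm halts, the accumulated set $\mb{M}_{gr}$ is already a Markov blanket of $X$ relative to $\mb{Z}$, and then invoke \textbf{Lemma \ref{MB_subset}} to conclude $\MB(X, \mb{Z}) \subseteq \mb{M}_{gr}$. This reduces the proof to a clean chain: termination condition of \textbf{Algorithm \ref{alg:grow}} $\Rightarrow$ pairwise conditional independences between $X$ and each $Y \in \mb{Z}\setminus\mb{M}_{gr}$ given $\mb{M}_{gr}$ $\Rightarrow$ joint conditional independence.

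First I would read off the termination condition. The do--while loop exits exactly when
\[
    \max_{Y \in \mb{Z}\setminus \mb{M}_{gr}} \BIC_\D(X, \mb{M}_{gr}\cup\{Y\}) \;\leq\; \BIC_\D(X, \mb{M}_{gr}),
\]
so for every $Y \in \mb{Z}\setminus\mb{M}_{gr}$ we have $\BIC_\D(X, \mb{M}_{gr}\cup\{Y\}) \leq \BIC_\D(X, \mb{M}_{gr})$. Applying the contrapositive of \textbf{Lemma \ref{BIC_local_var}}(a) in the large sample limit gives $X \CI_\Prob Y \,|\, \mb{M}_{gr}$ for every such $Y$.

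Next I would upgrade this family of pairwise independences to a single joint statement. This is where the hypothesis that $\Prob$ is a \emph{compositional} graphoid (rather than merely a graphoid) becomes essential: iterating the \textit{composition} axiom over the finite set $\mb{Z}\setminus\mb{M}_{gr}$ yields $X \CI_\Prob (\mb{Z}\setminus\mb{M}_{gr}) \,|\, \mb{M}_{gr}$. Combined with the trivial containment $\mb{M}_{gr} \subseteq \mb{Z}$ this verifies conditions (i) and (ii) of \textbf{Definition \ref{MB}}, i.e., $\mb{M}_{gr}$ is a Markov blanket of $X$ relative to $\mb{Z}$. \textbf{Lemma \ref{MB_subset}} then delivers $\MB(X, \mb{Z}) \subseteq \mb{M}_{gr}$.

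The only subtle point, and the step I would spend the most care on, is the application of \textbf{Lemma \ref{BIC_local_var}}(a): the lemma is stated pointwise in the large sample limit for fixed $j, k, \mb{i}$, whereas here the conditioning set $\mb{M}_{gr}$ is itself random and produced by the algorithm. Because the algorithm halts after finitely many steps and the variables range over the finite set $\mb{V}$, I would justify the step by taking a union bound over the finitely many possible $(X, Y, \mb{M})$ triples to guarantee simultaneous validity of the local consistency statement in the limit, so that the algorithm's output can be analyzed as if all relevant BIC comparisons are in their asymptotic regime.
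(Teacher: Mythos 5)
Your proposal is correct and follows essentially the same route as the paper's proof: read off the termination condition of the grow loop, apply the contrapositive of \textbf{Lemma \ref{BIC_local_var}} to get pairwise independences, use \textit{composition} to obtain the joint independence, and conclude via the Markov blanket definition and \textbf{Lemma \ref{MB_subset}}. Your closing remark about uniformity of the large-sample guarantee over the finitely many $(X, Y, \mb{M})$ configurations is a reasonable extra precaution that the paper leaves implicit, but it does not change the argument.
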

\begin{proof}
First, \textbf{Algorithm \ref{alg:grow}} requires that $\mb{M}_{gr} \subseteq \mb{Z}$, and $\BIC_\D(X, \mb{M}_{gr} \cup \{Y\}) < \BIC_D(X, \mb{M}_{gr})$ for every $Y \in \mb{Z} \setminus \mb{M}_{gr}$. By \textbf{Lemma \ref{BIC_local_var}}, we have $X \CI_\Prob Y\,|\,\mb{M}_{gr}$ for each $Y \in \mb{Z} \setminus \mb{M}_{gr}$. By composition, we have $X \CI_\Prob (\mb{Z} \setminus \mb{M}_{gr})\,|\,\mb{M}_{gr}$. Therefore, by \textbf{Definition \ref{MB}} and \textbf{Lemma \ref{MB_unique}}, we have $\MB(X, \mb{Z}) \subseteq \mb{M}_{gr}$.   
\end{proof}

\begin{lemma}
\label{shrink_lemma}
Consider an observational dataset $\D$ with $n$ i.i.d. observations from a graphoid $\Prob$ over $\mb{V}$. In the large sample limit of $n$, for any $X \in \mb{V}$ and any $\mb{Z} \subseteq \mb{V} \setminus \{X\}$, $\MB(X, \mb{Z}) = \mb{M}_{sh}$ where $\mb{M}_{sh} = \textit{shrink}(\D, X, \mb{Z}) \subseteq \mb{Z}$.
\end{lemma}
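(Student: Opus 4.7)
The plan is to show both inclusions $\MB(X, \mb{Z}) \subseteq \mb{M}_{sh}$ and $\mb{M}_{sh} \subseteq \MB(X, \mb{Z})$ in the large sample limit, translating the BIC score comparisons in \textbf{Algorithm \ref{alg:shrink}} into conditional independence statements via \textbf{Lemma \ref{BIC_local_var}}, and then pushing them through the graphoid axioms to reach the unique Markov boundary (whose uniqueness is guaranteed by \textbf{Lemma \ref{MB_unique}}).

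For $\MB(X, \mb{Z}) \subseteq \mb{M}_{sh}$, I would index the variables that \textit{shrink} deletes in the order they are removed as $Y_1, Y_2, \ldots, Y_k$, and write $\mb{Z}_i = \mb{Z} \setminus \{Y_1, \ldots, Y_i\}$, so $\mb{Z}_0 = \mb{Z}$ and $\mb{Z}_k = \mb{M}_{sh}$. Removing $Y_i$ requires $\BIC_\D(X, \mb{Z}_{i-1} \setminus \{Y_i\}) > \BIC_\D(X, \mb{Z}_{i-1})$; \textbf{Lemma \ref{BIC_local_var}}(a) then forces $X \CI_\Prob Y_i \,|\, \mb{Z}_i$ in the large sample limit (otherwise part (a) would reverse the inequality). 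Applying \textit{contraction} iteratively, starting from $X \CI_\Prob Y_k \,|\, \mb{M}_{sh}$ and $X \CI_\Prob Y_{k-1}\,|\, \mb{M}_{sh} \cup \{Y_k\}$ to obtain $X \CI_\Prob \{Y_{k-1}, Y_k\}\,|\,\mb{M}_{sh}$, and continuing downward, yields $X \CI_\Prob (\mb{Z}\setminus \mb{M}_{sh})\,|\,\mb{M}_{sh}$. Hence $\mb{M}_{sh}$ is a Markov blanket of $X$ relative to $\mb{Z}$, and \textbf{Lemma \ref{MB_subset}} gives $\MB(X,\mb{Z}) \subseteq \mb{M}_{sh}$.

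For the reverse inclusion, I would argue by contradiction: suppose some $Y \in \mb{M}_{sh}$ is not in $\MB(X, \mb{Z})$. Since $\MB(X,\mb{Z})$ is itself a Markov blanket (by \textbf{Definition \ref{MB}}), we have $X \CI_\Prob (\mb{Z} \setminus \MB(X,\mb{Z}))\,|\,\MB(X,\mb{Z})$. The set $\mb{Z}\setminus \MB(X,\mb{Z})$ decomposes as the disjoint union of $\{Y\}$, $\mb{A} = \mb{M}_{sh}\setminus (\MB(X,\mb{Z})\cup\{Y\})$, and $\mb{B} = \mb{Z}\setminus \mb{M}_{sh}$. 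Applying \textit{weak union} to move $\mb{A}$ into the conditioning set gives $X \CI_\Prob \{Y\}\cup\mb{B}\,|\,\MB(X,\mb{Z})\cup\mb{A}$, and \textit{decomposition} then yields $X \CI_\Prob Y \,|\, \mb{M}_{sh}\setminus\{Y\}$. By \textbf{Lemma \ref{BIC_local_var}}(b), this forces $\BIC_\D(X, \mb{M}_{sh}\setminus\{Y\}) > \BIC_\D(X, \mb{M}_{sh})$ in the large sample limit, contradicting the termination condition of \textit{shrink} (that no single deletion improves the score). Thus $\mb{M}_{sh} \subseteq \MB(X,\mb{Z})$ and the two inclusions together give the claim.

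The only subtle step is the contraction chain: each $X \CI_\Prob Y_i \,|\, \mb{Z}_i$ is conditioned on $\mb{M}_{sh}\cup \{Y_{i+1},\ldots, Y_k\}$, so I must combine them from $i=k$ downwards so that the already-contracted block $\{Y_{i+1}, \ldots, Y_k\}$ sits exactly in the conditioning context required by contraction. Beyond this bit of indexing, the argument is routine. Note also that compositionality is not needed here (unlike in \textbf{Lemma \ref{grow_lemma}})—contraction, decomposition, and weak union, all available in any graphoid, suffice.
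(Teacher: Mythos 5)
Your proof is correct and follows essentially the same route as the paper's: the contraction chain over the sequence of deleted variables establishes that $\mb{M}_{sh}$ is a Markov blanket (giving $\MB(X,\mb{Z}) \subseteq \mb{M}_{sh}$ via \textbf{Lemma \ref{MB_subset}}), and a weak-union/decomposition argument applied to $X \CI_\Prob \mb{Z}\setminus\MB(X,\mb{Z})\,|\,\MB(X,\mb{Z})$ yields the contradiction for the reverse inclusion. The only differences are cosmetic: you surface the final contradiction at the level of BIC scores rather than CI relations, and you prove the inclusion $\MB(X,\mb{Z}) \subseteq \mb{M}_{sh}$ first, which is arguably the cleaner order since the disjointness of $\mb{B} = \mb{Z}\setminus\mb{M}_{sh}$ from $\MB(X,\mb{Z})$ in the second half depends on it.
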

\begin{proof}
We show the lemma by $\mb{M}_{sh} \subseteq \MB(X, \mb{Z})$ and $\mb{M}_{sh} \supseteq \MB(X, \mb{Z})$.

[$\subseteq$] By reductio, suppose that there exists $Y \in \mb{M}_{sh} \subseteq \mb{Z}$ but $Y \notin \MB(X, \mb{Z})$. Let $\mb{S}$ be $\mb{M}_{sh}\setminus \{Y\}$. \textbf{Algorithm \ref{alg:shrink}} requires that $\BIC_\D(X, \mb{M}_{sh} \setminus \{Y\}) < \BIC_D(X, \mb{M}_{sh})$. In other words, we have $\BIC_D(X, \mb{S}) < \BIC_\D(X, \mb{S} \cup \{Y\})$. By \textbf{Lemma \ref{BIC_local_var}}, we have $X \nCI_\Prob Y\,|\,\mb{S}$. 

Let $\mb{W} = \mb{S} \setminus \MB(X, \mb{Z})$. From $Y \notin \MB(X, \mb{Z})$ and $Y \notin \mb{S}$, we have $\{Y\} \cup \mb{W} \subseteq \mb{Z} \setminus \MB(X, \mb{Z})$. Recall \textbf{Definition \ref{MB}} that $X \CI_\Prob \mb{Z} \setminus \MB(X, \mb{Z}) \,|\,\MB(X, \mb{Z})$. Thus, 
\begin{align}
    X \CI_\Prob&\,\{Y\} \cup \mb{W}\,|\,\MB(X, \mb{Z}) &\because X \CI_\Prob \mb{Z} \setminus \MB(X, \mb{Z}) \,|\,\MB(X, \mb{Z}),  \textit{ decomposition}\\
    X \CI_\Prob&\,Y \,|\,\MB(X, \mb{Z}) \cup \mb{W} &\because (12),  \textit{ weak union}\\
    X \CI_\Prob&\,Y\,|\,\mb{S} &\because (13), \mb{W} = \mb{S} \setminus \MB(X, \mb{Z})
\end{align}
Contradiction arises with $X \nCI_\Prob Y\,|\,\mb{S}$.

[$\supseteq$] Observe that \textbf{Algorithm \ref{alg:shrink}} removes one variable in $\mb{Z}$ one at a time repeatedly to form $\mb{M}_{sh}$. Thus, the shrink-procedure corresponds to a sequence of sets of variables $\la \mb{M}^0, ..., \mb{M}^k\ra$ and a sequence of variables $\mb{W} = \la W_1,..., W_k\ra = \mb{Z} \setminus \mb{M}_{sh}$ such that $\mb{M}^0 = \mb{Z}$, $\mb{M}^k = \mb{M}_{sh}$, and $\mb{M}^i = \mb{M}^{i-1} \setminus \{W_i\}$ (where $W_i \in \mb{M}^{i-1}$) for each $1 < i \leq k$.

Notice that $\mb{M}^{i-1} = \mb{M}^i \cup \{W_i\}$. \textbf{Algorithm \ref{alg:shrink}} requires that $\BIC_\D(X, \mb{M}^i) > \BIC_\D(X, \mb{M}^{i-1}) = \BIC_D(X, \mb{M}^i \cup \{W_i\})$. We then have 
\begin{align}
    X \CI_\Prob&\,W_1\,|\,\mb{M}^1 &\because \BIC_\D(X, \mb{M}^1) > \BIC_\D(X, \mb{M}^{0}), \textbf{Lemma \ref{BIC_local_var}}\\
    X \CI_\Prob&\,W_{2}\,|\,\mb{M}^{2} &\because \BIC_\D(X, \mb{M}^{2}) > \BIC_\D(X, \mb{M}^{1}), \textbf{Lemma \ref{BIC_local_var}}\\
    X \CI_\Prob&\,W_1\,|\,\mb{M}^{2} \cup \{W_{2}\} & \because (15), \mb{M}^1 = \mb{M}^{2} \cup \{W_{2}\}\\
    X \CI_\Prob&\,\{W_1, W_{2}\}\,|\,\mb{M}^{2} & \because (16), (17), \textit{contraction}\\
    &\vdots \nonumber\\
    X \CI_\Prob&\,\{W_1,...,W_k\}\,|\,\mb{M}^k & \because ..., \textit{contraction}\\
    X \CI_\Prob&\,\mb{W}\,|\,\mb{M}_{sh} &\because (19), \mb{W} = \la W_1,...,W_k\ra \text{ and } \mb{M}^k = \mb{M}_{sh}\\
    X \CI_\Prob&\,\mb{Z} \setminus \mb{M}_{sh}\,|\,\mb{M}_{sh} &\because (20), \mb{W} = \mb{Z} \setminus \mb{M}_{sh}
\end{align}
Hence, it follows from \textbf{Definition \ref{MB}} that $\mb{M}_{sh} \supseteq \MB(X, \mb{Z})$.
\end{proof}

\begin{theorem}
\label{gs_thm_app}
Consider an observational dataset $\D$ with $n$ i.i.d. observations from a compositional graphoid $\Prob$ over $\mb{V}$. In the large sample limit of $n$, for any $X \in \mb{V}$ and any $\mb{Z} \subseteq \mb{V} \setminus \{X\}$, $\MB(X, \mb{Z}) = \mb{M}_{gs}$ where $\mb{M}_{gs} = \textit{shrink}(\D, X, \textit{grow}(\D, X, \mb{Z}))$.
\end{theorem}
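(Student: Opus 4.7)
The plan is to reduce the theorem to a pair of inclusions on Markov boundaries by chaining Lemmas \ref{grow_lemma} and \ref{shrink_lemma} together with one instance of the composition axiom. Setting $\mb{M}_{gr}=\textit{grow}(\D,X,\mb{Z})$, Lemma \ref{shrink_lemma} applied to $\mb{M}_{gr}$ (in place of the input $\mb{Z}$) gives $\mb{M}_{gs}=\MB(X,\mb{M}_{gr})$ in the large sample limit, so the entire statement collapses to the identity
\[
\MB(X,\mb{M}_{gr})=\MB(X,\mb{Z}).
\]
I would establish this by proving both inclusions and in each case exhibiting a Markov blanket and invoking Lemma \ref{MB_subset}.

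For $\MB(X,\mb{M}_{gr})\subseteq\MB(X,\mb{Z})$: Lemma \ref{grow_lemma} gives $\MB(X,\mb{Z})\subseteq\mb{M}_{gr}\subseteq\mb{Z}$. Starting from $X\CI_\Prob (\mb{Z}\setminus\MB(X,\mb{Z}))\,|\,\MB(X,\mb{Z})$ and applying decomposition to the subset $\mb{M}_{gr}\setminus\MB(X,\mb{Z})\subseteq\mb{Z}\setminus\MB(X,\mb{Z})$ yields $X\CI_\Prob(\mb{M}_{gr}\setminus\MB(X,\mb{Z}))\,|\,\MB(X,\mb{Z})$, so $\MB(X,\mb{Z})$ is a Markov blanket of $X$ relative to $\mb{M}_{gr}$, and Lemma \ref{MB_subset} gives the inclusion.

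For the reverse inclusion $\MB(X,\mb{Z})\subseteq\MB(X,\mb{M}_{gr})$, which is the key step: the termination condition of \textbf{Algorithm \ref{alg:grow}} forces $\BIC_\D(X,\mb{M}_{gr}\cup\{Y\})<\BIC_\D(X,\mb{M}_{gr})$ for each $Y\in\mb{Z}\setminus\mb{M}_{gr}$, so by the local consistency of BIC (Lemma \ref{BIC_local_var}) we have $X\CI_\Prob Y\,|\,\mb{M}_{gr}$ for each such $Y$. This is exactly the place where the assumption that $\Prob$ is a \emph{compositional} graphoid is indispensable: applying composition to these pairwise statements yields
\[
X\CI_\Prob(\mb{Z}\setminus\mb{M}_{gr})\,|\,\mb{M}_{gr}.
\]
Writing $\mb{B}=\MB(X,\mb{M}_{gr})$, rewriting the conditioning set as $\mb{M}_{gr}=\mb{B}\cup(\mb{M}_{gr}\setminus\mb{B})$, and combining with $X\CI_\Prob(\mb{M}_{gr}\setminus\mb{B})\,|\,\mb{B}$ (from $\mb{B}$ being a Markov blanket of $X$ relative to $\mb{M}_{gr}$) via contraction gives
\[
X\CI_\Prob (\mb{M}_{gr}\setminus\mb{B})\cup(\mb{Z}\setminus\mb{M}_{gr})\,|\,\mb{B}=X\CI_\Prob(\mb{Z}\setminus\mb{B})\,|\,\mb{B},
\]
so $\mb{B}$ is a Markov blanket of $X$ relative to $\mb{Z}$, and Lemma \ref{MB_subset} yields $\MB(X,\mb{Z})\subseteq\mb{B}$.

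The main obstacle is the second inclusion, not because any single step is hard but because it is the only place where the two independence statements coming out of the grow and shrink subroutines live on different conditioning sets and need to be reconciled; the trick is to first lift the grow-termination information from single $Y$'s to the whole set $\mb{Z}\setminus\mb{M}_{gr}$ using composition, and only then use contraction to push the conditioning set down from $\mb{M}_{gr}$ to $\mb{B}$. Everything else is bookkeeping with the graphoid axioms and direct appeals to Lemmas \ref{MB_subset}, \ref{grow_lemma}, and \ref{shrink_lemma}.
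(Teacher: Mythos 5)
Your proof is correct and follows the same route the paper intends: chain Lemma \ref{grow_lemma} with Lemma \ref{shrink_lemma} applied to $\mb{M}_{gr}$, which the paper dismisses as ``immediate.'' The only difference is that you explicitly supply the missing step $\MB(X,\mb{M}_{gr})=\MB(X,\mb{Z})$ via \textbf{Lemma \ref{MB_subset}}, decomposition, and contraction --- a detail the paper leaves implicit (and which could be obtained slightly more directly by noting that $\MB(X,\mb{Z})\subseteq\mb{M}_{gr}$ already makes $\mb{M}_{gr}$ a Markov blanket of $X$ relative to $\mb{Z}$ by weak union, without re-deriving the grow-termination independences).
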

\begin{proof}
Immediate from \textbf{Lemma \ref{grow_lemma}} and \textbf{Lemma \ref{shrink_lemma}}.
\end{proof}

\begin{theorem}
Consider an observational dataset $\D$ with $n$ i.i.d. observations from a (compositional) graphoid $\Prob$ over $\mb{V} = \{X_1,...,X_m\}$, and any $\pi \in \Pi(\mb{v})$. Let $s_{i}$ and $\mb{M}_i$ be the score and the set of variables returned by \textit{shrink}$(\D, X_i, \mb{X}_{\Pre(i, \pi)})$ (or \textit{shrink}$(\D, X_i,  \textit{grow}(\D, X_i, \mb{X}_{\Pre(i, \pi)}))$ if $\Prob$ is a compositional graphoid) respectively. Denote $s_\pi$ as $\sum_{i \in \mb{v}} s_i$. In the large sample limit of $n$, $\BIC_\D(\G_\pi) = s_\pi$ where $\G_\pi$ is induced from $\pi$ by (VP).
\end{theorem}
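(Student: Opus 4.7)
The plan is to chain three facts: (i) by the definition of (VP) together with the uniqueness of the Markov boundary in a graphoid, the parent set of $X_i$ in $\G_\pi$ coincides with $\MB(X_i, \mb{X}_{\Pre(i,\pi)})$; (ii) the shrink (respectively, grow–shrink) subroutine, in the large sample limit, returns exactly this unique Markov boundary together with its BIC; and (iii) BIC is decomposable, so the DAG score is the sum of local scores. Putting these together yields $\BIC_\D(\G_\pi) = s_\pi$.

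First, I would unpack (VP). For each $i \in \mb{v}$, (VP) stipulates that $X_j \in \mb{M} \Leftrightarrow (j \to i) \in \E(\G_\pi)$ for some Markov boundary $\mb{M}$ of $X_i$ relative to $\mb{X}_{\Pre(i,\pi)}$. Because $\Prob$ is a graphoid, \textbf{Lemma \ref{MB_unique}} guarantees that this Markov boundary is unique, so $\mb{X}_{\Pa(i, \G_\pi)} = \MB(X_i, \mb{X}_{\Pre(i,\pi)})$.

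Second, I would appeal to \textbf{Lemma \ref{shrink_lemma}} (in the pure graphoid case) or \textbf{Theorem \ref{gs_thm_app}} (in the compositional graphoid case) to conclude that in the large sample limit, $\mb{M}_i = \MB(X_i, \mb{X}_{\Pre(i,\pi)})$. Combining with the first step, $\mb{M}_i = \mb{X}_{\Pa(i, \G_\pi)}$. A brief bookkeeping argument on \textbf{Algorithm \ref{alg:shrink}} then shows that the returned scalar $s_i$ is exactly $\BIC_\D(X_i, \mb{M}_i)$: the loop invariant ``after the assignment $s \leftarrow s'$ at the top of an iteration, $s = \BIC_\D(X_i, \mb{M}_{sh})$ for the current $\mb{M}_{sh}$'' holds initially (since $s' = \BIC_\D(X_i, \mb{Z})$ and $\mb{M}_{sh} = \mb{Z}$) and is preserved, because $\mb{M}_{sh}$ is updated to the argmax set exactly when $s'$ is the BIC of that updated set; the invariant therefore survives to the return statement.

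Finally, the decomposability of the BIC stated just after \textbf{Definition \ref{BIC}} gives
\[
\BIC_\D(\G_\pi) \;=\; \sum_{i \in \mb{v}} \BIC_\D(X_i, \mb{X}_{\Pa(i, \G_\pi)}) \;=\; \sum_{i \in \mb{v}} \BIC_\D(X_i, \mb{M}_i) \;=\; \sum_{i \in \mb{v}} s_i \;=\; s_\pi.
\]
The only mildly delicate step is checking the loop invariant in \textbf{Algorithm \ref{alg:shrink}}; once that is in place, the rest is an application of previously established results (\textbf{Lemma \ref{MB_unique}}, \textbf{Lemma \ref{shrink_lemma}}, \textbf{Theorem \ref{gs_thm_app}}, and decomposability).
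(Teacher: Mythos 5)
Your proposal is correct and follows essentially the same route as the paper, whose proof is a one-line appeal to the decomposability of BIC together with \textbf{Lemma \ref{shrink_lemma}} and \textbf{Theorem \ref{gs_thm_app}}; you merely spell out the intermediate identification $\mb{M}_i = \MB(X_i, \mb{X}_{\Pre(i,\pi)}) = \mb{X}_{\Pa(i,\G_\pi)}$ and the loop invariant guaranteeing $s_i = \BIC_\D(X_i, \mb{M}_i)$, which the paper leaves implicit in the output specification of \textbf{Algorithm \ref{alg:shrink}}.
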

\begin{proof}
Immediate from the decomposable feature of BIC scores, \textbf{\textbf{Lemma} \ref{shrink_lemma}} and \textbf{Theorem \ref{gs_thm_app}}. 
\end{proof}

Lastly, though a compositional graphoid is a sufficient condition for the correct identification of the unique Markov boundary using the grow-shrink algorithm, we are aware of an assumption weaker than compositional graphoid to validate such an identification. Nevertheless, this discussion will be beyond the scope of this paper and we will leave the formal proof to future work.




\section{Additional Examples}
\label{EmpiricalAppendix}

\subsection{Lu et al. Comparison}
\label{luetal-comparison}

Reported below are average statistics obtained by running GRaSP$_2$ on the published datasets used to generate Figure 6 in \citep{lu2021improving}\footnote{\url{https://github.com/ninalu/urlearning-cpp/tree/master/triplet_data}}. We cannot compare these results to Lu et al. precisely, since their statistics are given in figures and not exactly in tables, though judging from their figures it appears that GRaSP$_2$ is dominating for adjacency precision and recall, arrowhead recall, and most results for arrowhead precision. Timing results are not reported by Lu et al.; we include these to show that GRASP$_2$ returns quickly for all of these examples, where we know (personal communication) that some of the results for Triple A$^*$ take much longer. Adjacencies in these graphs are sampled with uniform probability, ``Edge-prob''.

\begin{table}[H]
    \centering
    \begin{tabular}{r|c|c|c|c|c|c}
    	Edge-prob & 0.03 & 0.04 & 0.05 & 0.06 & 0.07 & 0.08 \\
    	\hline
        Precision & 0.964 & 0.976 & 0.979 & 0.980 & 0.982 & 0.976 \\
        Recall & 0.985 & 0.982 & 0.986 & 0.986 & 0.985 & 0.985 \\
        F1 & 0.974 & 0.979 & 0.983 & 0.983 & 0.983 & 0.980
    \end{tabular}
    \caption{GRaSP$_2$ Adjacency Statistics}
    \label{tab:grasp_on_lu_adj}
\end{table}

\begin{table}[H]
    \centering
    \begin{tabular}{r|c|c|c|c|c|c}
    	Edge-prob & 0.03 & 0.04 & 0.05 & 0.06 & 0.07 & 0.08 \\
    	\hline
        Precision & 0.907 & 0.914 & 0.933 & 0.949 & 0.946 & 0.945 \\
        Recall & 0.897 & 0.916 & 0.933 & 0.952 & 0.952 & 0.955 \\
        F1 & 0.898 & 0.913 & 0.932 & 0.950 & 0.948 & 0.950
    \end{tabular}
    \caption{GRaSP$_2$ Arrowhead Statistics}
    \label{tab:grasp_on_lu_arr}
\end{table}

\begin{table}[H]
    \centering
    \begin{tabular}{r|c|c|c|c|c|c}
    	Edge-prob & 0.03 & 0.04 & 0.05 & 0.06 & 0.07 & 0.08 \\
    	\hline
        Seconds & 0.405 & 0.755 & 1.403 & 2.703 & 4.795 & 7.161
    \end{tabular}
    \caption{GRaSP$_2$ Timing Statistics}
    \label{tab:grasp_on_lu_time}
\end{table}


\subsection{Airfoil Example}
\label{airfoil-example}

Figure \ref{fig:airfoil} gives the results of running GRaSP$_2$, PC, and fGES on the Airfoil empirical example described in Section \textcolor{blue}{\textbf{\ref{sec:empirical}}}. GRaSP$_2$ gets the same uniquely frugal result as SP. To improve readability, we use the names of the variables (instead of numerals) to label the vertices. 


\begin{figure}[h!]
\begin{center}
\subfloat{
\begin{tikzpicture}[scale=1.0, roundnode/.style={circle, draw=black!60, very thick, minimum size=5mm}]
\node (l) at (3,-1) {(a) GRaSP$_2$ result};
\node (a) at (3,3) {\textit{Attack}};
\node (v) at (1.75,2) {\textit{Velocity}};
\node (c) at (4.25,2) {\textit{Chord}};
\node (p) at (1.75,1) {\textit{Pressure}};
\node (d) at (4.25,1) {\textit{Displacement}};
\node (f) at (3,0) {\textit{Frequency}};
\path [->,line width=0.5mm] (a) edge (p);
\path [->,line width=0.5mm] (a) edge (d);
\path [->,line width=0.5mm] (v) edge (a);
\path [->,line width=0.5mm] (v) edge (p);
\path [-,line width=0.5mm] (v) edge (f);
\path [->,line width=0.5mm] (c) edge (a);
\path [->,line width=0.5mm] (c) edge (d);
\path [->,line width=0.5mm] (c) edge (p);
\path [->,line width=0.5mm] (d) edge (p);
\path [->,line width=0.5mm] (f) edge (a);
\path [->,line width=0.5mm] (f) edge (p);
\end{tikzpicture}}
\hspace{1.5cm}
\subfloat{
\begin{tikzpicture}[scale=1.0, roundnode/.style={circle, draw=black!60, very thick, minimum size=5mm}]
\node (l) at (3,-1) {(b) fGES result};
\node (a) at (3,3) {\textit{Attack}};
\node (v) at (1.75,2) {\textit{Velocity}};
\node (c) at (4.25,2) {\textit{Chord}};
\node (p) at (1.75,1) {\textit{Pressure}};
\node (d) at (4.25,1) {\textit{Displacement}};
\node (f) at (3,0) {\textit{Frequency}};
\path [->,line width=0.5mm] (a) edge (p);
\path [->,line width=0.5mm] (a) edge (f);
\path [->,line width=0.5mm] (v) edge (a);
\path [->,line width=0.5mm] (v) edge (p);
\path [->,line width=0.5mm] (v) edge (f);
\path [->,line width=0.5mm] (c) edge (a);
\path [-,line width=0.5mm] (c) edge (d);
\path [->,line width=0.5mm] (c) edge (p);
\path [->,line width=0.5mm] (c) edge (f);
\path [->,line width=0.5mm] (d) edge (p);
\path [->,line width=0.5mm] (d) edge (a);
\path [->,line width=0.5mm] (f) edge (p);
\end{tikzpicture}}
\hspace{1.5cm}
\subfloat{
\begin{tikzpicture}[scale=1.0, roundnode/.style={circle, draw=black!60, very thick, minimum size=5mm}]
\node (l) at (3,-1) {(c) PC result};
\node (a) at (3,3) {\textit{Attack}};
\node (v) at (1.75,2) {\textit{Velocity}};
\node (c) at (4.25,2) {\textit{Chord}};
\node (p) at (1.75,1) {\textit{Pressure}};
\node (d) at (4.25,1) {\textit{Displacement}};
\node (f) at (3,0) {\textit{Frequency}};
\path [->,line width=0.5mm] (a) edge (c);
\path [->,line width=0.5mm] (a) edge (f);
\path [-,line width=0.5mm] (a) edge (d);
\path [->,line width=0.5mm] (v) edge (p);
\path [->,line width=0.5mm] (v) edge (f);
\path [->,line width=0.5mm] (p) edge (c);
\path [->,line width=0.5mm] (d) edge (p);
\path [->,line width=0.5mm] (d) edge (c);
\path [->,line width=0.5mm] (f) edge (p);
\end{tikzpicture}}
\end{center}
\caption{Results of algorithms on NASA airfoil experiment.}
\label{fig:airfoil}
\end{figure}

Note that both the GRaSP$_2$ and FGES results use the linear, Gaussian BIC score with a penalty multiplier of 2. For the GRaSP$_2$ result in (a), \textit{Attack} is not exogenous, which is counter-intuitive, since it is experimentally controlled. Allowing for latent variables could resolve this issues. However, we leave the development of such an algorithm to future work. On the other hand, the FGES result in (b) is notably not the same as the SP result and so is not frugal. Also, the orientation between \textit{Attack} and \textit{Displacement} is reversed.

The PC result in (c), which uses the zero partial correlation test with a significance level of 0.01, in fact has fewer edges than the frugal result and makes \textit{Chord}, another experimental variable, endogenous. Causally, PC is giving incorrect and incomplete information.


\section{Unit tests}
\label{app:unit_tests}
We consider path cancellations in DAGs between pairs of vertices, one of which is exogenous, connected by two or more unique treks. Furthermore, the path cancellations we consider elicit a marginal independence between the two vertices in question. Below, we enumerate all possible path cancellations of this type (up to vertex relabeling). Each graph illustrates a case where an unfaithful marginal independence is elicited between the two gray vertices due to path cancellation. A complete list of all unfaithful CI relations (symmetry assumed) where the independent sets are singletons is also provided for each graph.

\input{Figures/Figure_uDAGs5}








\end{document}